\tikzset{attack/.style={-latex}} %
\tikzset{symmetric attack/.style={latex-latex}} %
\tikzset{self-attack/.style={}} %
\tikzset{pattack/.style={->>, thick}}
\tikzset{symmetric pattack/.style={<<->>, thick}}
\tikzset{self-pattack/.style={thick}}
\tikzset{normal attack/.style={-latex, thick}}
\tikzset{symmetric normal attack/.style={latex-latex, thick}}
\tikzset{normal self-attack/.style={thick}}
\tikzset{reverse attack/.style={-latex, dotted, thick}}
\tikzset{symmetric reverse attack/.style={latex-latex, dotted, thick}}
\tikzset{reverse self-attack/.style={dotted, thick}}
\tikzset{grayed/.style={fill=gray, fill opacity=0.3}}
\theoremstyle{plain}
\newtheorem{theorem}{Theorem}
\newtheorem{lemma}[theorem]{Lemma}
\newtheorem{proposition}[theorem]{Proposition}
\newtheorem{corollary}[theorem]{Corollary}
\theoremstyle{definition}
\newtheorem{definition}{Definition}
\theoremstyle{definition}
\newtheorem{principle}{Principle}
\theoremstyle{definition}
\newtheorem{axiom}{Axiom}
\theoremstyle{definition}
\newtheorem{example}{Example}
\def\elitist{\texttt{Eli}}
\def\delitist{\texttt{DEli}}
\def\democratic{\texttt{Dem}}
\def\tleq{\ensuremath{\trianglelefteq}} 
\def\argeli{\ensuremath{\prec_{\elitist}}}
\def\argdeli{\ensuremath{\prec_{\delitist}}}
\def\argdemq{\ensuremath{\preccurlyeq_{\democratic}}}
\def\argndemq{\ensuremath{\not\preccurlyeq_{\democratic}}}
\def\argdem{\ensuremath{\prec_{\democratic}}}
\def\argndem{\ensuremath{\nprec_{\democratic}}}
\def\AF{\ensuremath{(\Args, \attacks)}} 
\def\Args{\ensuremath{\textit{Args}}}
\def\attacks{\ensuremath{\leadsto}}
\def\nattacks{\ensuremath{\not\leadsto}}
\def\DAF{\ensuremath{(\Args, \defeats)}} 
\def\defeats{\ensuremath{\hookrightarrow}}
\def\ndefeats{\ensuremath{\not\hookrightarrow}}
\def\PAF{\ensuremath{(\Args, \attacks, \preccurlyeq)}} 
\def\RPAF{\ensuremath{(\Args, \attacks, \preccurlyeq, \tleq)}} 
\def\Nattacks{\ensuremath{\attacks_N}} 
\def\NAF{\ensuremath{(\Args, \Nattacks)}} 
\def\contrary{\ensuremath{\bar{}\,\bar{}\,\bar{}\,}}
\def\abaf{\ensuremath{(\LL, \R, \A, \contrary)}} 
\def\abafp{\ensuremath{(\LL, \R, \A, \contrary, \leqslant)}} 
\def\abafe{\ensuremath{(\LL, \R, \A, \contrary, \emptyset)}} 
\def\paba{\ensuremath{(\LL, \R, \A, \contrary, \preccurlyeq)}}
\def\pattacks{\ensuremath{\attacks_<}}
\def\npattacks{\ensuremath{\nattacks_<}}
\def\cn{\textit{Cn}} 
\def\cl{\textit{Cl}} 
\def\Def{\textit{Def}} 
\def\ot{\ensuremath{\leftarrow}} 
\newcommand{\contr}[1]{\ensuremath\overline{#1}} 
\newcommand{\newcontr}[1]{\ensuremath{{#1}^c}} 
\def\asma{\ensuremath{\alpha}} 
\def\asmb{\ensuremath{\beta}} 
\def\asmc{\ensuremath{\gamma}} 
\def\asmd{\ensuremath{\delta}} 
\def\asme{\ensuremath{\varepsilon}} 
\def\asms{\ensuremath{s}} 
\def\asmx{\ensuremath{x}} 
\def\asmA{\ensuremath{A}} 
\def\asmB{\ensuremath{B}} 
\def\asmS{\ensuremath{S}} 
\def\asmE{\ensuremath{E}} 
\def\asmI{\ensuremath{I}} 
\newcommand{\ccontrary}{\ensuremath{\CC}}
\def\To{\ensuremath{\Rightarrow}} 
\newcommand{\ccontr}[1]{\ensuremath{\CC(#1)}}
\def\prem{\texttt{Prem}} 
\def\conc{\texttt{Conc}} 
\def\sub{\texttt{Sub}} 
\def\defrules{\texttt{DefRules}} 
\def\aspicf{\ensuremath{(\LL, \ccontrary, \R_s, \R_d, \leqslant_d,  n, \K_n, \K_p, \leqslant_p)}}
\def\aspicfo{\ensuremath{(\LL, \ccontrary, \R_s, \R_d, \leqslant_d, \K_p, \leqslant_p)}}
\def\argA{\ensuremath{\textsf{A}}}
\def\argB{\ensuremath{\textsf{B}}}
\def\argE{\ensuremath{\textsf{E}}}
\def\argX{\ensuremath{\textsf{X}}}
\newcommand*{\abap}{ABA$^+$}
\newcommand*{\aspicp}{ASPIC$^+$}
\newcommand*{\flatsemantics}{grounded, ideal, stable, preferred, complete}
\def\leave{\ensuremath{\textit{leave}}}
\def\stay{\ensuremath{\textit{stay}}}
\def\A{\ensuremath{\mathcal{A}}} 
\def\CC{\ensuremath{\mathcal{C}}} 
\def\E{\ensuremath{\mathcal{E}}}
\def\F{\ensuremath{\mathcal{F}}}
\def\K{\ensuremath{\mathcal{K}}} 
\def\LL{\ensuremath{\mathcal{L}}}
\def\R{\ensuremath{\mathcal{R}}}
\newcommand*{\consistency}{the Axiom of Consistency}
\newcommand*{\negation}{the Axiom of Negation}
\newcommand*{\contraposition}{the Axiom of Contraposition}
\newcommand*{\wcp}{the Axiom of Weak Contraposition}
\newcommand*{\maximal}{the Principle of Maximal Elements}
\newcommand*{\conflict}{the Principle of Conflict Preservation}
\newcommand*{\emptypref}{the Principle of Empty Preferences}
\newcommand*{\closure}{the Principle of Closure}
\newcommand*{\cconsistency}{the Principle of Classical Consistency}
\newcommand{\citet}[1]{\cite{#1}}
\title{\abap: Assumption-Based Argumentation with Preferences}
\author{Kristijonas \v Cyras\thanks{Corresponding author: \href{mailto:k.cyras13@imperial.ac.uk}{{\tt k.cyras13}@{\tt imperial.ac.uk}}}\and Francesca Toni}
\date{Imperial College London \\ \today} 
\begin{document}

\pagestyle{fancy}
\setlist{nolistsep}

\maketitle

\begin{abstract}
We present \abap, a new approach to handling preferences in a well known structured argumentation formalism, Assumption-Based Argumentation (ABA). 
In \abap, preference information given over assumptions is incorporated directly into the attack relation, thus resulting in attack reversal. 
\abap\ conservatively extends ABA and exhibits various desirable features regarding relationship among argumentation semantics as well as preference handling. 
We also introduce Weak Contraposition, a principle concerning reasoning with rules and preferences that relaxes the standard principle of contraposition, 
while guaranteeing additional desirable features for \abap.
\end{abstract}

\section{Introduction}
\label{sec:Introduction}

Argumentation (as overviewed in e.g.~\cite{Rahwan:Simari:2009}) is a branch of Knowledge Representation and Reasoning that, among other goals, aims to formalise reasoning with conflicting and uncertain information. 
In argumentation, knowledge is often (e.g.~in \cite{Dung:1995}) represented via \emph{arguments} and conflicts are captured via \emph{attacks} among arguments, 
and reasoning amounts to selecting sets of collectively acceptable arguments, called \emph{extensions}, where acceptability conditions are dependent on the \emph{semantics} chosen (see e.g.~\cite{Baroni:Giacomin:2009,Baroni:Caminada:Giacomin:2011} for overviews of argumentation semantics). 

It has been shown that argumentation is a perspicuous abstraction method capturing several existing reasoning paradigms, particularly non-monotonic reasoning and logic programming (see e.g.~\cite{Dung:1995,Bondarenko:Dung:Kowalski:Toni:1997,Garcia:Simari:2004}), 
as well as forms of decision making (see e.g.~\cite{Amgoud:Prade:2009}), to name a few. 
Consequently, argumentation can be seen as a significant approach to common-sense reasoning. 

Argumentation formalisms can be classified into two families, 
one that builds on Abstract Argumentation (AA) \cite{Dung:1995}, 
and another commonly referred to as structured argumentation (see \cite{Besnard:Garcia:Hunter:Modgil:Prakken:Simari:Toni:2014} for a recent overview). 
Whereas in AA arguments are atomic, structured argumentation formalisms usually specify the internal structure of arguments and/or attacks. 
\emph{Assumption-Based Argumentation} (ABA) \cite{Bondarenko:Dung:Kowalski:Toni:1997,Dung:Mancarella:Toni:2007,Dung:Kowalski:Toni:2006,Dung:Kowalski:Toni:2009,Toni:2013,Toni:2014,Craven:Toni:2016} 
is one particular structured argumentation formalism, 
where knowledge is represented through a deductive system comprising of a formal language and inference rules, 
uncertain information is represented via special sentences in the language, called \emph{assumptions}, 
and attacks are constructively defined among sets of assumptions whenever one set of assumptions deduces (via rules) the \emph{contrary} of some assumption in another set. 

In this paper we deliver a new formalism, called \abap, that extends ABA with \emph{preferences}, 
because accounting for preferences is an important aspect of common-sense reasoning (see e.g.~\cite{Delgrande:Schaub:Tompits:Wang:2004,Brewka:Nemiela:Truszczynski:2008,Kaci:2011,Pigozzi:Tsoukias:Viappiani:2015} for discussions). 
Indeed, preferences are everyday phenomena helping, for example, to qualify the uncertainty of information or to discriminate among conflicting alternatives. 
For an illustration, consider the following example. 

\begin{example}[Referendum]
\label{example:referendum}
At a party, Zed is having a discussion about the outcome of a possible referendum in the Netherlands on whether to remain in the EU. 
Two of his interlocutors, Ann and Bob, have diverging views on the outcome of the referendum.  
Ann claims that the Dutch would vote to leave, whereas Bob maintains that they would vote to stay. 
If this were all the information available, Zed would form two conflicting arguments, based on believing Ann and Bob, respectively. 
However, Zed knows that Ann likes big claims based on dubious assumptions, so he trusts Bob more than Ann. 
This preference information should lead Zed to accepting Bob's argument, rather than Ann's. 
\end{example} 

In ABA (details in section \ref{sec:Background}), Zed's knowledge (without preferences) can be modelled by letting 
$\asma$ and $\asmb$ be assumptions standing for the possibility to trust Ann and Bob, respectively, 
and introducing two rules, $\leave \ot \asma$ and $\stay \ot \asmb$, 
representing the statements of Zed's interlocutors; 
given that Ann and Bob contradict each other, we may set 
$\stay$ and $\leave$ to be the contraries of assumptions $\asma$ and $\asmb$, respectively. 
Thus, $\{ \asma \}$ and $\{ \asmb \}$ are mutually attacking sets of assumptions. 
Intuitively, the preference of $\asmb$ over $\asma$ should lead one to choose $\{ \asmb \}$ over $\{ \asma \}$ as a unique acceptable extension. 
This outcome is obtained in \abap\ by using preference information to \emph{reverse attacks} from an attacker that is less preferred than the attackee: 
due to the preference of $\asmb$ over $\asma$, 
the attack from $\{ \asma \}$ to $\{ \asmb \}$ is reversed into an attack from $\{ \asmb \}$ to $\{ \asma \}$, 
in effect sanctioning $\{ \asmb \}$ as a unique acceptable extension. 

There are many formalisms of argumentation with preferences,  e.g.~\cite{Amgoud:Vesic:2014,Amgoud:Cayrol:2002,Bench-Capon:2003,Kaci:Torre:2008,Modgil:2009,Modgil:Prakken:2010,Baroni:Cerutti:Giacomin:Guida:2011,Brewka:Ellmauthaler:Strass:Wallner:Woltran:2013,Besnard:Hunter:2014,Modgil:Prakken:2014,Garcia:Simari:2014}, 
including approaches extending ABA,  e.g.~\cite{Kowalski:Toni:1996,Fan:Toni:2013,Thang:Luong:2014,Wakaki:2014}. 
This multitude reflects the lack of consensus on how preferences should be accounted for in argumentation in particular, and common-sense reasoning in general (also see e.g.~\cite{Brewka:Nemiela:Truszczynski:2008,Kaci:2011,Domshlak:Hullermeier:Kaci:Prade:2011}). 
A way to discriminate and evaluate distinct formalisms is to investigate their formal properties.  
In this paper we show that \abap\ satisfies various desirable properties of argumentation with preferences. 
We also show that attack reversal differentiates \abap\ from the majority of approaches to argumentation with preferences which discard attacks due to preference information 
(see e.g.~\cite{Amgoud:Cayrol:2002,Bench-Capon:2003,Kaci:Torre:2008,Brewka:Ellmauthaler:Strass:Wallner:Woltran:2013,Besnard:Hunter:2014,Garcia:Simari:2014,Prakken:Sartor:1999,Caminada:Modgil:Oren:2014,Dung:2016}).

The idea of reversing attacks is shared with \emph{Preference-based Argumentation Frameworks} (PAFs) \cite{Amgoud:Vesic:2014}---an AA-based formalism accommodating  preferences. 
PAFs, as well as some structured argumentation formalisms (e.g.~\cite{Besnard:Hunter:2014}), 
assume preferences as given on the meta level, particularly over arguments. 
Instead, \abap\ assumes preferences as given on the object level, particularly over assumptions, 
similarly to the well known structured argumentation formalism \aspicp\ \cite{Modgil:Prakken:2014,Caminada:Amgoud:2007,Prakken:2010,Modgil:Prakken:2013} 
(which, however, accommodates preferences over rules too). 
Most existing approaches assume (e.g.~\cite{Amgoud:Vesic:2014,Besnard:Hunter:2014}) 
or perform (e.g.~\cite{Modgil:Prakken:2014,Wakaki:2014}) 
an aggregation of object-level preferences to give meta-level preferences over arguments (e.g.~\cite{Amgoud:Vesic:2014,Besnard:Hunter:2014,Modgil:Prakken:2014} or extensions (e.g.~\cite{Amgoud:Vesic:2014,Wakaki:2014}). 
Instead, \abap\ incorporates preferences given over assumptions directly into the definition of attack. 

With respect to ABA, existing approaches (notably \cite{Kowalski:Toni:1996,Fan:Toni:2013,Thang:Luong:2014,Wakaki:2014}) have so far accommodated preferences within the class of so-called \emph{flat} ABA frameworks \cite{Bondarenko:Dung:Kowalski:Toni:1997}, where assumptions cannot themselves be deduced from other assumptions. 
Non-flat ABA frameworks are however useful for knowledge representation, as exemplified next. 

\begin{example}[Referendum example extended]
\label{example:referendum non-flat}
Suppose that Dan joins the conversation of Zed, Ann and Bob, 
and says that one should always trust Bob. 
A natural way to model this in ABA is to add an assumption $\asmd$, standing for trust in Dan, as well as a rule $\asmb \ot \asmd$, representing Dan's statement about Bob. 
Such a representation would result into a non-flat ABA framework, because the assumption $\asmb$ is deducible from another assumption, namely $\asmd$. 
\end{example}

\abap\ handles preferences in both flat and non-flat frameworks, thus forgoing possible limitations to expressiveness. 
We show that, in general, \abap\ conservatively extends ABA and satisfies various desirable properties regarding 
relationship among semantics (see e.g.~\cite{Bondarenko:Dung:Kowalski:Toni:1997,Dung:Mancarella:Toni:2007}), 
rationality postulates (see e.g.~\cite{Caminada:Amgoud:2007}), 
and preference handling (see e.g.~\cite{Amgoud:Vesic:2014,Amgoud:Vesic:2009,Brewka:Truszczynski:Woltran:2010,Simko:2014}). 
We also show that flat \abap\ frameworks satisfy additional desirable properties, subject to the principle of \emph{Weak Contraposition} that we propose in this paper, 
and which is a relaxed version of the principle of \emph{contraposition} utilised notably in \aspicp\ to obtain desirable outcomes.

The paper is organised as follows. 
We provide background on ABA in section \ref{sec:Background} and present \abap\ in section \ref{sec:ABA+}. 
In section \ref{sec:Properties of ABA+} we study properties of semantics, preference handling and rationality in \abap. 
We introduce Weak Contraposition in section \ref{sec:WCP} and investigate its effects on \emph{flat} \abap\ frameworks in section \ref{sec:Properties of Flat ABA+}. 
We devote section \ref{sec:Comparison} for comparing \abap\ to several formalisms of argumentation with preferences. 
We conclude in section \ref{sec:Concluding Remarks}. 

The precursor to this research is the recently published short paper \cite{Cyras:Toni:2016-KR}, 
where we presented the main idea and basics of \abap. 
With respect to that work, in section \ref{sec:ABA+} we extend \abap\ to non-flat frameworks and define new semantics (ideal). 
Sections \ref{subsec:Relationship Among Semantics} and \ref{subsec:Preference Handling Principles} are new, whereas section \ref{subsec:Rationality Postulates} significantly expands its precursor subsection `Rationality Postulates' in \cite{Cyras:Toni:2016-KR} with new results (starting from Principle \ref{principle:Classical Consistency}). 
The rest of the paper, i.e.~sections \ref{sec:WCP}, \ref{sec:Properties of Flat ABA+} and \ref{sec:Comparison}, consists of new material.

\section{Background}
\label{sec:Background}

We base the following ABA background on \cite{Bondarenko:Dung:Kowalski:Toni:1997,Toni:2014}.

\label{definition:ABA framework}
An \emph{ABA framework} is a tuple $\abaf$, where:
\begin{itemize}
\item $(\LL, \R)$ is a deductive system with $\LL$ 
a language (a set of sentences) 
and $\R$ a set of rules of the form $\varphi_0 \leftarrow \varphi_1, \ldots, \varphi_m$ 
with $m \geqslant 0$ and $\varphi_i \in \LL$ for $i \in \{ 0, \ldots, m \}$; 
\begin{itemize}
\item $\varphi_0$ is referred to as the \emph{head} of the rule, and
\item $\varphi_1, \ldots, \varphi_m$ is referred to as the \emph{body} of the rule;
\item if $m = 0$, then the rule $\varphi_0 \leftarrow \varphi_1, \ldots, \varphi_m$ 
is said to have an \emph{empty body}, and is written as 
$\varphi_0 \leftarrow \top$, where $\top \not\in \LL$;
\end{itemize}
\item $\A \subseteq \LL$ is a non-empty set, whose elements are referred to as \emph{assumptions};
\item $\contrary : \A \to \LL$ is a total map: 
for $\alpha \in \A$, the $\LL$-sentence $\overline{\alpha}$ is referred to as the \emph{contrary} of $\alpha$.
\end{itemize}

In the remainder of this section, unless specified differently, 
we assume as given a fixed but otherwise arbitrary ABA framework \abaf. 

\label{definition:deduction}
A \emph{deduction for $\varphi \in \LL$ supported by $S \subseteq \LL$ and $R \subseteq \R$}, 
denoted by $S \vdash^R \varphi$, 
is a finite tree with 
\begin{itemize}
\item the root labelled by $\varphi$, 
\item leaves labelled by $\top$ or elements from $S$, 
\item the children of non-leaf nodes $\psi$ labelled by the elements of the body of some rule from $\R$ with head $\psi$, and $R$ being the set of all such rules. 
\end{itemize}

For $E \subseteq \LL$, 
the \emph{conclusions} $\cn(E)$ of $E$ is the set of sentences for which deductions supported by subsets of $E$ exist, i.e. 
\begin{align*}
\label{definition:conclusions}
\cn(E) = \{ \varphi \in \LL~:~\exists~S \vdash^R \varphi,~S \subseteq E,~R \subseteq \R \}.
\end{align*}

Semantics of ABA frameworks are defined in terms of sets of assumptions meeting desirable requirements. 
One such requirement is being \emph{closed} under deduction, defined as follows. 
\label{definition:closure}
For $E \subseteq \LL$, 
the \emph{closure} $\cl(E)$ of $E$ is the set of assumptions that can be deduced from $E$, i.e. 
\begin{align*}
\cl(E) = \{ \asma \in \A~:~\exists~S \vdash^R \asma,~S \subseteq E,~R \subseteq \R \} = \cn(E) \cap \A.
\end{align*}

A set $\asmA \subseteq \A$ of assumptions is \emph{closed} iff $\asmA = \cl(\asmA)$. 
We say that \abaf\ is \emph{flat} iff every $\asmA \subseteq \A$ is closed. 
We will see later that flat ABA frameworks exhibit additional properties to those of generic ABA frameworks.

The remaining desirable requirements met by sets of assumptions, 
as semantics for ABA frameworks, 
are given in terms of a notion of \emph{attack} between sets of assumptions, 
defined as follows. 
\label{definition:assumption-level attack}
A set $\asmA \subseteq \A$ of assumptions \emph{attacks} a set $\asmB \subseteq \A$ of assumptions, 
denoted $\asmA \attacks \asmB$,\footnote{We use the symbol $\attacks$ instead of the commonly used $\to$ to denote attacks in order to avoid confusion, 
when $\to$ (or $\ot$ in the case of ABA) is used to denote rules in structured argumentation, e.g.~\aspicp\ (see section \ref{sec:Comparison}).}
iff there is a deduction $\asmA' \vdash^{R} \contr{\asmb}$, 
for some $\asmb \in \asmB$, 
supported by some $\asmA' \subseteq \asmA$ and $R \subseteq \R$. 
If it is not the case that $\asmA$ attacks $\asmB$, then we may write $\asmA \nattacks \asmB$. 
(We will adopt an analogous convention for other attack relations throughout the paper.) 

To define ABA semantics, we use the following auxiliary notions.
\label{definition:auxiliary}
For $\asmE \subseteq \A$:
\begin{itemize}
\item $\asmE$ is \emph{conflict-free} iff $\asmE \nattacks \asmE$; 
\item $\asmE$ \emph{defends} $\asmA \subseteq \A$ iff for all closed $\asmB \subseteq \A$ with $\asmB \attacks \asmA$ it holds that $\asmE \attacks \asmB$.\footnote{\label{footnote:defence}Defence in ABA can be equivalently defined `pointwise', 
i.e.~$\asmE$ \emph{defends} $\asmA \subseteq \A$ iff for all $\asma \in \asmA$, for all closed $\asmB \subseteq \A$ with $\asmB \attacks \{ \asma \}$ it holds that $\asmE \attacks \asmB$.}
\end{itemize}

ABA semantics are as follows.
\label{definition:ABA semantics} 
A set $\asmE \subseteq \A$ of assumptions (also called an \emph{extension}) is: 
\begin{itemize}
\item \emph{admissible} iff $\asmE$ is closed, conflict-free and defends itself; 
\item \emph{preferred} iff $\asmE$ is $\subseteq$-maximally admissible; 
\item \emph{complete} iff $\asmE$ is admissible and contains every set of assumptions it defends; 
\item \emph{stable} iff $\asmE$ is closed and $\asmE \attacks \{ \asmb \}$ for every $\asmb \in \A \setminus \asmE$; 
\item \emph{well-founded} iff $\asmE$ is the intersection of all complete extensions; 
\item \emph{ideal} iff $\asmE$ is $\subseteq$-maximal among sets of assumptions that are
\begin{itemize}
\item admissible, and 
\item contained in all preferred extensions.
\end{itemize}
\end{itemize}

Note that ideal sets of assumptions were originally defined by \citet{Dung:Mancarella:Toni:2007} in the context of flat ABA frameworks only. 
The original definition naturally generalises to any, possibly non-flat, ABA frameworks as given above. 
Note also that, in the case of flat ABA frameworks, 
the term \emph{grounded} is conventionally used instead of \emph{well-founded} 
(e.g.~in \cite{Dung:Mancarella:Toni:2007}): 
we will adopt this convention too later in the paper.

We illustrate various ABA concepts with a formalisation of the Referendum example from the Introduction. 

\begin{example}[Example \ref{example:referendum} as a flat ABA framework]
\label{example:background}
The information given in Example \ref{example:referendum} can be represented as an ABA framework $\F_Z = \abaf$ with 
\begin{itemize}
\item language $\LL = \{ \asma, \asmb, \leave, \stay \}$, 
\item set of rules $\R = \{ \leave \ot \asma, ~~ \stay \ot \asmb \}$, 
\item set of assumptions $\A = \{ \asma, \asmb \}$, 
\item contraries given by: ~ $\contr{\asma} = \stay$, ~ $\contr{\asmb} = \leave$. 
\end{itemize}

Here $\asma$ and $\asmb$ stand for the possibility to trust Ann and Bob, respectively, 
and rules $\leave \ot \asma$ and $\stay \ot \asmb$ represent the statements of Zed's interlocutors. 
Note that $\F_Z$ is flat. 
In $\F_Z$, we find that $\{ \asma \}$ and $\{ \asmb \}$ attack each other, 
and both of them attack and are attacked by $\{ \asma, \asmb \}$, 
which also attacks itself. 
$\F_Z$ can be graphically represented as follows
(in illustrations of ABA frameworks, 
nodes hold sets of assumptions while directed edges indicate attacks): 

\begin{figure}[H]
\caption{Flat ABA framework $\F_Z$}
\begin{center}
\begin{tikzpicture}
\node at (0.5, 0) {$\emptyset$}; 
\draw (0.5, 0) ellipse (0.4 cm and 0.4 cm);
\node at (2, 0) {$\{ \asma \}$}; 
\draw (2, 0) ellipse (0.4 cm and 0.4 cm);
\node at (4, 0) {$\{ \asmb \}$}; 
\draw (4, 0) ellipse (0.4 cm and 0.4 cm);
\node at (6.6, 0) {$\{ \asma, \asmb \}$}; 
\draw (6.6, 0) ellipse (0.6 cm and 0.4 cm);

\draw[symmetric attack] (3.6, 0) to (2.4, 0); 
\draw[symmetric attack] (4.4, 0) to (6, 0); 
\draw[self-attack, out=15, in = 90] (7.2, 0.1) to (8, 0);
\draw[attack, out=270, in = 345] (8, 0) to (7.2, -0.1); 
\draw[symmetric attack, out=165, in = 15] (6.07, 0.2) to (2.33, 0.2); 
\end{tikzpicture}
\end{center}
\end{figure}
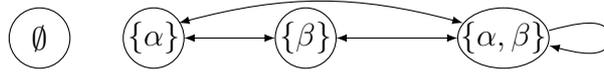

$\F_Z$ has two preferred and stable extensions $\{ \asma \}$ and $\{ \asmb \}$, 
with conclusions $\cn(\{ \asma \}) = \{ \asma, \leave \}$ and $\cn(\{ \asmb \}) = \{ \asmb, \stay \}$, respectively. 
$\F_Z$ has a unique grounded (well-founded) and ideal extension $\emptyset$, 
with conclusions $\cn(\emptyset) = \emptyset$. 
Furthermore, all of $\{ \asma \}$, $\{ \asmb \}$ and $\emptyset$ are admissible and complete extensions. 
\end{example}

Similarly, the extended Referendum example can be represented in ABA, but via a non-flat ABA framework, as follows. 

\begin{example}[Example \ref{example:referendum non-flat} as a non-flat ABA framework]
\label{example:non-flat}
The situation where Dan joins the conversation can be represented by a non-flat ABA framework $\F_D$, 
which is $\F_Z$ from Example \ref{example:background} extended with an additional rule $\asmb \ot \asmd$ 
and an additional assumption $\asmd$ (standing for trust in Dan). 
Overall, $\F_D$ has 
\begin{itemize}
\item $\LL = \{ \asma, \asmb, \asmd, \leave, \stay, \contr{\asmd} \}$,\footnote{Throughout the paper, 
we often slightly abuse notation and, unless specified otherwise, 
simply assume that the contrary $\contr{\asmd}$ of any assumption $\asmd$ 
is actually a symbol in the language $\LL$, 
wherefore we do not need to specify (some part of) the contrary mapping separately.}   
\item $\R = \{ \leave \ot \asma, ~~ \stay \ot \asmb, ~~ \asmb \ot \asmd \}$, 
\item $\A = \{ \asma, \asmb, \asmd \}$, 
\item $\contr{\asma} = \stay$, ~ $\contr{\asmb} = \leave$. 
\end{itemize}

In $\F_D$, if a set of assumptions contains $\asmd$, then it is closed only if it also contains $\asmb$. 
Thus, the only admissible extensions of $\F_D$ are 
$\emptyset$, $\{ \asma \}$, $\{ \asmb \}$ and $\{ \asmb, \asmd \}$. 
Also, $\{ \asmb, \asmd \}$ is a unique stable extension of $\F_D$, 
whereas both $\{ \asma \}$ and $\{ \asmb, \asmd \}$ are preferred. 
$\emptyset$ is thus a unique well-founded and ideal extension of $\F_D$, 
and only $\{ \asmb, \asmd \}$ is complete. 
\end{example}

We also recall, from \cite{Dung:1995}, 
that an AA framework is a pair $\AF$ with a set $\Args$ of \emph{arguments} and a binary \emph{attack} relation $\attacks$ on $\Args$. 
A set $\asmE \subseteq \Args$ of arguments attacks an argument $\argB \in \Args$, written $\asmE \attacks \argB$ in this paper, iff there is $\argA \in \asmE$ with $\argA \attacks \argB$; 
also, $\asmE$ attacks a set $\asmE' \subseteq \Args$ of arguments, written $\asmE \attacks \asmE'$ in this paper, iff there is $\argB \in \asmE'$ with $\asmE \attacks \argB$. 
Then a set $\asmE \subseteq \Args$ is \emph{conflict-free} iff $\asmE \nattacks \asmE$; 
also, $\asmE$ \emph{defends} $\argA \in \Args$ iff for all $\argB \attacks \argA$ we find $\asmE \attacks \argB$. 
Definition of semantics in terms (\flatsemantics, admissible) extensions of AA frameworks is the same as for (flat) ABA frameworks but with `assumptions' replaced by `arguments' (and the closure condition dropped).

\section{\abap}
\label{sec:ABA+}

We extend ABA frameworks $\abaf$ with a preference ordering $\leqslant$ on the set $\A$ of assumptions to obtain \emph{\abap\ frameworks $\abafp$}, as follows.

\begin{definition}
\label{definition:ABA+ framework}
An \textbf{\abap~framework} is a tuple $\abafp$, where $\abaf$ is an ABA framework and $\leqslant$ is a transitive binary relation on $\A$.
\end{definition}

We henceforth apply the notions of conclusions, closure and flatness to \abap\ frameworks, 
having in mind their underlying ABA frameworks. 
The strict counterpart $<$ of $\leqslant$ is defined as 
$\alpha < \beta$ iff $\alpha \leqslant \beta$ and $\beta \nleqslant \alpha$, 
for any $\alpha$ and $\beta$.\footnote{We assume this definition for the strict counterpart of any order used later in this paper, e.g.~for PAFs and \aspicp.} 
The Referendum example can be used to illustrate the concept of an \abap\ framework thus. 

\begin{example}[Example \ref{example:referendum} as a flat \abap\ framework]
\label{example:ABA+ framework}
Recall the ABA framework $\F_Z$ from Example \ref{example:background} representing Zed's knowledge. 
From Example \ref{example:referendum}, we know that Zed trusts Bob more than Ann. 
Hence, we may form a preference over Zed's assumptions, namely $\asma < \asmb$. 
So we obtain an \abap~framework $\F^+_Z = \abafp$ 
with 
\begin{itemize}
\item the underlying ABA framework $\F_Z$ (from Example \ref{example:background}) and
\item the preference ordering $\leqslant$ over $\A$ given by $\asma < \asmb$.
\end{itemize} 
\end{example}

Differently from some other structured argumentation approaches, 
such as for example \aspicp\ \cite{Modgil:Prakken:2014} or DeLP \cite{Garcia:Simari:2014}, 
we consider preferences on assumptions rather than (defeasible) rules. 
This is not, however, a conceptual difference, since assumptions are the only defeasible component in ABA and \abap. 
Also note that, similarly to the approach in \cite{Dung:2016}, 
$\leqslant$ may or may not be a preorder 
(a reflexive and transitive binary relation), 
i.e.~we do not require reflexivity. 

From now on, unless stated differently, 
we consider a fixed but otherwise arbitrary \abap\ framework $\abafp$, 
and implicitly assume $\abaf$ to be its underlying ABA framework. 

We next define the attack relation in \abap. 
The idea is that when the attacker has an assumption less preferred than the one attacked, then the attack is \emph{reversed}.

\begin{definition}
\label{definition:<-attack}
$\asmA \subseteq \A$ \textbf{$<$-attacks} $\asmB \subseteq \A$, 
denoted $\asmA \pattacks \asmB$, just in case:
\begin{itemize}
\item either there is a deduction $\asmA' \vdash^{R} \contr{\asmb}$, for some $\asmb \in \asmB$, 
supported by $\asmA' \subseteq \asmA$, and $\nexists \asma' \in \asmA'$ with $\asma' < \asmb$;
\item or there is a deduction $\asmB' \vdash^{R} \contr{\asma}$, for some $\asma \in \asmA$, 
supported by $\asmB' \subseteq \asmB$, and $\exists \asmb' \in \asmB'$ with $\asmb' < \asma$.
\end{itemize}

We call an $<$-attack formed as in the first bullet point above a \emph{normal attack},\footnote{Our notion of normal attack is different from the one proposed by \citet{Dung:2016}, which will be discussed in section \ref{subsec:Dung}.} 
and an $<$-attack formed as in the second bullet point above a \emph{reverse attack}. 
\end{definition}

Intuitively, 
$\asmA \pattacks \asmB$ as a normal attack if $\asmA \attacks \asmB$ and no assumption of $\asmA$ used in this attack is strictly less preferred than the attacked assumption (from $\asmB$). 
Otherwise, $\asmB \pattacks \asmA$ as a reverse attack if $\asmA \attacks \asmB$ and this attack depends on at least one assumption that is strictly less preferred than the attacked one. 

\begin{example}[Attacks in the flat \abap\ framework representing Example \ref{example:referendum}]
\label{example:reverse attack}
Recall the \abap\ framework $\F^+_Z$ from Example \ref{example:ABA+ framework}. 
In $\F^+_Z$, 
$\{ \asma \}$ `tries' to attack $\{ \asmb \}$, 
but is prevented by the preference $\asma < \asmb$. 
Instead, $\{ \asmb \}$ $<$-attacks $\{ \asma \}$ (and also $\{ \asma, \asmb \}$) via reverse attack. 
Likewise, $\{ \asma, \asmb \}$ $<$-attacks both itself and $\{ \asma \}$ via reverse attack. 
$\F^+_Z$ can be represented graphically as follows 
(here and later, \emph{double-tipped arrows} denote attacks that are \emph{both normal and reverse}):

\begin{figure}[H]
\caption{Flat \abap\ framework $\F^+_Z$}
\begin{center}
\begin{tikzpicture}
\node at (0.5, 0) {$\emptyset$}; 
\draw (0.5, 0) ellipse (0.4 cm and 0.4 cm);
\node at (2, 0) {$\{ \asma \}$}; 
\draw (2, 0) ellipse (0.4 cm and 0.4 cm);
\node at (4, 0) {$\{ \asmb \}$}; 
\draw (4, 0) ellipse (0.4 cm and 0.4 cm);
\node at (6.6, 0) {$\{ \asma, \asmb \}$}; 
\draw (6.6, 0) ellipse (0.6 cm and 0.4 cm);

\draw[pattack] (3.6, 0) to (2.4, 0); 
\draw[pattack] (4.4, 0) to (6, 0); 
\draw[self-pattack, out=15, in = 90] (7.2, 0.1) to (8, 0);
\draw[pattack, out=270, in = 345] (8, 0) to (7.2, -0.1); 
\draw[pattack, out=165, in = 15] (6.07, 0.2) to (2.33, 0.2); 
\end{tikzpicture}
\end{center}
\end{figure}

In contrast with the ABA framework $\F_Z$, 
where $\{ \asma \}$ defends against is attackers,  
in the \abap~framework $\F^+_Z$, 
$\{ \asma \}$ is $<$-attacked by, in particular, $\{ \asmb \}$, 
but does not $<$-attack it back. 
This concords with the intended meaning of the preference $\asma < \asmb$, 
that the conflict should be resolved in favour of $\asmb$. 
\end{example}

This concept of $<$-attack reflects the interplay between deductions, contraries and preferences, by representing inherent conflicts among sets of assumptions while accounting for preference information.  
Normal attacks follow the standard notion of attack in ABA, additionally preventing the attack to succeed when the attacker uses assumptions less preferred than the one attacked. 
Reverse attacks, meanwhile, resolve the conflict between two sets of assumptions by favouring the one containing an assumption whose contrary is deduced, 
over the one which uses less preferred assumptions to deduce that contrary. 

We next define the notions of conflict-freeness and defence with respect to $\pattacks$, 
and then introduce \abap\ semantics.

\begin{definition}
\label{definition:<-conflict-freeness}
For $\asmE \subseteq \A$: 
\begin{itemize}
\item $\asmE$ is \textbf{$<$-conflict-free} if $\asmE \npattacks \asmE$; 
\item $\asmE$ \textbf{$<$-defends} $\asmA \subseteq \A$ if for all closed $\asmB \subseteq \A$ with $\asmB \pattacks \asmA$ 
it holds that $\asmE \pattacks \asmB$.
\end{itemize} 
\end{definition}

\abap\ semantics can be defined by replacing, in the standard ABA semantics definition, 
the notions of attack and defence with those of $<$-attack and $<$-defence, as follows. 

\begin{definition}
\label{definition:ABA+ semantics} 
A set $\asmE \subseteq \A$ of assumptions (also called an \emph{extension}) is: 
\begin{itemize}
\item \textbf{$<$-admissible} if $\asmE$ closed, $<$-conflict-free and $<$-defends itself; 
\item \textbf{$<$-preferred} if $\asmE$ is $\subseteq$-maximally $<$-admissible; 
\item \textbf{$<$-complete} if $\asmE$ is $<$-admissible and contains every set of  assumptions it $<$-defends; 
\item \textbf{$<$-stable}, if $\asmE$ is closed, $<$-conflict-free and $\asmE \pattacks \{ \asmb \}$ for every $\asmb \in \A \setminus \asmE$; 
\item \textbf{$<$-well-founded} if $\asmE$ is the intersection of all $<$-complete extensions; 
\item \textbf{$<$-ideal} if $\asmE$ is $\subseteq$-maximal among sets of assumptions that are
\begin{itemize}
\item $<$-admissible, and 
\item contained in all preferred $<$-extensions.
\end{itemize}
\end{itemize}
\end{definition}

Note: similarly to the convention in ABA, in the case of flat \abap\ frameworks we may use the term \emph{$<$-grounded} instead of $<$-well-founded. 

The following examples illustrate \abap\ semantics. 

\begin{example}[Extensions of the flat \abap\ framework representing Example \ref{example:referendum}]
\label{example:flat ABA+}
The flat \abap\ framework $\F^+_Z$ from Example \ref{example:reverse attack} has $<$-admissible extensions $\emptyset$ and $\{ \asmb \}$.  
In particular, $\{ \asma \}$ is not $<$-admissible in $\F^+_Z$ because it does not $<$-defend against, for instance, $\{ \asmb \}$. 
Also, $\{ \asma, \asmb \}$ is not $<$-admissible, because not $<$-conflict-free. 
Hence, $\F^+_Z$ has a unique $<$-complete, $<$-preferred, $<$-stable, $<$-ideal and $<$-grounded extension $\{ \asmb \}$, 
with conclusions $\cn(\{ \asmb \}) = \{ \asmb, \stay \}$. 
\end{example}

\begin{example}[Example \ref{example:referendum non-flat} as a non-flat \abap\ framework and extensions thereof]
\label{example:non-flat ABA+}
Taking the non-flat ABA framework $\F_D = \abaf$ from Example \ref{example:non-flat}  
and equipping it with preference information $\asma < \asmb$ 
yields a non-flat \abap\ framework $\F^+_D = \abafp$ 
which has a unique $<$-complete, $<$-preferred, $<$-stable, $<$-ideal and $<$-well-founded extension $\{ \asmb, \asmd \}$ with conclusions $\cn(\{ \asmb, \asmd \}) = \{ \asmb, \asmd, \stay \}$. 
\end{example}

Let us consider a slightly more complex setting, 
by way of building on our Referendum example. 

\begin{example}[Example \ref{example:referendum} extended and represented as a flat \abap\ framework, and extensions thereof]
\label{example:Carl}
Carl also joins the conversation, and Zed quickly summarises the discussion by saying that 
if one were to believe Ann, it would be `leave', 
whereas if one were to trust Bob, then it would be `stay'. 
Carl does not quite get at first what the conversation is about, 
but swiftly completes the statements (as if they were enthymemes) 
with a missing premise `if there is a referendum'. 
Carl also distrusts Ann, and so believes in the possibility of a referendum and in what Bob is saying more than in what Ann says. 
Arguably, given Carl's preferences, he should also be convinced by Bob rather than Ann.

The information available to Carl can be represented in \abap\ similarly as for Zed in Example \ref{example:background}, 
but with the additional assumption $\asmc$ standing for the possibility of the referendum, 
and the rules $\leave \ot \asma, \asmc$ and $\stay \ot \asmb, \asmc$ instead. 
Overall, Carl's \abap\ framework is $\F^+_C = \abafp$ with
\begin{itemize} 
\item $\LL = \{ \asma, \asmb, \asmc, \leave, \stay, \contr{\asmc} \}$,
\item $\R = \{ \leave \ot \asma, \asmc, ~~ \stay \ot \asmb, \asmc \}$, 
\item $\A = \{ \asma, \asmb, \asmc \}$, 
\item $\contr{\asma} = \stay$, $\contr{\asmb} = \leave$, 
\item $\asma < \asmb, ~~ \asma < \asmc$. 
\end{itemize}

Note that $\F^+_C$ is flat. 
It can be represented graphically thus 
(for readability, we omit the assumption sets $\emptyset$ and $\{ \asma, \asmb, \asmc \}$, 
as well as $<$-attacks to and from them; 
also, here and later, \emph{normal attacks} are denoted by \emph{solid} arrows and \emph{reverse attacks} are denoted by \emph{dotted} arrows; 
as before, double-tipped arrows denote attacks that are both normal and reverse):

\begin{figure}[H]
\caption{Flat \abap\ framework $\F^+_C$}
\begin{center}
\begin{tikzpicture}
\node at (3, 0.2) {$\{ \asma \}$}; 
\draw (3, 0.2) ellipse (0.5 cm and 0.4 cm);
\node at (1, 1) {$\{ \asmb \}$}; 
\draw (1, 1) ellipse (0.5 cm and 0.4 cm);
\node at (3, 1.8) {$\{ \asmc \}$}; 
\draw (3, 1.8) ellipse (0.5 cm and 0.4 cm);
\node at (5, 0.2) {$\{ \asma, \asmb \}$}; 
\draw (5, 0.2) ellipse (0.7 cm and 0.4 cm); 
\node at (7.5, 1) {$\{ \asma, \asmc \}$}; 
\draw (7.5, 1) ellipse (0.7 cm and 0.4 cm); 
\node at (5, 1.8) {$\{ \asmb, \asmc \}$}; 
\draw (5, 1.8) ellipse (0.7 cm and 0.4 cm); 

\draw [reverse attack] (1.5, 1) to (6.8, 1); 
\draw [pattack] (5.7, 1.8) to (7, 1.3); 
\draw [normal attack] (4.3, 1.7) to (3.3, 0.5); 
\draw [normal attack] (5, 1.4) to (5, 0.6); 
\draw [reverse attack] (5.7, 0.2) to (7, 0.7); 
\end{tikzpicture}
\end{center}
\end{figure}

The set $\{ \asma, \asmc \}$ (deducing the contrary $\leave$ of $\asmb$) is prevented from $<$-attacking $\{ \asmb \}$, 
and instead $\{ \asmb \}$, as well as any set containing $\asmb$, 
$<$-attacks $\{ \asma, \asmc \}$ via reverse attack. 
Also, $\{ \asmb, \asmc \}$ $<$-attacks $\{ \asma \}$, as well as any set containing $\asma$, 
via normal attack, because no assumption in $\{ \asmb, \asmc \}$ is less preferred than $\asma$. 

The framework $\F^+_C$ has a unique $<$-complete, $<$-preferred, $<$-stable, $<$-ideal and $<$-grounded extension, namely $\{ \asmb, \asmc \}$, 
with conclusions $\cn(\{ \asmb, \asmc \}) = \{ \asmb, \asmc, \stay \}$, 
arguably a desirable outcome. 
\end{example}

Henceforth, we focus on $\sigma \in \{$well-founded/grounded, ideal, stable, preferred, complete$\}$ 
and use $\sigma$ and $<$-$\sigma$ to refer to ABA and \abap\ semantics, respectively. 

We conclude this section with the observations that attacks in ABA can be viewed as $<$-attacks in \abap\ when preferences are absent (Lemma \ref{lemma:attacks coincide}), 
and thus that \abap\ is a conservative extension of ABA (Theorem \ref{theorem:ABA+ extends ABA}). 

\begin{lemma}
\label{lemma:attacks coincide}
Let $\abafe$ be given. 
For any $\asmA, \asmB \subseteq \A$: $\asmA \attacks \asmB$ iff $\asmA \attacks_{\emptyset} \asmB$.
\end{lemma}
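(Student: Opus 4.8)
The plan is to unfold Definition \ref{definition:<-attack} in the degenerate case where the preference relation, and hence its strict counterpart, is empty. First I would observe that since $\leqslant = \emptyset$ here, by the definition of the strict counterpart $<$ of $\leqslant$ we also have $< = \emptyset$. Consequently, in the first (normal attack) bullet of Definition \ref{definition:<-attack} the side condition ``$\nexists\,\asma' \in \asmA'$ with $\asma' < \asmb$'' is vacuously satisfied, while in the second (reverse attack) bullet the side condition ``$\exists\,\asmb' \in \asmB'$ with $\asmb' < \asma$'' is vacuously violated. This single observation is what drives the whole argument.

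For the forward direction, suppose $\asmA \attacks \asmB$. By the ABA notion of attack (Definition \ref{definition:assumption-level attack}) there is a deduction $\asmA' \vdash^{R} \contr{\asmb}$ for some $\asmb \in \asmB$, supported by some $\asmA' \subseteq \asmA$ and $R \subseteq \R$. Since the normal-attack side condition in Definition \ref{definition:<-attack} holds vacuously, this very deduction witnesses $\asmA \attacks_\emptyset \asmB$ (as a normal attack). For the converse, suppose $\asmA \attacks_\emptyset \asmB$. By Definition \ref{definition:<-attack} this $<$-attack is either normal or reverse; it cannot be reverse, since a reverse attack requires some $\asmb' \in \asmB'$ with $\asmb' < \asma$, contradicting $< = \emptyset$. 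Hence it is a normal attack, i.e.~there is a deduction $\asmA' \vdash^{R} \contr{\asmb}$ for some $\asmb \in \asmB$ with $\asmA' \subseteq \asmA$, which is exactly what is required for $\asmA \attacks \asmB$.

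I do not expect any genuine obstacle here: the statement is a direct consequence of the fact that an empty preference ordering collapses the normal/reverse distinction onto the ordinary ABA attack, with literally the same deduction object serving as witness on both sides. The only point worth stating with care is that emptiness of $\leqslant$ propagates to its strict part $<$, so that the two qualifying conditions in Definition \ref{definition:<-attack} behave vacuously as described; everything else is routine bookkeeping.
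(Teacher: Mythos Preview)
Your proposal is correct and follows exactly the same approach as the paper, which simply records the result as ``immediate from the definitions of attack in ABA, and $<$-attack in \abap, when $\leqslant$ is empty.'' You have merely spelled out the obvious unfolding that the paper leaves implicit.
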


\begin{proof}
Immediate from the definitions of attack in ABA, 
and $<$-attack in \abap, when $\leqslant$ is empty.  
\end{proof}

\begin{theorem}
\label{theorem:ABA+ extends ABA}
Let $\abafe$ be given. 
$\asmE \subseteq \A$ is a $\sigma$-extension of $\abaf$ iff 
$\asmE$ is an $\emptyset$-$\sigma$ extension of $\abafe$.
\end{theorem}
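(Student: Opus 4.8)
The plan is to prove both directions simultaneously by unfolding the relevant definitions, the only non-routine ingredient being Lemma~\ref{lemma:attacks coincide}. First I would record two preliminary facts about the ABA framework $\abaf$ underlying $\abafe$. First, the closure operator $\cl$, and hence the notion of a set being \emph{closed}, is defined purely in terms of deductions and rules and does not mention preferences; so $\asmA \subseteq \A$ is closed in $\abaf$ iff it is closed in $\abafe$. Second, by Lemma~\ref{lemma:attacks coincide}, for all $\asmA, \asmB \subseteq \A$ we have $\asmA \attacks \asmB$ iff $\asmA \attacks_{\emptyset} \asmB$. Putting these together, a set $\asmE \subseteq \A$ is $\emptyset$-conflict-free iff it is conflict-free, and $\asmE$ $\emptyset$-defends a set $\asmA$ iff $\asmE$ defends $\asmA$ --- in the latter equivalence one uses both facts, since $\emptyset$-defence quantifies over \emph{closed} sets $\asmB$ with $\asmB \attacks_{\emptyset} \asmA$, which by the two facts are exactly the closed sets $\asmB$ with $\asmB \attacks \asmA$. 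Hence all the auxiliary notions underpinning the two families of semantics agree.

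Next I would run through $\sigma \in \{$well-founded/grounded, ideal, stable, preferred, complete$\}$, with \emph{admissible} as an auxiliary stepping stone, checking that the definition of $\emptyset$-$\sigma$ extension reduces clause-by-clause to that of $\sigma$ extension. For \emph{admissible} this is immediate from the preliminaries: closed $+$ $\emptyset$-conflict-free $+$ $\emptyset$-defends itself is literally closed $+$ conflict-free $+$ defends itself. \emph{Preferred} is then $\subseteq$-maximality over the (now common) collection of admissible sets; \emph{complete} is admissibility plus containing every ($\emptyset$-)defended set; \emph{well-founded}/\emph{grounded} is the intersection of all ($\emptyset$-)complete extensions; and \emph{ideal} is $\subseteq$-maximality among ($\emptyset$-)admissible sets contained in all ($\emptyset$-)preferred extensions. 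Since at each stage the ingredients coincide with their preference-free counterparts, membership of a given $\asmE$ in the $\emptyset$-$\sigma$ extensions is equivalent to its membership in the $\sigma$ extensions.

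The only case needing a word of care is \emph{stable}: the $\emptyset$-stable definition carries an extra $\emptyset$-conflict-freeness clause that the ABA stable definition (as written) does not, so ``$\emptyset$-stable $\Rightarrow$ stable'' is trivial, while ``stable $\Rightarrow$ $\emptyset$-stable'' needs that a stable extension of $\abaf$ is conflict-free; I would invoke this as a standard property of ABA stable extensions (or read conflict-freeness as implicit in the definition). This is really the whole proof: there is no substantial obstacle, since the chain of notions attack, then conflict-freeness and defence, then admissibility, then the remaining semantics, is built entirely out of the attack relation and closure, both insensitive to an empty preference ordering. The one genuine subtlety is to remember that \emph{defence} is stated via \emph{closed} attackers, so the closure fact --- and not merely the coincidence of attack relations --- is needed at that step.
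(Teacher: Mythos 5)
Your proof is correct and takes essentially the same route as the paper, whose entire argument is ``immediate from the definitions of ABA and \abap\ semantics and Lemma~\ref{lemma:attacks coincide}''; you have simply unfolded that one-liner, correctly identifying the coincidence of attacks (Lemma~\ref{lemma:attacks coincide}) and of closure as the two facts that make every clause of every semantics match up. Your aside on the stable case is a fair observation about a definitional asymmetry in the paper (the ABA stable definition omits an explicit conflict-freeness clause that the $<$-stable definition includes), and your resolution --- reading conflict-freeness as implicit, as in the standard ABA literature --- is the intended one.
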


\begin{proof}
Immediate from definitions of ABA and \abap\ semantics, and Lemma~\ref{lemma:attacks coincide}.
\end{proof}

Having provided and illustrated the basics of \abap, 
we move on to studying \abap\ in depth. 
In general, argumentation formalisms can be measured against certain principles, 
such as those regarding relationship among semantics (see e.g.~\cite{Dung:1995,Bondarenko:Dung:Kowalski:Toni:1997,Baroni:Giacomin:2007}), 
preference handling (see e.g.~\cite{Kaci:2011,Amgoud:Vesic:2009,Brewka:Truszczynski:Woltran:2010,Amgoud:Vesic:2010}),  
rationality (see e.g.~\cite{Caminada:Amgoud:2007,Modgil:Prakken:2013}) 
and other features of argumentation frameworks (see e.g.~\cite{Dung:2016,Baroni:Giacomin:Liao:2015-IJCAI,Dung:2016-COMMA}). 
We investigate what principles \abap\ adheres to next. 
In particular, we study generic \abap\ frameworks in section \ref{sec:Properties of ABA+}, 
and in section \ref{sec:Properties of Flat ABA+} we analyse flat \abap\ frameworks in the context of Weak Contraposition, which we introduce in section \ref{sec:WCP}.

\section{Properties of Generic \abap\ Frameworks}
\label{sec:Properties of ABA+}

In this section we give some basic properties of \abap\ frameworks 
and then establish relationships among \abap\ semantics, 
following the relationships established for ABA. 
In addition, we analyse some preference handling properties directly applicable to \abap. 
Still further, we investigate the rationality principles \cite{Caminada:Amgoud:2007,Modgil:Prakken:2013} in \abap. 
Results in this section apply to generic, i.e.~both flat and non-flat, \abap\ frameworks.

\subsection{Basic Properties}
\label{subsec:Basic Properties}

We begin with several basic properties that \abap\ exhibits. 
First, the attack relation in \abap\ is monotonic with respect to set inclusion, like in ABA, as indicated next.

\begin{lemma}
\label{lemma:attacks on supersets}
Let $\asmA' \subseteq \asmA \subseteq \A$ and $\asmB' \subseteq \asmB \subseteq \A$ be given. 
If $\asmA' \pattacks \asmB'$, then $\asmA \pattacks \asmB$.
\end{lemma}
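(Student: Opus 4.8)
The plan is to proceed by a direct case analysis on which of the two clauses of Definition~\ref{definition:<-attack} witnesses $\asmA' \pattacks \asmB'$, and in each case to show that \emph{the very same} deduction, with \emph{the same} distinguished assumptions, already witnesses $\asmA \pattacks \asmB$. The key observation making this work is that the preference side-conditions in both clauses only quantify over the support set of the exhibited deduction and refer only to the particular attacked/attacking assumption singled out; since these objects are not enlarged when we pass from $\asmA', \asmB'$ to $\asmA, \asmB$, the side-conditions carry over verbatim.

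Concretely, first I would assume $\asmA' \pattacks \asmB'$ and split. In the normal-attack case there is a deduction $\asmA'' \vdash^R \contr{\asmb}$ for some $\asmb \in \asmB'$, supported by $\asmA'' \subseteq \asmA'$, with no $\asma'' \in \asmA''$ satisfying $\asma'' < \asmb$. Using $\asmA'' \subseteq \asmA' \subseteq \asmA$ and $\asmb \in \asmB' \subseteq \asmB$, this same deduction and the same $\asmb$ satisfy the first bullet of Definition~\ref{definition:<-attack} with respect to $\asmA$ and $\asmB$, so $\asmA \pattacks \asmB$. In the reverse-attack case there is a deduction $\asmB'' \vdash^R \contr{\asma}$ for some $\asma \in \asmA'$, supported by $\asmB'' \subseteq \asmB'$, with some $\asmb'' \in \asmB''$ satisfying $\asmb'' < \asma$; now $\asmB'' \subseteq \asmB' \subseteq \asmB$ and $\asma \in \asmA' \subseteq \asmA$ (and the same $\asmb''$ still lies in $\asmB''$), so the second bullet of Definition~\ref{definition:<-attack} is met for $\asmA$ and $\asmB$, giving $\asmA \pattacks \asmB$ again.

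There is essentially no hard part here: the statement is the \abap\ analogue of the well-known set-inclusion monotonicity of ABA attacks, and the only thing one must be slightly careful about is to \emph{not} re-derive the preference conditions for fresh deductions — it suffices to reuse the witnessing deduction unchanged, so the conditions "$\nexists \asma' \in \asmA'$ with $\asma' < \asmb$'' and "$\exists \asmb' \in \asmB'$ with $\asmb' < \asma$'' are preserved trivially. The only mild subtlety worth a sentence is that superset-monotonicity could in principle fail for a notion of attack whose negative side-condition ranged over the \emph{whole} attacking set rather than over the support of a particular deduction; but Definition~\ref{definition:<-attack} is (deliberately) phrased existentially over deductions, which is exactly what makes the lemma immediate.
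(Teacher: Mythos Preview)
Your proof is correct and is essentially the paper's approach: the paper simply states ``Immediate from the definition of $\pattacks$,'' and your case analysis is exactly the unpacking of that immediacy. Your additional remark about why the negative side-condition does not obstruct monotonicity (because it quantifies over the fixed support $\asmA''$ rather than the enlarged set $\asmA$) is a nice clarification beyond what the paper spells out.
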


\begin{proof}
Immediate from the definition of $\pattacks$. 
\end{proof}

Attacks are preserved across ABA and \abap\ in the following sense:

\begin{lemma}
\label{lemma:attacks}
For any $\asmA, \asmB \subseteq \A$:
\begin{itemize}
\item if $\asmA \attacks \asmB$, then either $\asmA \pattacks \asmB$ or $\asmB \pattacks \asmA$ (or both);
\item if $\asmA \pattacks \asmB$, then either $\asmA \attacks \asmB$ or $\asmB \attacks \asmA$ (or both).
\end{itemize}
\end{lemma}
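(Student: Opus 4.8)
The plan is to prove both implications directly, in each case fixing a single witnessing deduction and then performing a case split on whether some assumption in its support is strictly less preferred than the attacked assumption.

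For the first bullet, suppose $\asmA \attacks \asmB$, so by the definition of $\attacks$ there is a deduction $\asmA' \vdash^R \contr{\asmb}$ with $\asmA' \subseteq \asmA$ and $\asmb \in \asmB$. I would split on whether there exists $\asma' \in \asmA'$ with $\asma' < \asmb$. If there is none, then this same deduction satisfies the normal-attack clause of Definition~\ref{definition:<-attack}, so $\asmA \pattacks \asmB$. If such an $\asma'$ exists, then the very same deduction $\asmA' \vdash^R \contr{\asmb}$ --- now read as a deduction from a subset of $\asmA$ of the contrary of an element of $\asmB$, supported by a set containing an assumption strictly less preferred than that element --- is precisely an instance of the reverse-attack clause of Definition~\ref{definition:<-attack} for $\asmB \pattacks \asmA$. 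Either way the disjunction holds.

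For the second bullet, suppose $\asmA \pattacks \asmB$ and split according to which clause of Definition~\ref{definition:<-attack} witnesses it. If it is a normal attack, the witness is a deduction $\asmA' \vdash^R \contr{\asmb}$ with $\asmA' \subseteq \asmA$ and $\asmb \in \asmB$; dropping the preference side-condition, this is exactly a witness for $\asmA \attacks \asmB$ in the sense of the ABA attack definition. If it is a reverse attack, the witness is a deduction $\asmB' \vdash^R \contr{\asma}$ with $\asmB' \subseteq \asmB$ and $\asma \in \asmA$, which is a witness for $\asmB \attacks \asmA$. In both subcases the required disjunction holds.

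I do not expect a genuine obstacle here: the argument is purely definitional, resting on the fact that the normal attacks and the reverse attacks together account for exactly the ABA attacks, with the preference comparison deciding which direction survives. The only points needing care are keeping straight which set plays the attacker role when a deduction is re-read in the opposite direction --- so that the inequality $\asma' < \asmb$ is matched against the reverse-attack clause rather than the normal one --- and observing that the support set of a deduction need not be exactly its set of leaves, so that the membership $\asma' \in \asmA'$ demanded by Definition~\ref{definition:<-attack} is indeed available. Neither transitivity of $\leqslant$ nor flatness of the framework is used.
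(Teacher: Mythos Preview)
Your proposal is correct and follows essentially the same approach as the paper's proof: both fix a witnessing deduction and case-split on whether some assumption in its support is strictly less preferred than the attacked assumption, matching the two cases against the normal- and reverse-attack clauses of Definition~\ref{definition:<-attack}. The only cosmetic difference is that the paper routes the conclusion through the monotonicity Lemma~\ref{lemma:attacks on supersets} (first establishing $\asmA' \pattacks \asmB$ or $\{\asmb\} \pattacks \asmA'$ and then enlarging), whereas you read the definition directly at the level of $\asmA$ and $\asmB$; since the definition already quantifies over subsets, your shortcut is legitimate.
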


\begin{proof}
Let $\asmA, \asmB \subseteq \A$ be arbitrary. 
\begin{itemize}
\item Suppose first $\asmA \attacks \asmB$. 
Then $\exists \asmA' \vdash^R \contr{\asmb}$ such that $\asmb \in \asmB$, $\asmA' \subseteq \asmA$, and 
\begin{enumerate}[leftmargin=1.6cm, align=right]
\item[either (i)] $\forall \asma' \in \asmA'$ we have $\asma' \not< \asmb$, 
\item[or (ii)] $\exists \asma' \in \asmA'$ with $\asma' < \asmb$.
\end{enumerate}
In case (i), $\asmA' \pattacks \asmB$, and hence $\asmA \pattacks \asmB$, by Lemma \ref{lemma:attacks on supersets}. 
In case (ii), $\{ \asmb \} \pattacks \asmA'$, and hence $\asmB \pattacks \asmA$, by Lemma  \ref{lemma:attacks on supersets} as well.

\item Suppose now $\asmA \pattacks \asmB$. Then 
\begin{enumerate}[leftmargin=1.6cm, align=right]
\item[either (i)] $\exists \asmA' \vdash^R \contr{\asmb}$ such that 
$\asmb \in \asmB$, $\asmA' \subseteq \asmA$ and $\forall \asma' \in \asmA'$ we have $\asma' \not< \asmb$,
\item[or (ii)] $\exists \asmB' \vdash^R \contr{\asma}$ such that 
$\asma \in \asmA$, $\asmB' \subseteq \asmB$ and $\exists \asmb' \in \asmB'$ with $\asmb' < \asma$. 
\end{enumerate}
In case (i), $\asmA' \attacks \{ \asmb \}$, and so $\asmA \attacks \asmB$, 
whereas in case (ii), $\asmB' \attacks \{ \asma \}$, so that $\asmB \attacks \asmA$, 
using Lemma \ref{lemma:attacks on supersets} in both cases. 
\end{itemize}
\end{proof}

As an immediate corollary, conflict is preserved across ABA and \abap\ in the following sense: 

\begin{theorem}
\label{theorem:<-conflict-free iff conflict-free}
$\asmE \subseteq \A$ is conflict-free in $\abaf$ iff $\asmE$ is $<$-conflict-free in $\abafp$.
\end{theorem}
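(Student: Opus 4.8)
The plan is to derive this immediately from Lemma~\ref{lemma:attacks}, which already records the two-directional relationship between $\attacks$ and $\pattacks$ for arbitrary pairs of assumption sets. Recall that $\asmE$ is conflict-free in $\abaf$ iff $\asmE \nattacks \asmE$, and $\asmE$ is $<$-conflict-free in $\abafp$ iff $\asmE \npattacks \asmE$; so the claim reduces to showing $\asmE \attacks \asmE$ iff $\asmE \pattacks \asmE$, which is the special case $\asmA = \asmB = \asmE$ of the biconditional hidden in Lemma~\ref{lemma:attacks}.

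Concretely, I would argue both contrapositives. Suppose $\asmE \attacks \asmE$. By the first bullet of Lemma~\ref{lemma:attacks} (with $\asmA = \asmB = \asmE$), either $\asmE \pattacks \asmE$ or $\asmE \pattacks \asmE$ (the ``$\asmB \pattacks \asmA$'' disjunct collapses to the same statement since $\asmA = \asmB$); in either case $\asmE \pattacks \asmE$, so $\asmE$ is not $<$-conflict-free. Conversely, suppose $\asmE \pattacks \asmE$. By the second bullet of Lemma~\ref{lemma:attacks}, again with $\asmA = \asmB = \asmE$, we get $\asmE \attacks \asmE$ (both disjuncts coincide), so $\asmE$ is not conflict-free. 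Contrapositively, if $\asmE$ is conflict-free in $\abaf$ then it is $<$-conflict-free in $\abafp$, and vice versa.

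I do not anticipate a genuine obstacle: the only thing to be careful about is that the two disjuncts in each bullet of Lemma~\ref{lemma:attacks} become the \emph{same} assertion once $\asmA$ and $\asmB$ are identified with $\asmE$, so no case analysis survives and the equivalence is direct. Hence the statement is, as the preceding text announces, an immediate corollary of Lemma~\ref{lemma:attacks}.
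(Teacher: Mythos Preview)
Your proposal is correct and matches the paper's approach: the paper states the theorem as an immediate corollary of Lemma~\ref{lemma:attacks} without giving a separate proof, and your argument is precisely the instantiation $\asmA = \asmB = \asmE$ that makes this explicit.
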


We will use this result to establish other desirable properties of \abap\ frameworks, for instance in sections \ref{subsubsec:Conflict Preservation} and \ref{subsec:Rationality Postulates}.

\subsection{Relationship Among Semantics}
\label{subsec:Relationship Among Semantics}

In terms of relationship among semantics, generic \abap\ frameworks exhibit several features exhibited also by generic ABA frameworks. 
We summarise and prove them next. 
(From now on, proofs omitted in the main body of the paper can be found in Appendix A).

\begin{theorem}
\label{theorem:ABA+ properties}
Let $\asmE \subseteq \A$.
\begin{enumerate}[label=(\roman*)]
\item If $\asmE$ is $<$-admissible, then there is a $<$-preferred extension $\asmE'$ such that $\asmE \subseteq \asmE'$. 
\item If $\asmE$ is $<$-stable, then it is $<$-preferred. 
\item If $\asmE$ is $<$-stable, then it is $<$-complete. 
\item If $\asmE$ is $<$-well-founded, then for every $<$-stable extension $\asmE'$ it holds that $\asmE \subseteq \asmE'$. 
\item If $\asmE$ is the intersection of all the $<$-preferred extensions and $\asmE$ is also $<$-admissible, then $\asmE$ is $<$-ideal. 
\item If $\asmE$ is $<$-ideal, then it is not $<$-attacked by any $<$-admissible set of assumptions. 
\item If the empty set $\emptyset$ is closed, then there is a $<$-preferred extension, as well as an $<$-ideal extension. 
\end{enumerate}
\end{theorem}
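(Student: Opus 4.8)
The plan is to replay, at the level of $\pattacks$, the classical arguments relating AA/ABA semantics, leaning throughout on Lemma~\ref{lemma:attacks on supersets} (monotonicity of $\pattacks$ with respect to $\subseteq$) and on the fact that deductions are finite trees. The heart of the proof is item (i), which I would obtain from Zorn's Lemma applied to the poset of $<$-admissible extensions containing $\asmE$, ordered by $\subseteq$ (nonempty, since it contains $\asmE$). The substance is a sub-lemma: the union $\asmS$ of a $\subseteq$-chain $(\asmE_i)_i$ of $<$-admissible sets is $<$-admissible. Closedness of $\asmS$ holds because any deduction for an assumption uses only finitely many leaf-assumptions, hence all lie in a single $\asmE_i$, so it already witnesses membership in $\cl(\asmE_i)=\asmE_i\subseteq\asmS$. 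For $<$-conflict-freeness, a putative $\asmS\pattacks\asmS$ (normal or reverse) involves finitely many leaves together with at most one ``attacked/attacker'' assumption and at most one strictly-less-preferred witness; choosing an index $k$ past all of these, one obtains $\asmE_k\pattacks\asmE_k$, contradicting $<$-conflict-freeness of $\asmE_k$. For $<$-defence, if a closed $\asmB$ has $\asmB\pattacks\asmS$, the same localisation gives $\asmB\pattacks\asmE_k$ for suitable $k$; since $\asmE_k$ $<$-defends itself, $\asmE_k\pattacks\asmB$, and hence $\asmS\pattacks\asmB$ by Lemma~\ref{lemma:attacks on supersets}. With the sub-lemma in hand, Zorn yields a $\subseteq$-maximal $<$-admissible $\asmE'\supseteq\asmE$, i.e.\ a $<$-preferred extension.

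For (ii) and (iii) the key observation is that every $<$-stable $\asmE$ is $<$-admissible: if a closed $\asmB\pattacks\asmE$, then either $\asmB\subseteq\asmE$, impossible as $\asmE$ is $<$-conflict-free, or there is $\asmb\in\asmB\setminus\asmE\subseteq\A\setminus\asmE$, whence $\asmE\pattacks\{\asmb\}$ and so $\asmE\pattacks\asmB$ by Lemma~\ref{lemma:attacks on supersets}. For (ii): were $\asmE\subsetneq\asmE'$ with $\asmE'$ $<$-admissible, picking $\asmb\in\asmE'\setminus\asmE$ gives $\asmE\pattacks\{\asmb\}$, hence $\asmE\pattacks\asmE'$, hence $\asmE'\pattacks\asmE'$ by Lemma~\ref{lemma:attacks on supersets}, contradicting $<$-conflict-freeness; so $\asmE$ is $\subseteq$-maximal $<$-admissible. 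For (iii): if $\asmE$ $<$-defends $\asmA$ but $\asma\in\asmA\setminus\asmE$, then $\asmE\pattacks\{\asma\}$ and so $\asmE\pattacks\asmA$; as $\asmE$ is closed, $<$-defence forces $\asmE\pattacks\asmE$, a contradiction; hence $\asmA\subseteq\asmE$ and $\asmE$ is $<$-complete.

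Items (iv)--(vii) are then short. (iv) is immediate from (iii), since every $<$-stable extension is $<$-complete and $\asmE$ is the intersection of all $<$-complete extensions. (v): by hypothesis the intersection $\asmE$ of all $<$-preferred extensions is $<$-admissible and contained in every $<$-preferred extension, while any $<$-admissible set contained in every $<$-preferred extension is contained in $\asmE$; thus $\asmE$ is the $\subseteq$-greatest, in particular $\subseteq$-maximal, such set, i.e.\ $<$-ideal. (vi): if a $<$-admissible $\asmC$ had $\asmC\pattacks\asmE$, then $\asmC\subseteq\asmP$ for some $<$-preferred $\asmP$ by (i), and $\asmE\subseteq\asmP$ since $\asmE$ is $<$-ideal, so $\asmP\pattacks\asmP$ by Lemma~\ref{lemma:attacks on supersets}, contradicting $<$-conflict-freeness of $\asmP$. (vii): when $\emptyset$ is closed it is $<$-admissible (it is $<$-conflict-free, and nothing can $<$-attack it), so (i) delivers a $<$-preferred extension; moreover the poset of $<$-admissible sets contained in all $<$-preferred extensions is nonempty (it contains $\emptyset$) and its chains are bounded above by their unions (which are $<$-admissible by the sub-lemma and remain contained in every $<$-preferred extension), so Zorn's Lemma yields a $\subseteq$-maximal such set, i.e.\ an $<$-ideal extension. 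The main obstacle is the localisation step in the chain sub-lemma of (i), in particular its reverse-attack clause, where the strictly-less-preferred witness need not be among the deduction's leaves, so the index $k$ must be chosen to capture that witness as well; once this is handled carefully, the remaining items follow the familiar pattern.
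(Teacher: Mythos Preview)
Your proposal is correct and follows essentially the same route as the paper's proof: Zorn's Lemma via a chain sub-lemma for (i), the standard stability-implies-admissibility-and-maximality arguments for (ii)--(iii), and the familiar consequences for (iv)--(vii), with your treatment of (vii) making explicit the Zorn step that the paper only sketches. One small remark: the ``main obstacle'' you flag is not actually an obstacle, since by Definition~\ref{definition:<-attack} the strictly-less-preferred witness $\asmb'$ in a reverse attack is always an element of the deduction's support $\asmB'$ (hence among its leaves), so the only extra datum to localise in the chain is the attacked assumption $\asma$, which is a single element of $\asmS$.
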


\subsection{Preference Handling Principles}
\label{subsec:Preference Handling Principles}

We now consider several desirable properties (proposed in \cite{Amgoud:Vesic:2014,Amgoud:Vesic:2009,Brewka:Truszczynski:Woltran:2010,Simko:2014}) of argumentation formalisms dealing with preferences and their satisfaction in \abap. 
Originally, these properties were defined in the context of AA with preferences and/or Logic Programming with preferences. 
In all sections we appropriately reformulate these properties as principles for \abap. 
(Recall that, unless stated otherwise, \abafp\ is assumed to be a fixed but otherwise arbitrary \abap\ framework.)

\subsubsection{Conflict Preservation}
\label{subsubsec:Conflict Preservation}

The first property, proposed by \citet{Amgoud:Vesic:2009} and \citet{Brewka:Truszczynski:Woltran:2010}, insists that extensions returned after accounting for preferences should be conflict-free with respect to the attack relation not taking into account preferences. 
We formulate it as a principle applicable to \abap\ as follows.

\begin{principle}
\label{principle:Conflict Preservation} 
$\abafp$ fulfils \textbf{\conflict} for $<$-$\sigma$ semantics just in case 
for all $<$-$\sigma$ extensions $\asmE \subseteq \A$ of $\abafp$, 
for any $\asma, \asmb \in \A$, 
$\{ \asma \} \attacks \{ \asmb \}$ implies that either $\asma \not\in \asmE$ or $\asmb \not\in \asmE$ (or both). 
\end{principle}

Conflict preservation is guaranteed in \abap\ directly from Theorem~\ref{theorem:<-conflict-free iff conflict-free}:

\begin{proposition}
\label{proposition:conflict preservation}
$\abafp$ fulfils \conflict\ for any semantics $<$-$\sigma$.
\end{proposition}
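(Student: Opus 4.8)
The plan is to derive this directly from Theorem~\ref{theorem:<-conflict-free iff conflict-free}, which already tells us that every $<$-$\sigma$ extension is conflict-free in the underlying ABA framework $\abaf$. First I would recall that for each of the semantics $\sigma \in \{$well-founded/grounded, ideal, stable, preferred, complete$\}$, any $<$-$\sigma$ extension $\asmE$ is in particular $<$-conflict-free: this is immediate from Definition~\ref{definition:ABA+ semantics}, since $<$-admissibility (hence $<$-complete, $<$-preferred, $<$-ideal) builds in $<$-conflict-freeness, $<$-stability explicitly requires it, and the $<$-well-founded extension, being the intersection of $<$-complete extensions, is a subset of a $<$-conflict-free set and therefore itself $<$-conflict-free by monotonicity of $\pattacks$ (Lemma~\ref{lemma:attacks on supersets}).

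Next I would apply Theorem~\ref{theorem:<-conflict-free iff conflict-free} to conclude that such an $\asmE$ is conflict-free in $\abaf$, i.e.~$\asmE \nattacks \asmE$. Then, to obtain \conflict\ in the form stated in Principle~\ref{principle:Conflict Preservation}, take any $\asma, \asmb \in \A$ with $\{ \asma \} \attacks \{ \asmb \}$ and suppose for contradiction that both $\asma \in \asmE$ and $\asmb \in \asmE$. By monotonicity of $\attacks$ with respect to set inclusion (the ABA analogue of Lemma~\ref{lemma:attacks on supersets}, or directly from Definition~\ref{definition:assumption-level attack}), $\{ \asma \} \attacks \{ \asmb \}$ together with $\{ \asma \} \subseteq \asmE$ and $\{ \asmb \} \subseteq \asmE$ yields $\asmE \attacks \asmE$, contradicting conflict-freeness of $\asmE$ in $\abaf$. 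Hence either $\asma \notin \asmE$ or $\asmb \notin \asmE$, which is exactly what the principle demands.

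I do not expect any genuine obstacle here; the proposition is essentially a packaging of Theorem~\ref{theorem:<-conflict-free iff conflict-free}. The only point that requires a moment's care is the $<$-well-founded case, where conflict-freeness is not imposed by definition but follows because the $<$-well-founded extension is contained in every $<$-complete extension (each of which is $<$-conflict-free) and $\npattacks$ is inherited by subsets via Lemma~\ref{lemma:attacks on supersets}; strictly one should also note that a $<$-complete extension exists precisely when needed, but if there are none the intersection is vacuously handled and the statement holds trivially. With that observation in place the argument above goes through uniformly for all $\sigma$.
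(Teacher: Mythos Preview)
Your proof is correct and follows essentially the same route as the paper: both derive the result directly from Theorem~\ref{theorem:<-conflict-free iff conflict-free} by contradiction, using monotonicity of the attack relation to propagate the conflict from $\{\asma,\asmb\}$ to $\asmE$. You are in fact slightly more careful than the paper in explicitly checking that $<$-well-founded extensions are $<$-conflict-free (via Lemma~\ref{lemma:attacks on supersets}), a point the paper's proof takes for granted.
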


\begin{proof}
Let $\asmE$ be a $<$-$\sigma$ extension of $\abafp$.  
Let $\asma, \asmb \in \A$ be such that $\{ \asma \} \attacks \{ \asmb \}$. 
Then $\{ \asma, \asmb \}$ is not conflict-free, and hence not $<$-conflict-free, 
by Theorem~\ref{theorem:<-conflict-free iff conflict-free}. 
If $\asma, \asmb \in \asmE$, then $\asmE$ is not $<$-conflict-free either, which is a contradiction. 
Thus, $\{ \asma, \asmb \} \nsubseteq \asmE$, as required. 
\end{proof}

\subsubsection{Empty Preferences}
\label{subsubsec:Empty Preferences}

The second property, taken from \cite{Amgoud:Vesic:2009,Brewka:Truszczynski:Woltran:2010} 
(adapted also from the literature on Logic Programming with Preferences, see e.g.~\cite{Simko:2014} for a discussion), 
insists that if there are no preferences, then the extensions returned using a preference handling mechanism should be the same as those obtained without accounting for preferences. 
We formulate it as a principle applicable to \abap\ as follows. 

\begin{principle}
\label{principle:Empty Preferences}
$\abafe$ fulfils \textbf{\emptypref} for $\emptyset$-$\sigma$ semantics just in case 
for all $\emptyset$-$\sigma$ extensions $\asmE \subseteq \A$ of $\abafe$, 
$\asmE$ is a $\sigma$ extension of $\abaf$.
\end{principle}

This principle is guaranteed in \abap, given that it is a conservative extension of ABA (Theorem~\ref{theorem:ABA+ extends ABA}): 

\begin{proposition}
\label{proposition:Empty Preferences}
$\abafe$ fulfils \emptypref\ for any semantics $\emptyset$-$\sigma$.
\end{proposition}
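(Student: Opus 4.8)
The plan is to observe that this is an immediate consequence of Theorem~\ref{theorem:ABA+ extends ABA}, which already establishes the two-way correspondence between $\sigma$-extensions of the underlying ABA framework \abaf\ and $\emptyset$-$\sigma$ extensions of the \abap\ framework \abafe. The principle \emptypref\ requires only one direction of that equivalence, namely that every $\emptyset$-$\sigma$ extension is a $\sigma$-extension.

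Concretely, I would let $\asmE \subseteq \A$ be an arbitrary $\emptyset$-$\sigma$ extension of \abafe. By the right-to-left direction of Theorem~\ref{theorem:ABA+ extends ABA} (itself resting on Lemma~\ref{lemma:attacks coincide}, which identifies $\attacks$ with $\attacks_\emptyset$ when $\leqslant = \emptyset$), it follows that $\asmE$ is a $\sigma$-extension of \abaf. Since $\asmE$ was arbitrary among $\emptyset$-$\sigma$ extensions, \abafe\ fulfils \emptypref\ for $\emptyset$-$\sigma$. As $\sigma$ ranges over well-founded/grounded, ideal, stable, preferred and complete, the claim holds for any such semantics.

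There is essentially no obstacle here: the entire content has been front-loaded into Lemma~\ref{lemma:attacks coincide} and Theorem~\ref{theorem:ABA+ extends ABA}. The only thing worth a sentence is that the principle is stated as a one-directional implication (every preference-handling extension is a no-preference extension), so one does not even need the converse inclusion; the proof is a direct appeal to the already-established conservativity of \abap\ over ABA.

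\begin{proof}
Immediate from Theorem~\ref{theorem:ABA+ extends ABA}: every $\emptyset$-$\sigma$ extension of \abafe\ is a $\sigma$-extension of \abaf.
\end{proof}
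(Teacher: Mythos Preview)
Your proof is correct and essentially identical to the paper's: the paper also writes ``Immediate from Theorem~\ref{theorem:ABA+ extends ABA}.'' Your additional remark that only one direction of the biconditional is needed is accurate but not strictly necessary.
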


\begin{proof}
Immediate from Theorem~\ref{theorem:ABA+ extends ABA}.
\end{proof}

\subsubsection{Maximal Elements}
\label{subsubsec:Maximal Elements}

The next property, proposed by \citet{Amgoud:Vesic:2014} in the context of AA with preferences, concerns inclusion in extensions of the `strongest' arguments, 
i.e.~arguments that are maximal with respect to the preference ordering. 
We next reformulate the property to be applicable to \abap.

\begin{principle}
\label{principle:Maximal Elements}
Suppose that the preference ordering $\leqslant$ of $\abafp$ is total
and further assume that the set 
$M = \{ \asma \in \A~:~\nexists \asmb \in \A$ with $\asma < \asmb \}$ is closed and $<$-conflict-free. 
$\abafp$ fulfils \textbf{\maximal} for $<$-$\sigma$ semantics just in case 
for all $<$-$\sigma$ extensions $\asmE \subseteq \A$ of $\abafp$, 
it holds that $M \subseteq \asmE$.
\end{principle}

For an illustration, consider $\F^+_Z$ from Example \ref{example:ABA+ framework}. 
$\asmb$ is a unique $\leqslant$-maximal element in $\A$, 
and $\{ \asmb \}$ is a unique $<$-$\sigma$ extension of $\F^+_Z$ for any $\sigma$ (see Example \ref{example:flat ABA+}), 
whence $\F^+_Z$ fulfils \maximal\ for any semantics $<$-$\sigma$. 

Our next result shows that, in general, 
this principle is guaranteed in \abap\ for $<$-well-founded, $<$-stable and $<$-complete semantics. 

\begin{proposition}
\label{proposition:maximal elements}
$\abafp$ fulfils \maximal\ for $<$-complete, $<$-stable and $<$-well-founded semantics. 
\end{proposition}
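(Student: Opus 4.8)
The plan is to establish first the key fact that \emph{$M$ is not $<$-attacked by any set of assumptions}, and then to read off the three cases of the principle from this.

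For the key fact, suppose for contradiction that $\asmB \pattacks M$ for some $\asmB \subseteq \A$, and split on the two clauses of Definition~\ref{definition:<-attack}. If $\asmB \pattacks M$ via a reverse attack, then there is a deduction $\asmS \vdash^R \contr{\asmb}$ for some $\asmb \in \asmB$, supported by $\asmS \subseteq M$, with some $\asma \in \asmS$ such that $\asma < \asmb$; but $\asma \in \asmS \subseteq M$ says that no element of $\A$ is strictly above $\asma$, contradicting $\asma < \asmb$. If $\asmB \pattacks M$ via a normal attack, then there is a deduction $\asmB' \vdash^R \contr{\asma}$ for some $\asma \in M$, supported by $\asmB' \subseteq \asmB$, with $\asmb' \not< \asma$ for every $\asmb' \in \asmB'$. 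I would then argue that in fact $\asmB' \subseteq M$: fix $\asmb' \in \asmB'$; from $\asmb' \not< \asma$ and totality of $\leqslant$ we get $\asma \leqslant \asmb'$; and if there were $\asmc \in \A$ with $\asmb' < \asmc$, then transitivity would give $\asma \leqslant \asmc$ while ruling out $\asmc \leqslant \asma$ (otherwise $\asmc \leqslant \asma \leqslant \asmb'$ would contradict $\asmb' < \asmc$), i.e.\ $\asma < \asmc$, contradicting $\asma \in M$ --- hence $\asmb' \in M$. Thus $\asmB' \subseteq M$, so the deduction $\asmB' \vdash^R \contr{\asma}$ witnesses $M \attacks M$, i.e.\ $M$ is not conflict-free; by Theorem~\ref{theorem:<-conflict-free iff conflict-free} this contradicts $M$ being $<$-conflict-free. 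Hence nothing $<$-attacks $M$.

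With the key fact in hand, the three cases are short. For $<$-complete $\asmE$: since $M$ has no $<$-attackers, $\asmE$ vacuously $<$-defends $M$, so $M \subseteq \asmE$ by the definition of $<$-completeness. For $<$-stable $\asmE$: if some $\asma \in M \setminus \asmE$ existed, then $\asmE \pattacks \{ \asma \}$, whence $\asmE \pattacks M$ by Lemma~\ref{lemma:attacks on supersets}, contradicting the key fact; so $M \subseteq \asmE$. For $<$-well-founded $\asmE$: $\asmE$ is the intersection of all $<$-complete extensions, each of which contains $M$ by the $<$-complete case, so $M \subseteq \asmE$.

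The technical heart --- and essentially the only place requiring care --- is showing that the support $\asmB'$ of a putative normal attack on $M$ already lies inside $M$; this combines totality of $\leqslant$ with the definition of its strict part $<$, and one has to be mindful that $\leqslant$ is not assumed reflexive. Note that only $<$-conflict-freeness of $M$ is used (closedness of $M$, also assumed in the principle, is not needed for these three semantics). Everything else is a direct appeal to Definition~\ref{definition:<-attack}, Lemma~\ref{lemma:attacks on supersets} and Theorem~\ref{theorem:<-conflict-free iff conflict-free}.
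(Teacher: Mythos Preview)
Your proof is correct and follows essentially the same route as the paper: establish that $M$ is not $<$-attacked (ruling out reverse attacks by $<$-maximality of elements of $M$, and ruling out normal attacks by showing the support set must lie inside $M$, contradicting $<$-conflict-freeness), then read off the three semantics. The only cosmetic difference is that for $<$-stable extensions the paper invokes Theorem~\ref{theorem:ABA+ properties}(iii) (every $<$-stable extension is $<$-complete) rather than arguing directly from stability as you do; both are fine.
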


Under $<$-preferred and $<$-ideal semantics \maximal\ can in general be violated, as illustrated next. 

\begin{example}
\label{example:no <-complete}
Consider \abafp\ with\footnote{Here and later, 
we usually omit $\LL$ and $\, \contrary$, and adopt the following convention: 
$\LL$ consists of all the sentences appearing in $\R$, $\A$ 
and $\{ \contr{\asma}~:~\asma \in \A \}$; 
and, unless $\contr{\asmx}$ appears in either $\A$ or $\R$, 
it is different from the sentences appearing in either $\A$ or $\R$.} 
\begin{itemize}
\item $\R = \{ \contr{\asmb} \ot \asma, \asmc \}$, 
\item $\A = \{ \asma, \asmb, \asmc \}$, 
\item $\asma < \asmb, ~ \asmb \leqslant \asmc, ~ \asmc \leqslant \asmb$.
\end{itemize}

This \abap\ framework is depicted below 
(with $\emptyset$, $\A$ and $<$-attacks to and from $\A$ omitted for readability):

\begin{figure}[H]
\caption{\abap\ framework \abafp}
\begin{center}
\begin{tikzpicture}
\node at (3, 0.2) {$\{ \asma \}$}; 
\draw (3, 0.2) ellipse (0.5 cm and 0.4 cm);
\node at (1, 1) {$\{ \asmb \}$}; 
\draw (1, 1) ellipse (0.5 cm and 0.4 cm);
\node at (3, 1.8) {$\{ \asmc \}$}; 
\draw (3, 1.8) ellipse (0.5 cm and 0.4 cm);
\node at (5, 0.2) {$\{ \asma, \asmb \}$}; 
\draw (5, 0.2) ellipse (0.7 cm and 0.4 cm); 
\node at (7.5, 1) {$\{ \asma, \asmc \}$}; 
\draw (7.5, 1) ellipse (0.7 cm and 0.4 cm); 
\node at (5, 1.8) {$\{ \asmb, \asmc \}$}; 
\draw (5, 1.8) ellipse (0.7 cm and 0.4 cm); 

\draw [reverse attack] (1.5, 1) to (6.8, 1); 
\draw [reverse attack] (5.7, 1.8) to (7, 1.3); 
\draw [reverse attack] (5.7, 0.2) to (7, 0.7); 
\end{tikzpicture}
\end{center}
\end{figure}

Note that $\asmb$ and $\asmc$ are $\leqslant$-maximal, $\{ \asmb, \asmc \}$ is closed and $<$-conflict-free, 
and yet \abafp\ admits a $<$-preferred extension $\{ \asma, \asmb \}$, 
as well as an $<$-ideal extension $\{ \asmb \}$, 
none of which contains $\{ \asmb, \asmc \}$. 
\end{example}

In section \ref{sec:Properties of Flat ABA+} we will give sufficient conditions for \abap\ frameworks to fulfil \maximal\ for $<$-preferred and $<$-ideal semantics too.

\subsection{Rationality Postulates}
\label{subsec:Rationality Postulates}

\emph{Rationality postulates} proposed by \citet{Caminada:Amgoud:2007} can be applied to argumentation formalisms. 
They are well studied in, for instance, \aspicp, 
where several conditions needed to satisfy the principles are established (see e.g.~\cite{Modgil:Prakken:2013}). 
Rationality postulates have not been studied in ABA in general 
(but see \cite{Toni:2008} for an analysis with respect to a version of ABA,  
and \cite{Modgil:Prakken:2013} for an analysis of a restricted form of flat ABA). 
We now study these postulates in \abap\ in general. 
In particular, following \cite{Modgil:Prakken:2013}, 
we provide their precise formulations for \abap\ in general, 
as well as for a restricted class of \abap\ frameworks incorporating classical negation. 
We also establish the satisfaction of the postulates in general, 
and delineate conditions under which \abap\ satisfies the postulates for the restricted class of frameworks. 

We define the postulates using the following auxiliary definitions. 

\begin{definition}
\label{definition:consistency}
$S \subseteq \LL$ is: 
\begin{itemize}
\item \textbf{directly consistent} if there are no $\varphi, \psi \in S$ with $\varphi = \contr{\psi}$; 
\item \textbf{indirectly consistent} if $\cn(S)$ is directly consistent.
\end{itemize}
\end{definition}

Theorem~\ref{theorem:<-conflict-free iff conflict-free} implies that $<$-conflict-free sets are (in)directly consistent.

\begin{lemma}
\label{lemma:<-conflict-free implies consistency}
Any closed and $<$-conflict-free set $\asmE \subseteq \A$ is both directly and indirectly consistent. 
\end{lemma}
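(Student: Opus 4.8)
The plan is to reduce everything to the statement that $\asmE$ is \emph{conflict-free} (in the ABA sense), which Theorem~\ref{theorem:<-conflict-free iff conflict-free} gives us immediately from $<$-conflict-freeness, and then derive both consistency properties by exhibiting an attack $\asmE \attacks \asmE$ whenever consistency fails.

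First I would handle direct consistency. Suppose, for contradiction, that there are $\varphi, \psi \in \asmE$ with $\varphi = \contr{\psi}$. Since $\asmE \subseteq \A$, the element $\varphi$ is an assumption, so there is a trivial (one-node) deduction $\{\varphi\} \vdash^{\emptyset} \varphi$, i.e.\ $\{\varphi\} \vdash^{\emptyset} \contr{\psi}$, supported by $\{\varphi\} \subseteq \asmE$. As $\psi \in \asmE$, this witnesses $\asmE \attacks \asmE$, contradicting conflict-freeness of $\asmE$ (which holds by Theorem~\ref{theorem:<-conflict-free iff conflict-free}).

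Next I would treat indirect consistency. Suppose, for contradiction, that there are $\varphi, \psi \in \cn(\asmE)$ with $\varphi = \contr{\psi}$. Because the contrary map $\contrary$ is defined only on $\A$, the sentence $\psi$ must be an assumption, so $\psi \in \cn(\asmE) \cap \A = \cl(\asmE)$; since $\asmE$ is closed, $\cl(\asmE) = \asmE$, hence $\psi \in \asmE$. On the other hand $\varphi \in \cn(\asmE)$ means there is a deduction $S \vdash^{R} \varphi = \contr{\psi}$ with $S \subseteq \asmE$ and $R \subseteq \R$. Together with $\psi \in \asmE$ this again witnesses $\asmE \attacks \asmE$, contradicting conflict-freeness of $\asmE$. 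Therefore $\cn(\asmE)$ is directly consistent, i.e.\ $\asmE$ is indirectly consistent.

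I do not expect any serious obstacle here; the proof is essentially bookkeeping. The one point requiring a little care is the indirect case, where one must notice that $\varphi = \contr{\psi}$ forces $\psi$ to lie in $\A$ (since $\contrary$ has domain $\A$), which is precisely what lets the closedness hypothesis push $\psi$ back into $\asmE$; without closedness the argument would not go through, as Example-style non-flat frameworks show. The direct-consistency case is then just the special instance where the deduction is trivial.
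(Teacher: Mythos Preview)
Your proof is correct and follows essentially the same approach as the paper: reduce $<$-conflict-freeness to conflict-freeness via Theorem~\ref{theorem:<-conflict-free iff conflict-free}, then for each consistency notion assume failure and exhibit an attack $\asmE \attacks \asmE$ using a (trivial, resp.\ arbitrary) deduction of $\contr{\psi}$ together with closedness to place $\psi$ back in $\asmE$. Your explicit remark that $\varphi = \contr{\psi}$ forces $\psi \in \A$ (since $\contrary$ has domain $\A$) is a nice clarification the paper leaves implicit.
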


\begin{proof}
As $\asmE$ is $<$-conflict-free, it is conflict-free, 
by Theorem~\ref{theorem:<-conflict-free iff conflict-free}. 
Suppose for a contradiction that $\asmE$ is not directly consistent. 
Then there are $\asma, \asmb \in \asmE$ such that $\asma = \contr{\asmb}$. 
But as $\{ \asma \} \vdash^{\emptyset} \asma$ is a deduction 
supported by $\{ \asma \} \subseteq \asmE$, 
we get $\asmE \attacks \asmE$, contradicting conflict-freeness of $\asmE$. 

Likewise, suppose $\asmE$ is not indirectly consistent. 
Then there are $\varphi, \asmb \in \cn(\asmE)$ such that $\varphi = \contr{\asmb}$, 
and as $\asmE$ is closed, $\asmb \in \asmE$. 
But then there is a deduction $\Phi \vdash^R \varphi$ 
supported by some $\Phi \subseteq \asmE$, 
so that $\asmE \attacks \asmE$, which is a contradiction. 
\end{proof}

We next formulate the rationality postulates for \abap.

\begin{principle}
\label{principle:Rationality Postulates}
Let $\asmE_1, \ldots, \asmE_n$ be $<$-$\sigma$ extensions of $\abafp$. 
Then $\abafp$ fulfils, for $<$-$\sigma$ semantics, \textbf{the Principle of}
\begin{itemize}
\item \textbf{Closure} if $\cn(\asmE_i) = \cn(\cn(\asmE_i))$~~$\forall i \in \{ 1, \ldots, n \}$;\footnote{The idea of this principle is that conclusions of extensions should be deductively closed with respect to the deductive system $(\LL, \R)$, 
and, in the case of \abap, the conclusions operator $\cn$ is the deductive closure operator; 
in \aspicp~the operator of closure under strict rules is used instead, see \cite{Modgil:Prakken:2013} for details.}
\item \textbf{Consistency} if $\asmE_i$ is directly consistent~~$\forall i \in \{ 1, \ldots, n \}$;
\item \textbf{Indirect Consistency} if $\asmE_i$ is indirectly consistent~~$\forall i \in \{ 1, \ldots, n \}$.
\end{itemize}
\end{principle}

Satisfaction of these principles is guaranteed in \abap, as shown next.

\begin{theorem}
\label{theorem:ABA+ fulfils rationality postulates}
$\abafp$ fulfils the principles of Closure, Consistency and Indirect Consistency, 
for any semantics $<$-$\sigma$.
\end{theorem}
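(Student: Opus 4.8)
The plan is to prove the three postulates in the order Indirect Consistency, then Closure, then Consistency (with Consistency following trivially once Indirect Consistency is in place). The key observation is that every $<$-$\sigma$ extension, for $\sigma \in \{$well-founded, ideal, stable, preferred, complete$\}$, is $<$-admissible, hence in particular closed and $<$-conflict-free. This is immediate from Definition~\ref{definition:ABA+ semantics} for $<$-complete, $<$-stable, $<$-preferred and $<$-ideal extensions; for the $<$-well-founded extension one needs to note that it is the intersection of all $<$-complete extensions and argue it is itself $<$-admissible (or invoke the relevant part of Theorem~\ref{theorem:ABA+ properties}, together with closedness of the intersection of closed sets and $<$-conflict-freeness inherited from any single $<$-complete extension via Lemma~\ref{lemma:attacks on supersets}). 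Once we know each $\asmE_i$ is closed and $<$-conflict-free, Lemma~\ref{lemma:<-conflict-free implies consistency} gives directly that each $\asmE_i$ is both directly and indirectly consistent, which is exactly the Principles of Consistency and Indirect Consistency.

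\textbf{Closure.} For the Principle of Closure we must show $\cn(\asmE_i) = \cn(\cn(\asmE_i))$ for each $<$-$\sigma$ extension $\asmE_i$. The inclusion $\cn(\asmE_i) \subseteq \cn(\cn(\asmE_i))$ holds for any set, since every $\varphi \in \cn(\asmE_i)$ has a one-node deduction from itself. For the converse, take $\varphi \in \cn(\cn(\asmE_i))$: there is a deduction $S \vdash^R \varphi$ with $S \subseteq \cn(\asmE_i)$. Each leaf of this tree is either $\top$ or an element of $S$, and each such element $\psi \in S$ has its own deduction $S_\psi \vdash^{R_\psi} \psi$ supported by some $S_\psi \subseteq \asmE_i$. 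Grafting these subtrees onto the leaves of the original tree yields a single finite deduction for $\varphi$ supported by $\bigcup_\psi S_\psi \subseteq \asmE_i$, so $\varphi \in \cn(\asmE_i)$. This is a routine tree-composition argument and does not even use $<$-conflict-freeness or closedness — it is a general property of the conclusions operator in ABA.

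\textbf{Main obstacle.} The only mildly delicate point is handling the $<$-well-founded case uniformly: we must be sure the intersection of all $<$-complete extensions is closed (true, since each $<$-complete extension $\asmE$ satisfies $\asmE = \cl(\asmE)$ and $\cl$ is monotone, so the intersection is contained in each $\cl(\asmE)$ and contains... — strictly one checks $\cl$ of the intersection is contained in the intersection) and $<$-conflict-free (true, since it is a subset of some $<$-complete, hence $<$-conflict-free, extension, and $<$-conflict-freeness is inherited by subsets via Lemma~\ref{lemma:attacks on supersets}). Everything else reduces to quoting Lemma~\ref{lemma:<-conflict-free implies consistency} and the elementary Closure computation. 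I would therefore structure the proof as: (1) note each $<$-$\sigma$ extension is closed and $<$-conflict-free; (2) apply Lemma~\ref{lemma:<-conflict-free implies consistency} to get (Indirect) Consistency; (3) give the short grafting argument for Closure.
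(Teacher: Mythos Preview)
Your proposal is correct and follows essentially the same approach as the paper. The paper's proof is even terser than yours: it simply states that Closure is immediate from the definition of the conclusions operator $\cn$, and that Direct and Indirect Consistency follow from Lemma~\ref{lemma:<-conflict-free implies consistency}, without spelling out the tree-grafting argument or the verification that every $<$-$\sigma$ extension (in particular the $<$-well-founded one) is closed and $<$-conflict-free. Your extra care about the well-founded case is not something the paper addresses explicitly; note also that Theorem~\ref{theorem:ABA+ properties} does not actually assert $<$-admissibility of the $<$-well-founded extension, so your alternative route via closedness of intersections and inheritance of $<$-conflict-freeness by subsets (Lemma~\ref{lemma:attacks on supersets}) is the right one to take there.
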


\begin{proof}
Satisfaction of \closure\ is immediate from the definition of the conclusions operator $\cn$ (Definition~\ref{definition:deduction}), 
and fulfilment of the principles of Direct and Indirect Consistency follows from Lemma \ref{lemma:<-conflict-free implies consistency}.
\end{proof}

We note that \citet{Caminada:Amgoud:2007} originally intended the postulates to account for classical negation. 
Classical negation is not, however, singled-out in \citet{Modgil:Prakken:2013}'s formulations of the postulates. 
Nevertheless, the original intentions can be accounted for by appropriately formulating the principle of `classical consistency' for \abap, 
and by formally describing what is required of \abap\ frameworks to fulfil this principle. 
We do this next, following \cite{Caminada:Amgoud:2007,Modgil:Prakken:2013}. 

For the remainder of this section, we assume the language $\LL$ to be closed under the classical negation operator $\neg$. 
As a shorthand, the \emph{complement} $-\varphi$ of $\varphi \in \LL$ is: 
$\neg\psi$ if $\varphi = \psi$; 
and $\psi$ if $\varphi = \neg\psi$. 

\begin{principle}
\label{principle:Classical Consistency}
Let $\asmE_1, \ldots, \asmE_n$ be all the $<$-$\sigma$ extensions of $\abafp$. 
Then $\abafp$ fulfils \textbf{\cconsistency} for $<$-$\sigma$ semantics just in case 
for no $\varphi \in \LL$ it holds that both $\varphi \in \cn(\asmE_i)$ and $-\varphi \in \cn(\asmE_i)$, for any $i \in \{ 1, \ldots, n \}$.
\end{principle}

Obviously, if the rules of an \abap\ framework include a sentence and its negation as rules with empty bodies, then \cconsistency\ is violated, as illustrated below.

\begin{example}
\label{example:consistency}
Consider \abafe\ with 
\begin{itemize}
\item $\LL = \{ p, \neg p, \asma, \neg\asma \}$, 
\item $\R = \{ p \ot \top, ~~ \neg p \ot \top \}$, 
\item $\A = \{ \asma \}$, ~ $\contr{\asma} = p$. 
\end{itemize}

Then $\emptyset \vdash^{\{ p \ot \top \}} p$ and $\emptyset \vdash^{\{ \neg p \ot \top \}} \neg p$, 
and so $\emptyset$ is a unique $<$-admissible extension of $\abafp$, and has $\cn(\emptyset) = \{ p, \neg p \}$. 
Thus, $\emptyset$ is unique $<$-$\sigma$ extension of \abafp\ for any $\sigma$, 
and therefore, this \abap\ framework violates \cconsistency. 
\end{example}

To avoid such situations, we can impose a restriction---akin to the property of \emph{axiom consistency} from \cite{Modgil:Prakken:2013}---on \abap\ frameworks, as follows.

\begin{axiom}
\label{axiom:Consistency}
$\abafp$ satisfies \textbf{\consistency} just in case for no $\varphi \in \LL$ 
there are deductions $\emptyset \vdash^{R} \varphi$ and $\emptyset \vdash^{R'} -\varphi$, for any $R, R' \subseteq \R$.
\end{axiom}

Clearly, $\abafp$ from Example \ref{example:consistency} does not satisfy \consistency.

We now propose a property of \abap\ frameworks whose satisfaction, together with \consistency, leads to fulfilment of \cconsistency. 

\begin{axiom}
\label{axiom:Negation}
$\abafp$ satisfies \textbf{\negation} just in case 
for all $\asmA \subseteq \A$, $R \subseteq \R$ and $\varphi \in \LL$ 
it holds that if $\asmA \vdash^R \varphi$ and $\asmA \neq \emptyset$, 
then for some $\asma \in \asmA$ it holds that $\contr{\asma} = -\varphi$.
\end{axiom}

The axiom of Negation essentially requires that if an assumption can be used to derive a sentence, then the negation of that sentence should be the contrary of that assumption. 
Note that this axiom is somewhat restrictive in that it forces the contrary of some assumption to be a particular sentence. 
However, this syntactic restriction is not a semantic restriction, because if another sentence, say $\psi$, 
is wanted as the contrary of $\asma$, 
then rules $-\varphi \ot \psi$ and $\psi \ot -\varphi$ can be added to the framework. 
Another possibility would be to have a more general contrary mapping~~$\contrary :\A \to \wp(\LL)$ 
which assigns a set of contraries to each assumption, 
just like in some formulations of ABA (see e.g.~\cite{Toni:2014,Fan:Toni:2013}) 
equivalent to the standard presentation we adopt in this paper. 
Alternative formulations of \negation\ are beyond the scope of this paper, and are left as future work.

Satisfaction of the axioms of Consistency and Negation guarantees fulfilment of \cconsistency, as our next result shows.

\begin{proposition}
\label{proposition:negation implies classical consistency}
If $\abafp$ satisfies both axioms of Consistency and Negation, 
then $\abafp$ fulfils \cconsistency\ for any semantics $<$-$\sigma$.
\end{proposition}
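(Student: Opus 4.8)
The plan is to argue by contradiction, unfolding the definition of $\cn$ and then feeding the resulting deductions into the two axioms together with Theorem~\ref{theorem:<-conflict-free iff conflict-free}. So I would suppose that \cconsistency\ fails: some $<$-$\sigma$ extension $\asmE \subseteq \A$ of $\abafp$ and some $\varphi \in \LL$ satisfy both $\varphi \in \cn(\asmE)$ and $-\varphi \in \cn(\asmE)$. By the definition of conclusions (Definition~\ref{definition:deduction}) this yields deductions $\asmA \vdash^{R} \varphi$ and $\asmB \vdash^{R'} -\varphi$ with $\asmA, \asmB \subseteq \asmE$ and $R, R' \subseteq \R$.

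Next I would case-split on whether $\asmA$ and $\asmB$ are empty. If $\asmA = \asmB = \emptyset$, the two deductions $\emptyset \vdash^{R} \varphi$ and $\emptyset \vdash^{R'} -\varphi$ contradict \consistency\ directly. If $\asmA \neq \emptyset$, then \negation\ applied to $\asmA \vdash^{R} \varphi$ gives some $\asma \in \asmA$ with $\contr{\asma} = -\varphi$; since $\asma \in \asmA \subseteq \asmE$ and $\asmB \vdash^{R'} -\varphi = \contr{\asma}$ with $\asmB \subseteq \asmE$, the underlying ABA framework $\abaf$ satisfies $\asmE \attacks \asmE$, so by Theorem~\ref{theorem:<-conflict-free iff conflict-free} $\asmE$ is not $<$-conflict-free --- contradicting that $\asmE$, being a $<$-$\sigma$ extension, is $<$-conflict-free. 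The remaining case $\asmB \neq \emptyset$ is symmetric: using the involutivity $-(-\varphi) = \varphi$ of the complement operator, \negation\ applied to $\asmB \vdash^{R'} -\varphi$ produces $\asmb \in \asmB$ with $\contr{\asmb} = \varphi$, and then $\asmA \vdash^{R} \varphi = \contr{\asmb}$ again witnesses $\asmE \attacks \asmE$. Since the cases are exhaustive, no such $\varphi$ exists, and $\abafp$ fulfils \cconsistency.

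I do not expect a genuine obstacle; the argument is essentially a direct combination of the two axioms with the conflict-preservation result Theorem~\ref{theorem:<-conflict-free iff conflict-free}. The only mild points of care are the bookkeeping around the complement operator (checking $-(-\varphi) = \varphi$ so the symmetric case really mirrors the first) and the standing fact that every $<$-$\sigma$ extension is $<$-conflict-free: this is immediate from the definitions for $<$-admissible (hence $<$-preferred and $<$-complete) and $<$-stable extensions, holds for $<$-ideal extensions since they are $<$-admissible, and holds for $<$-well-founded extensions because, by the monotonicity of $\pattacks$ (Lemma~\ref{lemma:attacks on supersets}), the intersection of $<$-conflict-free sets is again $<$-conflict-free.
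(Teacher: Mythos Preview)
Your proposal is correct and follows essentially the same route as the paper's own proof: suppose $\varphi,-\varphi\in\cn(\asmE)$, extract the two deductions, use \consistency\ to rule out both supports being empty, apply \negation\ to a non-empty support to obtain an assumption whose contrary is deduced from within $\asmE$, and invoke Theorem~\ref{theorem:<-conflict-free iff conflict-free} to contradict $<$-conflict-freeness of $\asmE$. Your write-up is slightly more explicit than the paper's (the full case-split, the symmetric case via $-(-\varphi)=\varphi$, and the justification that every $<$-$\sigma$ extension is $<$-conflict-free), but the underlying argument is the same.
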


\begin{proof}
Fix $\sigma$ and let $\asmE$ be a $<$-$\sigma$ extension of $\abafp$. 
Suppose for a contradiction that for some $\varphi \in \LL$ we have $\varphi, -\varphi \in \cn(\asmE)$. 
Then, by \consistency, there must be deductions $\asmA \vdash^{R} \varphi$ and $\asmB \vdash^{R'} -\varphi$ 
with at least one of $\asmA, \asmB \subseteq \asmE$ non-empty. 
Say $\asmA \neq \emptyset$. 
Thus, by \negation, we have $\contr{\asma} = -\varphi$, for some $\asma \in \asmA$. 
Thus, $\asmA \cup \asmB$, and as a consequence $\asmE$, is not conflict-free, 
and hence not $<$-conflict-free (by Theorem~\ref{theorem:<-conflict-free iff conflict-free}). 
This is a contradiction. 
Therefore, for no $\varphi \in \LL$ we have $\varphi, -\varphi \in \cn(\asmE)$. 
Thus, as $\sigma$ was arbitrary, $\abafp$ fulfils \cconsistency\ for any semantics $<$-$\sigma$. 
\end{proof}

This result shows, given the class of \abap\ frameworks with classical negation in the language, 
how to satisfy classical consistency. 
In particular, it suffices to ensure that the non-defeasible part of the \abap\ framework is consistent (\consistency), 
and to accordingly incorporate the negation into the contrary mapping. 
(Such an approach was also pursued by \citet{Toni:2008}.) 
Note, however, that this is a different approach than the one indirectly proposed by \citet{Modgil:Prakken:2013} for flat ABA frameworks as instances of \aspicp, 
where, in particular, \emph{contraposition} on rules was suggested (consult section \ref{sec:WCP} and Appendix B). 
The conditions that we identify are different from those proposed in \cite{Modgil:Prakken:2013}, 
because \aspicp\ employs a \emph{contrariness function} (consult Appendix B) 
which is different from the contrary mapping in ABA/\abap. 

\vspace{1cm}

In this section we saw that non-flat \abap\ frameworks exhibit various desirable properties proposed in the literature. 
It is known that flat ABA frameworks exhibit additional properties in terms of relationship among semantics \cite{Bondarenko:Dung:Kowalski:Toni:1997,Dung:Mancarella:Toni:2007}. 
\aspicp\ too adheres to various principles, such as rationality postulates, whenever contraposition on rules is imposed.\footnote{Equivalently, \emph{transposition}, as proposed by \cite{Caminada:Amgoud:2007}, can be used; we focus on contraposition in this paper.} 
In the next section, we propose a relaxed version of contraposition, called \emph{Weak Contraposition}, 
and in section \ref{sec:Properties of Flat ABA+} show that, subject to Weak Contraposition, 
flat \abap\ frameworks exhibit additional desirable properties too.

\section{Weak Contraposition}
\label{sec:WCP}

We have shown (Proposition \ref{proposition:conflict preservation}) that conflict preservation is always guaranteed in \abap. 
In order to ensure conflict preservation in other approaches, notably \aspicp, contraposition can be utilised, as illustrated next.\footnote{Consult Appendix B and e.g.~\cite{Modgil:Prakken:2014,Modgil:Prakken:2013} for details on \aspicp. For our purposes here these details are unnecessary.}

\begin{example}
\label{example:conflict preservation in ASPIC+}
Let $\asma$, $\asmb$ be ordinary premises, 
with contradictories $\neg\asma$ and $\neg\asmb$, respectively, 
and let $\asmb \to \neg\asma$ be a strict rule. 
\aspicp\ (without contraposition) produces arguments 
$\argA = [ \asma ]$, $\argB = [ \asmb ]$ and $\argB' = [ \argB \to \neg\asma ]$, 
and a single attack $\argB' \attacks \argA$. 
Since there are no defeasible rules, 
$\argB'$ is less preferred than $\argA$ (in symbols, $\argB' \prec \argA$), 
with respect to any standard argument comparison principle employed in \aspicp. 
Hence, the attack fails and there are no defeats, so that $\{ \argA, \argB, \argB' \}$ is undefeated, yet self-attacking. 

Contraposition forces to add, 
for instance, a strict rule $\asma \to \neg\asmb$, 
hence giving an additional argument $\argA' = [ \argA \to \neg\asmb ]$ defeating both $\argB$ and $\argB'$ (under any argument comparison principle). 
As a result, the desirable $\{ \argA, \argA' \}$ is obtained as a unique acceptable extension if contraposition is imposed.  
\end{example}

Formally, the principle of contraposition can be expressed in \abap\ as follows. 

\begin{axiom}
\label{axiom:Contraposition}
$\abafp$ satisfies \textbf{\contraposition} just in case 
for all $\asmA \subseteq \A$, $R \subseteq \R$ and $\asmb \in \A$ 
it holds that 
\begin{quote}
\textbf{if} $\asmA \vdash^R \contr{\asmb}$, 
\end{quote}
\begin{quote}
\textbf{then} for every $\asma \in A$, 
there is $\asmA_{\asma} \vdash^{R_{\asma}} \contr{\asma}$, 
for some $\asmA_{\asma} \subseteq (\asmA \setminus \{ \asma \}) \cup \{ \asmb\}$ and $R_{\asma} \subseteq \R$.
\end{quote}
\end{axiom}

This axiom requires that if an assumption plays a role in deducing the contrary of another assumption, 
then it should be possible for the latter to contribute to a deduction of the contrary of the former assumption too. 

In \abap, \contraposition\ is not required to guarantee conflict preservation (as sanctioned by Proposition \ref{proposition:conflict preservation}), 
but restrictions on \abap\ frameworks may be needed to ensure other properties, 
such as existence of $<$-complete extensions, 
which need not be guaranteed in general: 
the \abap\ framework from Example \ref{example:no <-complete} has no $<$-complete extension, 
because all the singletons $\{ \asma \}$, $\{ \asmb \}$ and $\{ \asmc \}$ are $<$-unattacked, 
but $\{ \asma, \asmb, \asmc \}$ is not $<$-conflict-free. 

We will prove (in section~\ref{sec:Properties of Flat ABA+}) 
that a relaxed version of contraposition, formulated below, 
suffices to guarantee desirable properties, such as, in particular, the so-called Fundamental Lemma 
(see e.g.~\cite[Lemma 10]{Dung:1995}, 
\cite[Theorem 5.7]{Bondarenko:Dung:Kowalski:Toni:1997}) 
and all the properties that follow from it, such as existence of $<$-complete extensions. 

\begin{axiom}
\label{axiom:WCP}
$\abafp$ satisfies \textbf{\wcp} just in case 
for all $\asmA \subseteq \A$, $R \subseteq \R$ and $\asmb \in \A$ it holds that 
\begin{quote}
\textbf{if} $\asmA \vdash^R \contr{\asmb}$ \emph{and} there exists $\asma' \in \asmA$ such that $\asma' < \asmb$, 
\end{quote}
\begin{quote}
\textbf{then}, for \emph{some} $\asma \in \asmA$ which is $\leqslant$-minimal such that $\asma < \asmb$, 
there is $\asmA_{\asma} \vdash^{R_{\asma}} \contr{\asma}$, 
for some $\asmA_{\asma} \subseteq (\asmA \setminus \{ \asma \}) \cup \{ \asmb \}$ and $R_{\asma} \subseteq \R$.
\end{quote}
\end{axiom}

This axiom insists on contraposing only when a deduction involves assumptions less preferred than the one whose contrary is deduced. 
Clearly, satisfaction of \contraposition\ implies satisfaction of \wcp. 
However, satisfaction of \wcp\ does not imply satisfaction of \contraposition, as explained next. 

Consider the \abap\ framework $\F^+_C$ from Example \ref{example:Carl}. 
There is a deduction $\{ \asma, \asmc \} \vdash^{\{ \leave \ot \asma, \asmc \}} \leave$,  where $\leave = \contr{\asmb}$ and $\asma < \asmb$, 
which satisfies the antecedent of \wcp. 
Nonetheless, there is also a deduction $\{ \asmb, \asmc \} \vdash^{\{ \stay \ot \asmb, \asmc \}} \stay$, 
where $\stay = \contr{\asma}$ and $\asmb \not< \asma, ~ \asmc \not< \asma$, 
which satisfies the consequent of \wcp. 
As there are no other deductions that satisfy the antecedent of \wcp, $\F^+_C$ satisfies this axiom.  
However, $\F^+_C$ violates \contraposition, because, for instance, 
the existence of the deduction $\{ \asma, \asmc \} \vdash^{\{ \leave \ot \asma, \asmc \}} \leave$ satisfies the antecedent of \contraposition, 
but there is no deduction $\asmS \vdash^R \contr{\asmc}$ with $\asmS \subseteq \{ \asma, \asmb \}$, 
which is a violation of the consequent of \contraposition. 

In essence, satisfaction of \wcp\ ensures that, 
given a $\subseteq$-minimally non-$<$-conflict-free set $\asmS$ of assumptions, 
some least preferred (i.e.~$\leqslant$-minimal) assumption $\asma \in \asmS$ is $<$-attacked (via normal attack) by the rest of the set (i.e.~$\asmS \setminus \{ \asma \}$). 
Thus, \wcp\ has a similar effect as the \emph{inconsistency resolving} property recently proposed by \cite[Definition 8]{Dung:2016-COMMA}. 
However, the latter does not take preferences into account. 
Formal correspondence between \wcp\ and Dung's inconsistency resolving property is a line of future work. 

Note also that any \abap\ framework $\abafe$ (i.e.~when preference information is absent) automatically satisfies \wcp, without forcing any new rules. 
This is a welcome feature, because, as discussed in \cite{Baroni:Giacomin:Liao:2015-IJCAI}, 
standard contraposition together with general contrariness mappings, 
notably the one in \aspicp, may lead to certain unintended behaviours when preferences are not present. 

In the next section we show that \wcp\ allows flat \abap\ frameworks to retain the relationships among semantics known to hold among semantics of flat ABA frameworks, 
which then allows to extend Proposition \ref{proposition:maximal elements} to the rest of semantics considered in this paper.

\section{Properties of Flat \abap\ Frameworks Satisfying \wcp}
\label{sec:Properties of Flat ABA+}

In this section, unless stated otherwise, 
we assume as given a flat \abap\ framework $\abafp$ that satisfies \wcp.

We first prove that Weak Contraposition suffices for the Fundamental Lemma to hold in flat \abap:

\begin{lemma}[Fundamental Lemma] 
\label{lemma:Fundamental}
Let $\asmS \subseteq \A$ be $<$-admissible and assume that $\asmS$ $<$-defends $\{ \asma \}, \{ \asma' \} \subseteq \A$. 
Then $\asmS \cup \{ \asma \}$ is $<$-admissible and $<$-defends $\{ \asma' \}$.
\end{lemma}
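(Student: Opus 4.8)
The plan is to mimic the classical proof of Dung's Fundamental Lemma, adapting each step to the $<$-attack relation and exploiting \wcp\ at the single place where reasoning about conflicts can go wrong. First I would establish that $\asmS \cup \{\asma\}$ is closed: since the framework is flat, every subset of $\A$ is closed, so this is immediate. Next I would show $\asmS \cup \{\asma\}$ is $<$-conflict-free. Suppose not; then $\asmS \cup \{\asma\} \pattacks \asmS \cup \{\asma\}$, and since $\asmS$ is $<$-conflict-free on its own, any such $<$-attack must involve $\asma$ on at least one side. The delicate point is that a $<$-attack can be either normal or reverse, so I would split into the cases where the $<$-attack is witnessed by a deduction of $\contr{\asmx}$ for some $\asmx$ in the attacked set, tracing whether the supporting set lies in $\asmS \cup \{\asma\}$ and whether the relevant preference inequality holds. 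In the end each case should yield a closed set $\asmB \subseteq \A$ (e.g. $\asmS \cup \{\asma\}$ or a singleton from it) with $\asmB \pattacks \{\asma\}$ or $\asmB \pattacks \asmS$; since $\asmS$ $<$-defends both $\{\asma\}$ and itself (admissibility gives self-defence), $\asmS \pattacks \asmB$, which together with $\asmB \subseteq \asmS \cup \{\asma\}$ would contradict $<$-conflict-freeness of $\asmS$ --- unless the conflict is "purely internal" to $\{\asma\}$, i.e. a self-$<$-attack of $\{\asma\}$, which is exactly the configuration where \wcp\ is needed.

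For the $<$-conflict-freeness step I would argue: if $\{\asma\}$ $<$-attacks $\{\asma\}$, then either there is a deduction $\{\asma\} \vdash^R \contr{\asma}$ with $\asma \not< \asma$ (a normal self-attack), or one with $\asma < \asma$ (a reverse self-attack). The reverse case is impossible since $<$ is strict (irreflexive by construction of the strict counterpart). In the normal case, $\{\asma\}$ $<$-attacks $\{\asma\}$ is itself a closed set $<$-attacking $\{\asma\}$, so by $<$-defence $\asmS \pattacks \{\asma\}$, and then by $<$-defence of $\asmS$ for itself... actually more carefully, $\{\asma\}$ $\pattacks \{\asma\}$ gives $\asmS \pattacks \{\asma\}$, but I need a contradiction with $\asmS \cup \{\asma\}$ being the object of interest; the cleanest route is that $\asmS \pattacks \{\asma\}$ and $\{\asma\} \subseteq \asmS \cup \{\asma\}$ only bites once we have conflict-freeness, so I'd instead observe directly that a normal self-attack of $\{\asma\}$ means $\{\asma\}$ is not $<$-conflict-free, hence (Theorem~\ref{theorem:<-conflict-free iff conflict-free}) $\{\asma\}$ is not conflict-free in ABA, so $\asma$ deduces $\contr{\asma}$; but then every closed $\asmB$ with $\asmB \pattacks \{\asma\}$ includes $\{\asma\}$ itself as such a $\asmB$, forcing $\asmS \pattacks \{\asma\}$ and, since then some subset of $\asmS$ deduces $\contr{\asmx}$ for $\asmx \in \{\asma\}$ appropriately, eventually $\asmS \cup \{\asma\}$ collapses --- this is the bookkeeping I'd need to get right. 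The cross terms (deduction of $\contr{\asms}$ for $\asms \in \asmS$ using $\asma$, or $\contr{\asma}$ using part of $\asmS$) are handled symmetrically by invoking $<$-defence of $\asmS$ for $\{\asma\}$ and for itself.

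Finally, for the $<$-defence clause: let $\asmB \subseteq \A$ be closed with $\asmB \pattacks \{\asma'\}$. Since $\asmS$ $<$-defends $\{\asma'\}$, we get $\asmS \pattacks \asmB$, and by monotonicity of $\pattacks$ (Lemma~\ref{lemma:attacks on supersets}), $\asmS \cup \{\asma\} \pattacks \asmB$. Thus $\asmS \cup \{\asma\}$ $<$-defends $\{\asma'\}$. In particular, taking $\asma' = \asma$, we also get that $\asmS \cup \{\asma\}$ $<$-defends $\{\asma\}$, and since $<$-defence is "pointwise" over the elements of a set and $\asmS$ $<$-defends itself, $\asmS \cup \{\asma\}$ $<$-defends all of $\asmS \cup \{\asma\}$, completing $<$-admissibility. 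The main obstacle I anticipate is the $<$-conflict-freeness step: the presence of both normal and reverse attacks means a naive transcription of Dung's argument does not go through, and it is precisely here that \wcp\ must be invoked --- specifically, to rule out a $\subseteq$-minimal non-$<$-conflict-free subset of $\asmS \cup \{\asma\}$, one uses \wcp\ to extract a $\leqslant$-minimal offending assumption that is normally $<$-attacked by the remainder, and then closure plus $<$-defence of $\asmS$ yields the contradiction. Getting the interaction between $\leqslant$-minimality, the direction of the reversed attack, and membership in $\asmS$ versus $\{\asma\}$ exactly right will be the crux.
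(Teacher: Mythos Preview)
Your outline has a genuine gap in the self-defence step, and it stems from a false structural claim: $<$-defence is \emph{not} pointwise. You write that ``$<$-defence is `pointwise' over the elements of a set,'' and use this to conclude that $\asmS \cup \{\asma\}$ $<$-defends itself once it $<$-defends each singleton. But $\asmB \pattacks \asmS \cup \{\asma\}$ does \emph{not} imply $\asmB \pattacks \{\asms\}$ for some $\asms \in \asmS \cup \{\asma\}$: a reverse attack from $\asmB$ on $\asmS \cup \{\asma\}$ is witnessed by a deduction $\asmS' \cup \{\asma\} \vdash^{R} \contr{\asmb}$ (with some supporting assumption $< \asmb$), and this deduction need not factor through any singleton. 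Example~\ref{example:no <-complete} shows exactly this: $\{\asmb\} \pattacks \{\asma,\asmc\}$ via reverse attack, yet $\{\asmb\}$ $<$-attacks neither $\{\asma\}$ nor $\{\asmc\}$. So from ``$\asmS$ $<$-defends each $\{\asms\}$ and $\{\asma\}$'' you cannot infer ``$\asmS$ (or $\asmS \cup \{\asma\}$) $<$-defends $\asmS \cup \{\asma\}$.''

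The paper's proof handles self-defence by the \emph{same} iterative WCP machinery you allude to for conflict-freeness: given $\asmB \pattacks \asmS \cup \{\asma\}$, the only case not already covered by $<$-admissibility of $\asmS$ and $<$-defence of $\{\asma\}$ is a reverse attack witnessed by $\asmS' \cup \{\asma\} \vdash \contr{\asmb_1}$ with a $\leqslant$-minimal $\asms' < \asmb_1$. WCP then yields a \emph{normal} attack $\asmS_1 \pattacks \{\asms'\}$ with $\asmb_1 \in \asmS_1$; $<$-defence of $\asmS$ gives $\asmS \pattacks \asmS_1$, which (by $<$-conflict-freeness of $\asmS \cup \{\asma\}$, proved first) is either normal on $\asmb_1$---done---or again reverse, and one iterates. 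Termination comes from the fact that each WCP step trades in a strictly $<$-smaller assumption while the supporting sets stay inside a fixed finite set, so the chain of ``reverse attack $\to$ WCP $\to$ normal attack $\to$ counter-attack $\to$ reverse attack'' cannot go on forever. The same descent is what actually drives the conflict-freeness argument: a single invocation of WCP, as you sketch, only gets you one step and does not yet produce a contradiction; you need the iteration to force an eventual \emph{normal} $<$-attack by $\asmS$ on something inside $\asmS \cup \{\asma\}$.
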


Note that, without Weak Contraposition, the Fundamental Lemma need not hold: 
in Example \ref{example:no <-complete}, 
$\abafp$ is flat but violates \wcp, 
and $\{ \asmb, \asmc \}$ is $<$-admissible and $<$-defends $\{ \asma \}$ 
(because $\{ \asma \}$ is $<$-unattacked), 
but $\{ \asma, \asmb, \asmc \}$ is not $<$-conflict-free and thus not $<$-admissible. 

Lemma \ref{lemma:Fundamental} implies that, subject to Weak Contraposition, the following additional properties of \abap\ semantics hold for flat \abap\ frameworks, 
mirroring the properties held by flat ABA (as well as AA) frameworks. 

\begin{theorem}
\label{theorem:flat ABA+ properties}
Let $\F = \abafp$. 
\begin{enumerate}[label=(\roman*)]
\item $\F$ has a $<$-preferred extension. 
\item Every $<$-preferred extension of $\F$ is $<$-complete. 
\item $\F$ has a $<$-complete extension.
\item $\F$ has a unique $<$-grounded extension, which is moreover $<$-complete. 
\item $\F$ has a unique $<$-ideal extension, which is moreover $<$-complete. 
\end{enumerate}
\end{theorem}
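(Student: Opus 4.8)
The plan is to derive Theorem~\ref{theorem:flat ABA+ properties} as a (by now) routine consequence of the Fundamental Lemma (Lemma~\ref{lemma:Fundamental}), exactly mirroring the classical development for AA and flat ABA frameworks in \cite{Dung:1995,Bondarenko:Dung:Kowalski:Toni:1997,Dung:Mancarella:Toni:2007}. The key enabling observation, already available to us in this section, is that we are working with a \emph{flat} \abap\ framework, so closure is automatic: every $\asmA \subseteq \A$ is closed, and hence $<$-admissibility reduces to being $<$-conflict-free and $<$-defending itself, $<$-defence quantifies over \emph{all} $\asmB \subseteq \A$ with $\asmB \pattacks \cdot$ (no closure side-condition to track), and $\pattacks$ is monotone on supersets by Lemma~\ref{lemma:attacks on supersets}. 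Throughout I would also use Theorem~\ref{theorem:ABA+ properties}(i) (every $<$-admissible set extends to a $<$-preferred one), which was proved for generic \abap\ frameworks and so applies here too.

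For (i): since $\abaf$ is flat, $\emptyset$ is closed and trivially $<$-conflict-free and $<$-defends itself, so $\emptyset$ is $<$-admissible; by Theorem~\ref{theorem:ABA+ properties}(i) it extends to a $<$-preferred extension, which therefore exists. For (ii): let $\asmE$ be $<$-preferred, hence $<$-admissible; if $\asmE$ $<$-defends some $\{\asma\}$, then by Lemma~\ref{lemma:Fundamental} (instantiated with $\asma' = \asma$) the set $\asmE \cup \{\asma\}$ is $<$-admissible, so by $\subseteq$-maximality $\asma \in \asmE$; doing this for every defended assumption and using footnote~\ref{footnote:defence} (the pointwise characterisation of $<$-defence, which carries over verbatim since $<$-defence is defined pointwise-compatibly) shows $\asmE$ contains every set it $<$-defends, i.e.\ $\asmE$ is $<$-complete. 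Part (iii) is then immediate from (i) and (ii).

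For (iv): define the operator $\F_{<}: \wp(\A) \to \wp(\A)$ by $\F_{<}(\asmE) = \{\asma \in \A : \asmE \text{ $<$-defends } \{\asma\}\}$; by Lemma~\ref{lemma:attacks on supersets} this operator is monotone with respect to $\subseteq$, so by Knaster--Tarski it has a least fixed point $\asmG$, which I would show equals the intersection of all $<$-complete extensions (the $<$-complete extensions are precisely the $<$-conflict-free fixed points of $\F_{<}$, and $\asmG \subseteq$ any of them by least-fixed-point induction, while $\asmG$ itself is $<$-conflict-free and $<$-complete — $<$-conflict-freeness of $\asmG$ follows by the usual iterative argument using the Fundamental Lemma to keep each stage $<$-admissible, hence $<$-conflict-free by Theorem~\ref{theorem:<-conflict-free iff conflict-free} or directly). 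Hence $\asmG$ is the unique $<$-grounded extension and it is $<$-complete. For (v): the collection of $<$-admissible sets contained in every $<$-preferred extension is nonempty ($\emptyset$ is in it), and I would show it is closed under the relevant union/$<$-defence operation so that it has a greatest element — the key point being that if $\asmE_1, \asmE_2$ are both $<$-admissible and both contained in all $<$-preferred extensions, then $\asmE_1 \cup \asmE_2$ is $<$-conflict-free (being a subset of some $<$-preferred, hence $<$-conflict-free, extension — here (i) is needed to know $<$-preferred extensions exist) and $<$-defends itself, hence $<$-admissible, and again contained in all $<$-preferred extensions; directedness plus an application of the Fundamental Lemma / Zorn then yields a unique maximal such set, the $<$-ideal extension, and a final Fundamental-Lemma argument shows it is a fixed point of $\F_{<}$, i.e.\ $<$-complete.

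The main obstacle is part (v), and specifically verifying that the family of "$<$-admissible sets contained in all $<$-preferred extensions" is directed (closed under pairwise union up to $<$-admissibility) so that a \emph{unique} maximal element exists rather than several incomparable ones — this is exactly the step where Weak Contraposition is doing real work through Lemma~\ref{lemma:Fundamental}, since in Example~\ref{example:no <-complete} (which violates \wcp) the analogous family already fails to be directed and $<$-ideal uniqueness/$<$-completeness breaks. A secondary, more bookkeeping-heavy obstacle is confirming that the pointwise version of $<$-defence (footnote~\ref{footnote:defence}-style) is legitimate in \abap, since $<$-defence as stated in Definition~\ref{definition:<-conflict-freeness} quantifies over closed $\asmB$; flatness makes this harmless, but the reductions in (ii), (iv) and (v) all lean on it and it should be stated explicitly.
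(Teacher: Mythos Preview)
Your proposal is correct and follows essentially the same route as the paper: (i) via $\emptyset$ being $<$-admissible in a flat framework and Theorem~\ref{theorem:ABA+ properties}, (ii) directly from the Fundamental Lemma, (iii) from (i)+(ii), (iv) via Knaster--Tarski on the monotone $<$-defence operator, and (v) by showing the $<$-admissible subsets of the intersection of all $<$-preferred extensions are closed under union (using the Fundamental Lemma) and hence have a unique maximal element, which is then $<$-complete again by the Fundamental Lemma. One small caution on your secondary obstacle: the full pointwise equivalence of $<$-defence does \emph{not} carry over verbatim to \abap\ (reverse $<$-attacks on a set need not decompose into $<$-attacks on singletons), but only the direction ``$\asmE$ $<$-defends $\asmA \Rightarrow \asmE$ $<$-defends each $\{\asma\}\subseteq\asmA$'' is needed in (ii), (iv), (v), and that direction holds by Lemma~\ref{lemma:attacks on supersets}---this is also how the paper uses it.
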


Observe that if an \abap\ framework is non-flat, 
satisfaction of \wcp\ does not guarantee the properties above, since \abap\ conservatively extends ABA (Theorem \ref{theorem:ABA+ extends ABA}) and these properties can be falsified for non-flat ABA frameworks \cite{Bondarenko:Dung:Kowalski:Toni:1997}. 

Theorem \ref{theorem:flat ABA+ properties} implies that flat \abap\ frameworks satisfying Weak Contraposition 
fulfil \maximal\ not only for $<$-complete, $<$-stable and $<$-well-founded semantics (Proposition \ref{proposition:maximal elements}), 
but also for $<$-preferred and $<$-ideal semantics:

\begin{corollary}
\label{corollary:maximal elements for <-preferred}
If $\abafp$ is flat and satisfies \wcp, 
then it fulfils \maximal\ for $<$-preferred and $<$-ideal semantics.
\end{corollary}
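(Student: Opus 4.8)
The plan is to chain two results we already have: the inclusion of $M$ in every $<$-complete extension (Proposition \ref{proposition:maximal elements}), and the fact that, for flat frameworks satisfying \wcp, both $<$-preferred and $<$-ideal extensions are $<$-complete (Theorem \ref{theorem:flat ABA+ properties}). So I would first fix an \abap\ framework \abafp\ that is flat and satisfies \wcp, and assume the standing hypotheses of \maximal, namely that $\leqslant$ is total and that $M = \{\asma \in \A : \nexists \asmb \in \A \text{ with } \asma < \asmb\}$ is closed and $<$-conflict-free. Under exactly these hypotheses, Proposition \ref{proposition:maximal elements} tells us that every $<$-complete extension $\asmE$ of \abafp\ satisfies $M \subseteq \asmE$.

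Next I would handle $<$-preferred semantics. Let $\asmE$ be an arbitrary $<$-preferred extension of \abafp. Since \abafp\ is flat and satisfies \wcp, Theorem \ref{theorem:flat ABA+ properties}(ii) gives that $\asmE$ is $<$-complete, whence $M \subseteq \asmE$ by the observation above. As $\asmE$ was arbitrary, this establishes \maximal\ for $<$-preferred semantics.

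For $<$-ideal semantics I would argue analogously: by Theorem \ref{theorem:flat ABA+ properties}(v), \abafp\ has a (unique) $<$-ideal extension $\asmE$, which is moreover $<$-complete; hence $M \subseteq \asmE$ again by Proposition \ref{proposition:maximal elements}, establishing \maximal\ for $<$-ideal semantics.

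There is essentially no technical obstacle here — this is a corollary whose content is entirely the bookkeeping of combining the two cited results under a shared set of hypotheses. The only point to be careful about is that the hypotheses attached to \maximal\ (totality of $\leqslant$, and $M$ closed and $<$-conflict-free) are the same in the statements of Proposition \ref{proposition:maximal elements} and of this corollary, so no extra assumptions need to be introduced; and that Theorem \ref{theorem:flat ABA+ properties} is invoked only for flat frameworks satisfying \wcp, which is exactly the hypothesis of the corollary.
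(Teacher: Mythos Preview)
Your proposal is correct and follows exactly the paper's own proof: invoke Theorem~\ref{theorem:flat ABA+ properties}(ii, v) to conclude that $<$-preferred and $<$-ideal extensions are $<$-complete (under flatness and \wcp), and then apply Proposition~\ref{proposition:maximal elements}. There is nothing to add.
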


\begin{proof}
By Theorem \ref{theorem:flat ABA+ properties}(ii, v), $<$-preferred and $<$-ideal extensions are $<$-complete for $\abafp$ flat and satisfying \wcp. 
The claim thus follows from Proposition~\ref{proposition:maximal elements}.
\end{proof}

Investigating whether the class of flat \abap\ frameworks satisfying \wcp\ is the smallest class for which the results (Lemma \ref{lemma:Fundamental}, Theorem \ref{theorem:flat ABA+ properties}, Corollary \ref{corollary:maximal elements for <-preferred}) established in this section hold is left for future work.

\section{Comparison}
\label{sec:Comparison}

In this section we compare \abap\ to formalisms of argumentation with preferences most relevant to \abap. 
In particular, we focus on: 
PAFs \cite{Amgoud:Vesic:2014,Amgoud:Vesic:2010,Amgoud:Vesic:2011} as the only other argumentation formalism to use attack reversal (to the best of our knowledge); 
\emph{ABA Equipped with Preferences} (p\_ABA) \cite{Wakaki:2014} as the only other formalism extending ABA with preferences; 
and \aspicp~\cite{Modgil:Prakken:2014,Caminada:Amgoud:2007,Prakken:2010,Modgil:Prakken:2013} as a well known structured argumentation formalism that has been shown to capture flat ABA frameworks as instances. 
Unless stated otherwise, in this section ABA/\abap\ frameworks are assumed to be flat. 
This restriction is imposed because p\_ABA extends (with preferences) flat ABA frameworks; 
\aspicp\ captures flat ABA frameworks; 
and PAFs is an AA-based formalism and AA frameworks can both be instantiated with, and seen as instances of, flat ABA frameworks \cite{Dung:Mancarella:Toni:2007,Toni:2012}.

\subsection{PAFs}
\label{subsec:PAFs}

A Preference-based Argumentation Framework (PAF) $\PAF$ consists of an AA framework $\AF$ equipped with a (partial) preorder $\preccurlyeq$ over arguments, 
which is used to generate a defeat relation, denoted by $\defeats$ in this paper, by reversing attacks from less preferred arguments. 
Formally, given a PAF $\PAF$, the \emph{repaired framework} \cite{Amgoud:Vesic:2014} is an AA framework $\DAF$, 
where $\argA \defeats \argB$ iff either $\argA \attacks \argB$ and $\argA \nprec \argB$, 
or $\argB \attacks \argA$ and $\argB \prec \argA$. 
A set $\asmE \subseteq \Args$ is a $\sigma$ extension of $\PAF$ iff it is a $\sigma$ extension of $\DAF$. 

Following the way ABA frameworks capture AA frameworks \cite{Toni:2012}, we next show how \abap\ frameworks readily capture PAFs. 
In what follows, we assume a PAF $\PAF$ as given, unless stated otherwise. 
We map each argument $\argA \in \Args$ into an assumption $\argA \in \A$, 
together with a new symbol $\contr{\argA}$ for the contrary, 
and map each attack $\argA \attacks \argB$ into a rule $\contr{\argB} \ot \argA$. 
The preference ordering $\preccurlyeq$ is transferred as is to constitute $\leqslant$.  
Formally:

\begin{definition}
\label{definition:ABA+ corresponding to PAF}
The \abap\ framework corresponding to PAF \PAF\ is \abafp\ with: 
\begin{itemize}
\item $\LL = \Args \cup \{ \contr{\argA}~:~\argA \in \Args \}$, where $\contr{\argA} \not\in \Args$; 
\item $\R = \{ \contr{\argB} \ot \argA~:~\argA \attacks \argB \}$; 
\item $\A = \Args$; 
\item each $\argA \in \A$ has contrary $\contr{\argA}$; 
\item $\leqslant \, = \, \preccurlyeq$.
\end{itemize}
\end{definition}

Note that the \abap\ framework corresponding to \PAF\ is always flat. 

Trivially, attacks in a given PAF are in a one-to-one correspondence with $<$-attacks in the corresponding \abap\ framework, as follows:

\begin{lemma}
\label{lemma:PAF defeat iff <-attack}
For $\argA, \argB \in \Args$, 
$\argA \defeats \argB$ iff $\{ \argA \} \pattacks \{ \argB \}$.
\end{lemma}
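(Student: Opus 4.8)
The plan is to unfold both definitions and check the two implications separately, using the fact that the corresponding \abap\ framework has a particularly simple rule structure: every rule has the form $\contr{\argB} \ot \argA$ with a single atom $\argA$ in the body, coming from exactly one attack $\argA \attacks \argB$ in the PAF. First I would record the key consequence of this: for atoms $\argA, \argB \in \Args$, a deduction $\{\argA\} \vdash^R \contr{\argB}$ exists (necessarily with $R = \{\contr{\argB} \ot \argA\}$) if and only if $\argA \attacks \argB$ in the PAF; moreover no deduction of $\contr{\argB}$ can be supported by the empty set, and the only assumption that can appear in the support of such a deduction is $\argA$ itself. This reduces Definition~\ref{definition:<-attack} applied to the singletons $\asmA = \{\argA\}$, $\asmB = \{\argB\}$ to a clean disjunction.

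Next I would carry out the forward direction. Assume $\argA \defeats \argB$, i.e.\ either (a) $\argA \attacks \argB$ and $\argA \nprec \argB$, or (b) $\argB \attacks \argA$ and $\argB \prec \argA$. In case (a), by the observation above there is a deduction $\{\argA\} \vdash^R \contr{\argB}$ supported by $\asmA' = \{\argA\} \subseteq \{\argA\}$; since $\leqslant \,=\, \preccurlyeq$ and $\argA \nprec \argB$ means $\argA \not< \argB$, there is no $\asma' \in \asmA'$ with $\asma' < \argB$, so the first bullet of Definition~\ref{definition:<-attack} is satisfied and $\{\argA\} \pattacks \{\argB\}$ via a normal attack. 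In case (b), there is a deduction $\{\argB\} \vdash^R \contr{\argA}$ supported by $\asmB' = \{\argB\} \subseteq \{\argB\}$, and $\argB \prec \argA$ gives $\asmb' = \argB \in \asmB'$ with $\asmb' < \argA$, so the second bullet of Definition~\ref{definition:<-attack} is satisfied and $\{\argA\} \pattacks \{\argB\}$ via a reverse attack. Either way $\{\argA\} \pattacks \{\argB\}$.

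For the converse, assume $\{\argA\} \pattacks \{\argB\}$ and split on which bullet of Definition~\ref{definition:<-attack} holds. If the first holds, there is $\asmA' \subseteq \{\argA\}$ with $\asmA' \vdash^R \contr{\argB}$ and no $\asma' \in \asmA'$ with $\asma' < \argB$. Since $\contr{\argB}$ is not the head of any rule with empty body and not an assumption, $\asmA'$ cannot be empty, so $\asmA' = \{\argA\}$, whence $\argA \attacks \argB$ in the PAF (from the rule used) and $\argA \not< \argB$, i.e.\ $\argA \nprec \argB$; so $\argA \defeats \argB$ by clause (a). If the second bullet holds, there is $\asmB' \subseteq \{\argB\}$ with $\asmB' \vdash^R \contr{\argA}$ and some $\asmb' \in \asmB'$ with $\asmb' < \argA$; again $\asmB'$ cannot be empty, so $\asmB' = \{\argB\}$, the rule used witnesses $\argB \attacks \argA$, and $\asmb' = \argB$ gives $\argB \prec \argA$; so $\argA \defeats \argB$ by clause (b).

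The only place requiring a small amount of care — and what I would flag as the main (minor) obstacle — is the claim that a deduction of $\contr{\argB}$ can never be supported by the empty set and can only use $\argA$ in its support: this relies on the syntactic fact, guaranteed by Definition~\ref{definition:ABA+ corresponding to PAF}, that every rule has a nonempty body consisting of a single argument-atom and that the contrary symbols $\contr{\argA}$ are fresh (so they never appear as rule heads and are not assumptions). Once that structural lemma about deductions in the corresponding \abap\ framework is isolated and stated, the rest is a routine case analysis matching the two bullets of $\defeats$ with the two bullets of $\pattacks$; no further subtlety is involved.
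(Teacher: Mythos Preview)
Your proposal is correct and follows essentially the same approach as the paper: both arguments unfold the definitions of $\defeats$ and $\pattacks$ and match the two clauses of each via the rule correspondence $\contr{\argB} \ot \argA \in \R \Leftrightarrow \argA \attacks \argB$. Your version is more explicit than the paper's terse chain of equivalences, in particular in ruling out empty supports for deductions of contraries, but this is additional care rather than a different idea.
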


\begin{proof}
$\argA \defeats \argB$ 
$\Leftrightarrow$ 
(i) either $\argA \attacks \argB$ and $\argA \not< \argB$
(ii) or $\argB \attacks \argA$ and $\argB < \argA$ 
$\Leftrightarrow$ 
(i) either $\exists~\contr{\argB} \ot \argA \in \R$ and $\argA \not< \argB$ 
(ii) or $\exists~\contr{\argA} \ot \argB \in \R$ and $\argB < \argA$ 
$\Leftrightarrow$ 
$\{ \argA \} \pattacks \{ \argB \}$.
\end{proof}

From the construction of the \abap\ framework corresponding to \PAF\ and Lemma \ref{lemma:PAF defeat iff <-attack}, 
we obtain the following correspondence result, which says that, 
under any semantics $\sigma$, every PAF is an instance of \abap. 

\begin{theorem}
\label{theorem:PAF to ABA+}
Let $\abafp$ be the corresponding \abap\ framework to \PAF. 
Then $\asmE$ is a $\sigma$ extension of $\PAF$ iff $\asmE$ is a $<$-$\sigma$ extension of $\abafp$.
\end{theorem}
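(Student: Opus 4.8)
The plan is to route the proof through the repaired framework $\DAF$ of \PAF: since by definition the $\sigma$ extensions of \PAF\ are exactly the $\sigma$ extensions of $\DAF$, it suffices to show that the $<$-$\sigma$ extensions of the corresponding \abap\ framework \abafp\ coincide with the $\sigma$ extensions of $\DAF$, viewing $\asmE \subseteq \Args = \A$ simultaneously as a set of arguments and as a set of assumptions. Because \abafp\ is flat, the closure conditions in the definition of the $<$-$\sigma$ extensions are vacuous and the clause ``for all closed $\asmB$'' in the definition of $<$-defence (Definition~\ref{definition:<-conflict-freeness}) ranges over all subsets of $\A$; so it is enough to match $<$-conflict-freeness with conflict-freeness in $\DAF$ and $<$-defence with defence in $\DAF$, after which the derived semantics ($<$-preferred, $<$-complete, $<$-stable, $<$-well-founded, $<$-ideal) coincide with their $\DAF$ counterparts because they are built by literally the same clauses on the two sides.

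First I would record the shape of deductions in \abafp. By Definition~\ref{definition:ABA+ corresponding to PAF} every rule has the form $\contr{\argB} \ot \argA$ with $\argA \attacks \argB$, so every rule head is a contrary, every rule body is a single assumption, and no rule has an empty body; hence a deduction of a contrary $\contr{\argB}$ is a depth-one tree built from one rule $\contr{\argB} \ot \argA$ with $\argA$ as its unique assumption-leaf, and no contrary is deducible from $\emptyset$. Feeding this into Definition~\ref{definition:<-attack} I would obtain, for every $\argB \in \Args$ and every $\asmB \subseteq \Args$, that $\asmB \pattacks \{\argB\}$ iff some $\argA \in \asmB$ has $\{\argA\} \pattacks \{\argB\}$, i.e.\ (by Lemma~\ref{lemma:PAF defeat iff <-attack}) iff some $\argA \in \asmB$ has $\argA \defeats \argB$ --- that is, $\asmB \pattacks \{\argB\}$ exactly when $\asmB$ attacks $\argB$ in $\DAF$. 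What makes this clean is that the attacked side is a singleton, so the reverse-attack clause of Definition~\ref{definition:<-attack} cannot be witnessed by a proper subset of it.

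With this in hand I would read off the two matchings. For conflict-freeness: $\asmE$ is $<$-conflict-free iff $\asmE$ is conflict-free in the underlying ABA framework (Theorem~\ref{theorem:<-conflict-free iff conflict-free}), and in \abafp\ one has $\asmE \attacks \asmE$ iff $\argX \attacks \argY$ for some $\argX, \argY \in \asmE$ (by the single-rule shape of deductions), which holds iff there is a defeat between two elements of $\asmE$ (every attack yields a defeat in one of the two directions, and every defeat comes from an attack), i.e.\ iff $\asmE$ is not conflict-free in $\DAF$. For defence: $\asmE$ $<$-defends $\{\argA\}$ iff $\asmE \pattacks \asmB$ for every $\asmB$ with $\asmB \pattacks \{\argA\}$; by the previous paragraph the hypothesis on $\asmB$ says ``some $\argY \in \asmB$ has $\argY \defeats \argA$'', and the conclusion --- once one notes it suffices to attack $\asmB$ through a single pair by monotonicity of $\pattacks$ (Lemma~\ref{lemma:attacks on supersets}) and again unfolds $\pattacks$ against the singleton $\{\argY\}$ --- says ``some $\argZ \in \asmE$ has $\argZ \defeats \argY$'', which is precisely defence of $\argA$ in $\DAF$. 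Hence $\asmE$ is $<$-admissible iff admissible in $\DAF$, and the derived semantics follow, giving $\asmE$ is a $<$-$\sigma$ extension of \abafp\ iff a $\sigma$ extension of $\DAF$, hence of \PAF.

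The main obstacle is the defence clause. At the set level $\pattacks$ is in fact slightly coarser than the set-lifting of $\defeats$ (a reverse $<$-attack $\asmA \pattacks \asmB$ can be witnessed using an assumption of $\asmB$ distinct from the one that actually deduces a contrary), so one cannot simply assert ``$\asmE \pattacks \asmB$ iff $\asmE$ defeats $\asmB$ in $\DAF$'' for arbitrary $\asmB$. The fix, as sketched, is to keep the ``attacked'' side a singleton throughout --- unfold $<$-defence of $\{\argA\}$ down to singleton attacks on both sides before invoking Lemma~\ref{lemma:PAF defeat iff <-attack} --- and to supply $\pattacks$ on the ``attacking'' side only via monotonicity from a single defeating pair. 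Everything else (flatness trivialising the closure conditions; the derived semantics being the same clauses over $\DAF$) is routine bookkeeping.
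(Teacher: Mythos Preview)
Your proposal is correct and follows essentially the same approach as the paper: both arguments route through Lemma~\ref{lemma:PAF defeat iff <-attack} together with the observation that in the corresponding \abap\ framework every deduction of a contrary uses a single rule with a single assumption, so that $<$-attacks localise to singletons. The only difference is organisational: the paper verifies each semantics $\sigma$ case by case, whereas you establish the match of conflict-freeness and defence once and then let all the derived semantics follow uniformly, which is a tidier packaging of the same content.
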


Thus, \abap\ can be seen to generalise PAFs, 
similarly to how ABA generalises AA~\cite{Toni:2012}. 

It is known that flat ABA frameworks are instances of AA frameworks~\cite{Dung:Mancarella:Toni:2007}, 
choosing arguments of the form $\argA:~\asmA \vdash \varphi$ 
(where $\asmA \vdash \varphi$ means there is a deduction $\asmA \vdash^R \varphi$ for some $R \subseteq \R$) 
and attacks $\argA \attacks \argB$, 
for arguments $\argA:~\asmA \vdash \varphi$ and $\argB:~\asmB \vdash \psi$,
whenever $\varphi = \contr{\asmb}$ for some $\asmb \in \asmB$. 
Let us see whether flat \abap\ frameworks are similarly instances of PAFs. 
In order to do this, 
given a preference relation $\leqslant$ on $\A$, 
we can try to define an ordering $\preccurlyeq$ over arguments. 
For example, we can utilise the following orderings taken from \cite{Modgil:Prakken:2013,Young:Modgil:Rodrigues:2016}: 

\begin{itemize}
\item $\argA \argeli \argB$ if $\exists \asma \in \asmA$ such that $\forall \asmb \in \asmB$ we find $\asma < \asmb$; 
\item $\argA \argdeli \argB$ if $\exists \asma \in \asmA \setminus \asmB$ such that $\forall \asmb \in \asmB \setminus \asmA$ we find $\asma < \asmb$; 
\item $\argA \argdemq \argB$ if $\forall \asma \in \asmA$ we find $\asmb \in \asmB$ with $\asma \leqslant \asmb$; 
\item $\argA \argdem \argB$ if $\argA \argdemq \argB$ and $\argB \argndemq \argA$. 
\end{itemize}

The three comparison principles \elitist, \democratic\ and \delitist\ are referred to as \emph{Elitist}, \emph{Democratic} and \emph{Disjoint Elitist} \cite{Young:Modgil:Rodrigues:2016}, respectively. 

The following example shows that, whichever argument ordering above is used, 
the original flat \abap\ frameworks and the resulting PAFs are not semantically equivalent. 

\begin{example}
\label{example:PAFs}
Consider the \abap\ framework \abafp\ with:
\begin{itemize}
\item $\R = \{ \contr{\asme} \ot \asmb, \asmb', \quad 
\contr{\asmb} \ot \asme, \asmb', \quad \contr{\asmb'} \ot \asme, \asmb, \quad 
\contr{\asmb} \ot \asmb, \quad \contr{\asmb'} \ot \asmb', \quad 
\contr{\asma} \ot \asmb, \asmb', \quad 
\contr{\asmb} \ot \asma, \asmb', \quad \contr{\asmb'} \ot \asma, \asmb \}$, 
\item $\A = \{ \asma, \asmb, \asmb', \asme \}$, 
\item $\asmb < \asme$.
\end{itemize}

This \abap\ framework is flat and satisfies \wcp. 
Its sets of assumptions that support deductions, together with $<$-attacks among them, 
can be depicted graphically as follows (highlights are to improve readability):

\begin{figure}[H]
\caption{Flat \abap\ framework \abafp}
\vspace{-0.5cm}
\begin{center}
\begin{tikzpicture}
\node at (0, 1) {$\{ \asme \}$}; 
\draw [very thick] (0, 1) ellipse (0.5 cm and 0.5 cm);
\node at (1, 2) {$\{ \asme, \asmb \}$}; 
\draw (1, 2) ellipse (0.6 cm and 0.5 cm);
\node at (1, 0) {$\{ \asme, \asmb' \}$}; 
\draw (1, 0) ellipse (0.6 cm and 0.5 cm);
\node at (3, 2) {$\{ \asmb \}$}; 
\draw (3, 2) ellipse (0.5 cm and 0.5 cm);
\node at (3, 0) {$\{ \asmb' \}$}; 
\draw (3, 0) ellipse (0.5 cm and 0.5 cm);
\node at (5, 2) {$\{ \asma, \asmb \}$}; 
\draw (5, 2) ellipse (0.6 cm and 0.5 cm);
\node at (5, 0) {$\{ \asma, \asmb' \}$}; 
\draw (5, 0) ellipse (0.6 cm and 0.5 cm);
\node at (7, 1) {$\{ \asmb, \asmb' \}$}; 
\draw [grayed] (7, 1) ellipse (0.6 cm and 0.6 cm);
\node at (9, 1.5) {$\{ \asma \}$}; 
\draw [dashed, thick] (9, 1.5) ellipse (0.5 cm and 0.5 cm);

\draw[reverse attack] (0.5, 1) to (6.4, 1); 

\draw[symmetric normal attack] (1, 0.5) to (1, 1.5); 
\draw[symmetric normal attack] (5, 0.5) to (5, 1.5); 
\draw[symmetric normal attack] (1.6, 2) to (2.5, 2); 
\draw[symmetric normal attack] (1.6, 0) to (2.5, 0); 
\draw[symmetric normal attack] (3.5, 2) to (4.4, 2); 
\draw[symmetric normal attack] (3.5, 0) to (4.4, 0); 
\draw[symmetric normal attack, out=30, in=150] (1.5, 2.2) to (4.5, 2.2); 
\draw[symmetric normal attack, out=330, in=210] (1.5, -0.2) to (4.5, -0.2); 
\draw[symmetric normal attack] (1.5, 1.8) to (4.5, 0.2); 
\draw[symmetric normal attack] (1.4, 1.7) to (2.8, 0.4); 
\draw[symmetric normal attack] (1.5, 0.2) to (4.5, 1.8); 
\draw[symmetric normal attack] (1.4, 0.3) to (2.8, 1.6); 
\draw[symmetric normal attack] (3.4, 1.8) to (4.7, 0.4); 
\draw[symmetric normal attack] (3.4, 0.2) to (4.7, 1.6); 

\draw[normal self-attack, out=135, in=180] (0.9, 2.5) to (1, 3); 
\draw[normal attack, out=0, in=45] (1, 3) to (1.1, 2.5); 
\draw[normal self-attack, out=135, in=180] (2.9, 2.5) to (3, 3); 
\draw[normal attack, out=0, in=45] (3, 3) to (3.1, 2.5); 
\draw[normal self-attack, out=135, in=180] (4.9, 2.5) to (5, 3); 
\draw[normal attack, out=0, in=45] (5, 3) to (5.1, 2.5); 
\draw[normal self-attack, out=225, in=180] (0.9, -0.5) to (1, -1); 
\draw[normal attack, out=0, in=315] (1, -1) to (1.1, -0.5); 
\draw[normal self-attack, out=225, in=180] (2.9, -0.5) to (3, -1); 
\draw[normal attack, out=0, in=315] (3, -1) to (3.1, -0.5); 
\draw[normal self-attack, out=225, in=180] (4.9, -0.5) to (5, -1); 
\draw[normal attack, out=0, in=315] (5, -1) to (5.1, -0.5); 

\draw[symmetric normal attack] (6.5, 1.2) to (5.5, 1.8); 
\draw[symmetric normal attack, out=120, in=60] (6.7, 1.5) to (3.4, 2.3); 
\draw[normal attack, out=120, in=60] (6.9, 1.6) to (1.45, 2.3); 
\draw[pattack, out=70, in=100] (1.3, 2.4) to (7, 1.6); 
\draw[symmetric normal attack] (6.5, 0.8) to (5.5, 0.2); 
\draw[symmetric normal attack, out=240, in=300] (6.7, 0.5) to (3.4, -0.3); 
\draw[normal attack, out=240, in=300] (6.9, 0.4) to (1.45, -0.3); 
\draw[pattack, out=290, in=260] (1.3, -0.4) to (7, 0.4); 

\draw[normal self-attack, out=45, in=90] (7.6, 1.1) to (8.2, 1); 
\draw[normal attack, out=270, in=315] (8.2, 1) to (7.6, 0.9); 
\draw[normal attack, out=60, in=150] (7.5, 1.3) to (8.5, 1.6); 
\end{tikzpicture}
\end{center}
\vspace{-1cm}
\end{figure}

In essence, disregarding the other deductions, 
the self-$<$-attacking set $\{ \asmb, \asmb' \}$ deduces the contraries of both $\asma$ and $\asme$, 
and as $\asmb$ is less preferred than $\asme$, 
the set $\{ \asme \}$ $<$-attacks $\{ \asmb, \asmb' \}$ (via reverse attack), 
thus effectively $<$-defending $\{ \asma \}$. 
Overall, $\abafp$ has a unique $<$-complete extension $\asmS = \{ \asme, \asma \}$, 
which is $<$-preferred, $<$-grounded and $<$-ideal 
(by Theorem \ref{theorem:flat ABA+ properties}). 

In ABA (ignoring the preferences), the following arguments (named, for ease of reference) can be obtained: 
$\argE: \{ \asme \} \vdash \asme$; ~ $\argB: \{ \asmb \} \vdash \asmb$; ~
$\argB': \{ \asmb' \} \vdash \asmb'$; ~ $\argA: \{ \asma \} \vdash \asma$; ~
$\argX_{\contr{\asme}}: \{ \asmb, \asmb' \} \vdash \contr{\asme}$; ~
$\argE\argB': \{ \asme, \asmb' \} \vdash \contr{\asmb}$; ~
$\argE\argB: \{ \asme, \asmb \} \vdash \contr{\asmb'}$; ~
$\overline{\argB}: \{ \asmb \} \vdash \contr{\asmb}$; ~
$\overline{\argB'}: \{ \asmb' \} \vdash \contr{\asmb'}$; ~
$\argX_{\contr{\asma}}: \{ \asmb, \asmb' \} \vdash \contr{\asma}$; ~
$\argA\argB': \{ \asma, \asmb' \} \vdash \contr{\asmb}$; ~
$\argA\argB: \{ \asma, \asmb \} \vdash \contr{\asmb'}$. 
These arguments together with attacks among them instantiate an AA framework $\AF$, 
which can be depicted graphically as follows:

\begin{figure}[H]
\caption{AA framework \AF}
\vspace{-1cm}
\begin{center}
\begin{tikzpicture}
\node at (0, 1) {$\argE$}; 
\draw (0, 1) ellipse (0.5 cm and 0.5 cm);
\node at (1, 2) {$\argE\argB$}; 
\draw (1, 2) ellipse (0.6 cm and 0.5 cm);
\node at (1, 0) {$\argE\argB'$}; 
\draw (1, 0) ellipse (0.6 cm and 0.5 cm);
\node at (3, 2) {$\argB$}; 
\draw (3, 2) ellipse (0.5 cm and 0.5 cm);
\node at (4, 3) {$\overline{\argB}$}; 
\draw (4, 3) ellipse (0.5 cm and 0.5 cm);
\node at (3, 0) {$\argB'$}; 
\draw (3, 0) ellipse (0.5 cm and 0.5 cm);
\node at (4, -1) {$\overline{\argB'}$}; 
\draw (4, -1) ellipse (0.5 cm and 0.5 cm);
\node at (5, 2) {$\argA\argB$}; 
\draw (5, 2) ellipse (0.6 cm and 0.5 cm);
\node at (5, 0) {$\argA\argB'$}; 
\draw (5, 0) ellipse (0.6 cm and 0.5 cm);
\node at (7, 1) {$\argX_{\contr{\asma}}$}; 
\draw (7, 1) ellipse (0.6 cm and 0.6 cm);
\node at (8, 0) {$\argX_{\contr{\asme}}$}; 
\draw (8, 0) ellipse (0.6 cm and 0.6 cm);
\node at (9, 1.5) {$\argA$}; 
\draw (9, 1.5) ellipse (0.5 cm and 0.5 cm);

\draw[symmetric normal attack] (1, 0.5) to (1, 1.5); 
\draw[symmetric normal attack] (5, 0.5) to (5, 1.5); 
\draw[symmetric normal attack] (1.5, 1.8) to (4.5, 0.2); 
\draw[normal attack] (1.4, 1.7) to (2.8, 0.4); 
\draw[normal attack, out=280, in=170] (1.3, 1.6) to (3.5, -1); 
\draw[symmetric normal attack] (1.5, 0.2) to (4.5, 1.8); 
\draw[normal attack] (1.4, 0.3) to (2.8, 1.6); 
\draw[normal attack, out=80, in=190] (1.3, 0.4) to (3.5, 3); 
\draw[normal attack] (4.7, 0.4) to (3.4, 1.8); 
\draw[normal attack, out=120, in=270] (4.8, 0.45) to (4, 2.5); 
\draw[normal attack] (4.7, 1.6) to (3.4, 0.2); 
\draw[normal attack, out=240, in=90] (4.8, 1.55) to (4, -0.5); 

\draw[normal self-attack, out=135, in=180] (3.9, 3.5) to (4, 4); 
\draw[normal attack, out=0, in=45] (4, 4) to (4.1, 3.5); 
\draw[normal attack] (3.5, 2.9) to (1.5, 2.3); 
\draw[normal attack] (3.6, 2.7) to (3.3, 2.4); 
\draw[normal attack] (4.4, 2.7) to (4.7, 2.4); 
\draw[normal attack, out=350, in=135] (4.5, 2.9) to (6.8, 1.55); 

\draw[normal self-attack, out=225, in=180] (3.9, -1.5) to (4, -2); 
\draw[normal attack, out=0, in=315] (4, -2) to (4.1, -1.5); 
\draw[normal attack] (3.5, -0.9) to (1.5, -0.3); 
\draw[normal attack] (3.6, -0.7) to (3.3, -0.4); 
\draw[normal attack] (4.4, -0.7) to (4.7, -0.4); 
\draw[normal attack, out=10, in=225] (4.5, -0.9) to (6.8, 0.45); 

\draw[symmetric normal attack] (6.45, 1.2) to (5.5, 1.8); 
\draw[normal attack, out=70, in=100] (1.3, 2.4) to (7, 1.6); 
\draw[symmetric normal attack] (6.45, 0.8) to (5.5, 0.2); 
\draw[normal attack, out=290, in=260] (1.3, -0.4) to (7, 0.4); 

\draw[normal attack, out=300, in=180] (5.4, 1.6) to (7.4, 0.1); 
\draw[normal attack] (5.6, 0) to (7.4, 0); 
\draw[normal attack, out=0, in=100] (4.5, 3) to (7.9, 0.6); 
\draw[normal attack] (4.5, -1) to (7.45, -0.2); 
\draw[symmetric normal attack, out=75, in=90] (1.2, 2.45) to (8, 0.6); 
\draw[symmetric normal attack, out=290, in=240] (1.2, -0.45) to (7.8, -0.55); 

\draw[normal attack, out=60, in=150] (7.5, 1.3) to (8.5, 1.6); 
\draw[normal attack, out=240, in=240] (7.6, -0.45) to (-0.2, 0.55); 
\end{tikzpicture}
\end{center}
\vspace{-1cm}
\end{figure}

Since the only preference information is $\asmb < \asme$, 
in the three resulting PAFs (employing \argeli, \argdem\ and \argdeli, respectively) it suffices to check only whether attacks on $\argE$ succeed as defeats. 
There is only one such attack, namely $\argX_{\contr{\asme}} \attacks \argE$. 
We find 
$\argX_{\contr{\asme}} \argeli \argE$ and $\argX_{\contr{\asme}} \argdeli \argE$, 
while $\argX_{\contr{\asme}} \argndem \argE$. 
Note that, even though with respect to the Elitist and Disjoint Elitist comparison principles 
the attack $\argX_{\contr{\asme}} \attacks \argE$ is reversed into the defeat 
$\argE \defeats \argX_{\contr{\asme}}$, 
the argument $\argE$ still does not defend $\argA$ in $\DAF$ 
(because $\argX_{\contr{\asma}} \attacks \argA$ and 
neither $\argE \nattacks \argX_{\contr{\asma}}$ 
nor $\argX_{\contr{\asma}} \attacks \argE$), 
so that $\{ \argE \}$ is a unique complete extension of $\DAF$, 
and hence of $\PAF$. 
With respect to the Democratic comparison principle, 
$\argX_{\contr{\asme}} \defeats \argE$, 
so that $\argE$ defends neither $\argA$ nor itself, and thus $\emptyset$ is a unique complete extension of ($\DAF$ and) $\PAF$. 
In any event, we see that acceptable assumptions in \abap\ do not correspond to acceptable arguments in PAFs. 
\end{example}

To summarise, we showed that PAFs can be seen as instances of (flat) \abap\ frameworks. 
The converse is not true in the instantiations of PAFs by using standard ABA arguments and well known argument comparison principles, 
because for these instantiations not all attacks stemming from arguments with the same supporting sets of assumptions are reversed, 
whereas in \abap\ attacks are reversed between sets of assumptions (supporting possibly multiple deductions). 

Whether other instantiations of PAFs with \abap\ are possible, 
is left for future work.

\subsection{p\_ABA}
\label{subsec:p_ABA}

Following \cite{Wakaki:2014}, 
a p\_ABA framework is a tuple $\paba$ with 
$\abaf$ the underlying ABA framework 
and $\preccurlyeq$ a (partial) preorder over $\LL$.\footnote{For readability, we use the symbol $\preccurlyeq$ to differentiate p\_ABA frameworks from \abap\ frameworks, but in examples to follow we give preferences using $\preccurlyeq$ and $\leqslant$ interchangeably.} 
A \emph{preference relation} (i.e.~preorder) $\sqsubseteq$ over ABA extensions is defined 
(via \cite{Sakama:Inoue:1996}'s criterion for comparing answer sets in Logic Programming with preferences) 
to select the `preferable' extensions, 
called $\mathcal{P}$-extensions, of $\paba$.\footnote{\cite{Wakaki:2014} uses the AA framework corresponding to $\abaf$ and its argument extensions, as in e.g.~\cite{Dung:Kowalski:Toni:2009}. 
Instead, we provide an equivalent definition on the assumption level 
(enabled by results from \cite{Dung:Mancarella:Toni:2007} and \cite{Toni:2012}; see also \cite{Toni:2014}).}
\label{definition:extension ordering}
Formally, given a p\_ABA framework $\paba$, let $\E$ be the collection of $\sigma$ extensions of $\abaf$. 
A binary \textbf{preference relation} $\sqsubseteq$ over $\E$ can be defined as follows: 
for $\asmE, \asmE', \asmE'' \in \E$,
\begin{itemize}
\item $\asmE \sqsubseteq \asmE'$ if there is $\varphi \in \cn(\asmE') \setminus \cn(\asmE)$ such that
\begin{itemize}
\item there is $\psi \in \cn(\asmE) \setminus \cn(\asmE')$ with $\psi \preccurlyeq \varphi$ and
\item there is no $\chi \in \cn(\asmE) \setminus \cn(\asmE')$ with $\varphi \prec \chi$;
\end{itemize}
\item $\asmE \sqsubseteq \asmE$; 
\item if $\asmE \sqsubseteq \asmE'$ and $\asmE' \sqsubseteq \asmE''$, then $E\sqsubseteq E''$.
\end{itemize}

Then, 
\label{definition:p_ABA semantics}
a $\sigma$ extension $\asmE$ of $\abaf$ is a \textbf{$\sigma$ $\mathcal{P}$-extension} of $\paba$ 
iff $\asmE \sqsubseteq \asmE'$ implies $\asmE' \sqsubseteq \asmE$, for any $\sigma$ extension $\asmE'$ of $\abaf$.

Observe that p\_ABA merely discriminates among extensions of the underlying ABA framework; 
thus, in Example \ref{example:reverse attack}, 
the unique $\sigma$ extension $\{ \asma \}$ of the underlying ABA framework $\abaf$ is a unique $\sigma$ $\mathcal{P}$-extension of the p\_ABA framework $\paba$, 
disregarding the preference $\asma < \asmb$. 

To see another considerable difference between \abap\ and p\_ABA, let us consider odd cycles. 
These frequently prevent existence of, in particular, stable extensions, and may lead to empty grounded extensions. 
However, when preference information is present, it can break cycles. 
We illustrate with the following variant of the well known example of a `3-cycle'.

\begin{example}
\label{example:cycle}
Consider $\R = \{ \contr{\asmb} \ot \asma, \quad 
\contr{\asmc} \ot \asmb, \quad 
\contr{\asma} \ot \asmc \}$ 
and $\A = \{ \asma, \asmb, \asmc \}$ 
with $\asmc < \asmb < \asma$. 
In ABA, ignoring the preferences, there is an odd cycle 
$\{ \asma \} \attacks \{ \asmb \} \attacks \{ \asmc \} \attacks \{ \asma \}$, 
and so no stable extension exists, 
and $\emptyset$ is a unique complete (hence grounded, ideal and preferred) extension. 
Thus, in p\_ABA, no stable $\mathcal{P}$-extensions exist either, 
and $\mathcal{P}$-extensions under other semantics are empty. 
That is, preferences do not really play a role. 
In contrast, \abap\ yields a unique $<$-$\sigma$ extension $\{ \asma \}$. 
The situation is illustrated graphically below.

\begin{figure}[H]
\caption{(p\_)ABA vs \abap. 3-cycle}
\begin{center}
\footnotesize
\begin{tikzpicture}
\node at (-1, 1) {(p\_)ABA}; 
\node at (0, 0) {$\{ \asmc \}$}; 
\draw (0, 0) ellipse (0.4 cm and 0.4 cm);
\node at (1.4, 0.2) {$\{ \asmb \}$}; 
\draw (1.4, 0.2) ellipse (0.4 cm and 0.4 cm);
\node at (0.6, 1.2) {$\{ \asma \}$}; 
\draw (0.6, 1.2) ellipse (0.4 cm and 0.4 cm);

\draw [attack, out=120, in=210] (-0.2, 0.35) to (0.2, 1.3); 
\draw [attack, out=345, in=75] (1, 1.3) to (1.5, 0.55); 
\draw [attack, out=220, in=330] (1.2, -0.15) to (0.25, -0.3); 

\hspace{0.8cm}

\node at (3, 1) {\abap}; 
\node at (4, 0) {$\{ \asmc \}$}; 
\draw (4, 0) ellipse (0.4 cm and 0.4 cm);
\node at (5.4, 0.2) {$\{ \asmb \}$}; 
\draw (5.4, 0.2) ellipse (0.4 cm and 0.4 cm);
\node at (4.6, 1.2) {$\{ \asma \}$}; 
\draw (4.6, 1.2) ellipse (0.4 cm and 0.4 cm);

\draw [reverse attack, out=210, in=120] (4.2, 1.3) to (3.8, 0.35); 
\draw [normal attack, out=345, in=75] (5, 1.3) to (5.5, 0.55); 
\draw [normal attack, out=220, in=330] (5.2, -0.15) to (4.25, -0.3); 
\end{tikzpicture}
\end{center}
\end{figure}
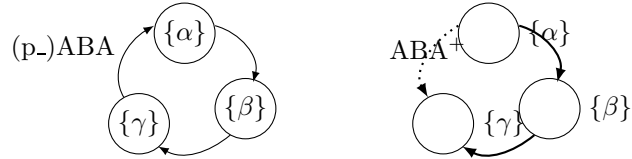
\end{example}

Still further, in settings where stable extensions of the underlying ABA framework exist and preferred extensions are non-empty, 
\abap\ discriminates between extensions differently than p\_ABA, as shown next. 

\begin{example}
\label{example:p_ABA extensions equivalent}
Let $\R = \{ \contr{\asma} \ot \asmd, \quad 
\contr{\asmb} \ot \asma, \quad 
\contr{\asmc} \ot \asmb, \quad 
\contr{\asmd} \ot \asmc, \quad 
\contr{\asmc} \ot \asmd \}$, 
$\A = \{ \asma, \asmb, \asmc, \asmd \}$ 
and $\asmd < \asma, ~ \asmc < \asmb$. 
In ABA, ignoring the preferences, we obtain two stable (and also preferred) extensions,  
$\asmE = \{ \asma, \gamma \}$ and $\asmE' = \{ \asmb, \asmd \}$, 
and the grounded/ideal extension $\emptyset$. 
In p\_ABA, both $\asmE$ and $\asmE'$ are stable/preferred $\mathcal{P}$-extensions, 
and $\emptyset$ is the grounded/ideal $\mathcal{P}$-extension. 
In contrast, $\asmE$ is a unique $<$-stable/preferred/ideal extension in \abap, 
and it is in addition $<$-grounded. 
(The situation is illustrated graphically below.) 

\begin{figure}[H]
\caption{(p\_)ABA vs \abap}
\begin{center}
\footnotesize
\begin{tikzpicture}
\node at (1, 1.9) {(p\_)ABA}; 
\node at (0, 0) {$\{ \asmd \}$}; 
\draw (0, 0) circle (0.4 cm);
\node at (0, 1.5) {$\{ \asma \}$}; 
\draw (0, 1.5) circle (0.4 cm);
\node at (2, 1.5) {$\{ \asmb \}$}; 
\draw (2, 1.5) circle (0.4 cm);
\node at (2, 0) {$\{ \asmc \}$}; 
\draw (2, 0) circle (0.4 cm);

\draw [attack] (0, 0.4) to (0, 1.1); 
\draw [attack] (0.4, 1.5) to (1.6, 1.5); 
\draw [attack] (2, 1.1) to (2, 0.4); 
\draw [symmetric attack] (1.6, 0) to (0.4, 0); 

\node at (6, 1.9) {\abap}; 
\node at (5, 0) {$\{ \asmd \}$}; 
\draw (5, 0) circle (0.4 cm);
\node at (5, 1.5) {$\{ \asma \}$}; 
\draw (5, 1.5) circle (0.4 cm);
\node at (7, 1.5) {$\{ \asmb \}$}; 
\draw (7, 1.5) circle (0.4 cm);
\node at (7, 0) {$\{ \asmc \}$}; 
\draw (7, 0) circle (0.4 cm);

\draw [reverse attack] (5, 1.1) to (5, 0.4); 
\draw [normal attack] (5.4, 1.5) to (6.6, 1.5); 
\draw [normal attack] (7, 1.1) to (7, 0.4); 
\draw [symmetric normal attack] (6.6, 0) to (5.4, 0); 
\end{tikzpicture}
\end{center}
\end{figure}

Given that $\asma$ is objected against only by $\asmd$, 
but due to the preference $\asmd < \asma$ this objection is refuted, 
\abap\ accepts $\asma$, which, arguably, is the correct outcome. 
\end{example}

To summarise, in contrast to \abap, p\_ABA accommodates preferences in order to discriminate among extensions of the underlying ABA framework, which may lead to preference information being ineffective and/or the outcomes unintuitive.

\subsection{\aspicp}
\label{subsec:ASPIC+}

\aspicp\ is an expressive argumentation formalism, 
encompassing many key elements of structured argumentation with preferences 
(such as strict and defeasible rules, general contrariness mapping, various forms of attack as well as preferences). 
It was shown by \citet{Prakken:2010} that flat ABA frameworks can be seen as instances of \aspicp\ frameworks (i.e.~as a class of \aspicp\ frameworks without preferences). 
In this section we will show that \abap\ is distinct from \aspicp\ in several respects. 

In order for \aspicp\ frameworks to behave well 
(in the sense of satisfying various formal properties, such as the Fundamental Lemma or the rationality postulates of \cite{Caminada:Amgoud:2007}), 
various requirements have to be met. 
For instance, contraposition, as discussed in section \ref{sec:WCP}, is used; 
preferences also have to satisfy certain conditions. 
The main focus of this section is to exhibit an example which adheres to the standard requirements imposed on \aspicp\ frameworks, 
but is treated differently in \abap\ and \aspicp. 
To this end, we first extend the mapping from ABA to \aspicp\ (without preferences) provided in \cite{Prakken:2010} to a mapping from \abap\ to \aspicp (with preferences). 
Details of \aspicp\ sufficient for our purposes in this paper are provided in Appendix B. 

In what follows, unless specified otherwise, we assume as given a flat \abap\ framework \abafp\ with $\leqslant$ a preorder and~~$\contrary:~\A \to \LL$ such that 
$\contr{\asma} = \newcontr{\asma}$, where $\newcontr{\asma} \in \LL \setminus \A$, for any $\asma \in \A$. 

\begin{definition}
\label{definition:ABA+ into ASPIC+}
The \aspicp\ framework corresponding to a flat \abap\ framework \abafp\ is 
$(\overline{\LL}, \ccontrary, \R_s, \K_p, \leqslant_p)$ with: 
\begin{itemize}
\item $\overline{\LL} = \LL \cup \{ \neg s~:~s \in \LL \setminus \A \text{ and } s \neq \newcontr{\asma} \text{ for any } \asma \in \A \}$, where $\neg s \not\in \LL$; 
\item $\ccontrary: \LL \to \wp(\LL)$ is such that:\footnote{For any $\asma \in \A$, $\asma$ and $\newcontr{\asma}$ are contradictories 
(i.e.~$-\asma$ is $\newcontr{\asma}$ and $-\newcontr{\asma}$ is $\asma$); 
and otherwise, $s$ and $\neg s$ are contradictories 
(i.e.~$-s$ is $\neg s$ and $- \neg s$ is $s$). 
We need to use contradictories, because otherwise preferences do not play a role, in the sense that undermining attacks (see Appendix B) always succeed as defeats, whatever the preferences. 
For instance, if in Example \ref{example:conflict preservation in ASPIC+} $\neg\asma$ and $\neg\asmb$ were contraries, rather than contradictories, of the premises $\asma$ and $\asmb$, respectively, 
then the attack $\argB' \attacks \argA$ would succeed as a defeat under any argument comparison principle, irrespective of the preference relation over premises.}
\begin{itemize}
\item if $\asma \in \A$, then $\ccontr{\asma} = \{ \newcontr{\asma} \}$, 
\item if $s = \newcontr{\asma}$ for some $\asma \in \A$, then $\ccontr{s} = \{ \asma \}$, 
\item if $s \in \LL \setminus \A$ and $s \neq \newcontr{\asma}$ for any $\asma \in \A$, then  $\ccontr{s} = \{ \neg s \}$ and $\ccontr{\neg s} = \{ s \}$;
\end{itemize}
\item $\R_s = \{ \varphi_1, \ldots, \varphi_m \to \varphi_0~:~\varphi_0 \ot \varphi_1, \ldots, \varphi_m \in \R \}$; 
\item $\K_p = \A$; 
\item $\leqslant_p \, = \, \leqslant$. 
\end{itemize}
\end{definition}

The following example shows that under this natural mapping, 
\aspicp\ deals with preferences differently from \abap, 
in the sense that the extensions of the given \abap\ framework have conclusions different from the conclusions of the extensions of the \aspicp\ framework obtained via this mapping, 
even though the resulting \aspicp\ framework satisfies the standard \aspicp\ requirements. 

\begin{example}
\label{example:ASPIC+}
Recall the \abap\ framework \abafp\ from Example \ref{example:PAFs}. 
Note that it satisfies \contraposition. 
The corresponding \aspicp\ framework $(\overline{\LL}, \ccontrary, \R_s, \K_p, \leqslant_p)$ is thus closed under contraposition (see Appendix B), 
and we obtain the following \aspicp\ arguments 
(similar to those in Example \ref{example:PAFs}, and with the same names, except for $\argX_{-\asma}$ and $\argX_{-\asme}$): 
$\argE = [\asme]$, ~ $\argB = [\asmb]$, ~ $\argB' = [\asmb']$, ~ $\argA = [\asma]$, ~
$\argX_{-\asme} = [\argB, \argB' \to -\asme]$, ~
$\argE\argB' = [\argE, \argB' \to -\asmb]$, ~
$\argE\argB = [\argE, \argB \to -\asmb']$, ~
$\overline{\argB} = [\argB \to -\asmb]$, ~
$\overline{\argB'} = [\argB' \to -\asmb']$, ~
$\argX_{-\asma} = [\argB, \argB' \to -\asma]$, ~
$\argA\argB' = [\argA, \argB' \to -\asmb]$, ~
$\argA\argB = [\argA, \argB \to -\asmb']$. 
The corresponding AA framework $\AF$ can be represented graphically as follows:

\begin{figure}[H]
\caption{AA framework \AF}
\vspace{-1cm}
\begin{center}
\begin{tikzpicture}
\node at (0, 1) {$\argE$}; 
\draw (0, 1) ellipse (0.5 cm and 0.5 cm);
\node at (1, 2) {$\argE\argB$}; 
\draw (1, 2) ellipse (0.6 cm and 0.5 cm);
\node at (1, 0) {$\argE\argB'$}; 
\draw (1, 0) ellipse (0.6 cm and 0.5 cm);
\node at (3, 2) {$\argB$}; 
\draw (3, 2) ellipse (0.5 cm and 0.5 cm);
\node at (4, 3) {$\overline{\argB}$}; 
\draw (4, 3) ellipse (0.5 cm and 0.5 cm);
\node at (3, 0) {$\argB'$}; 
\draw (3, 0) ellipse (0.5 cm and 0.5 cm);
\node at (4, -1) {$\overline{\argB'}$}; 
\draw (4, -1) ellipse (0.5 cm and 0.5 cm);
\node at (5, 2) {$\argA\argB$}; 
\draw (5, 2) ellipse (0.6 cm and 0.5 cm);
\node at (5, 0) {$\argA\argB'$}; 
\draw (5, 0) ellipse (0.6 cm and 0.5 cm);
\node at (7, 1) {$\argX_{-\asma}$}; 
\draw (7, 1) ellipse (0.6 cm and 0.6 cm);
\node at (8, 0) {$\argX_{-\asme}$}; 
\draw (8, 0) ellipse (0.6 cm and 0.6 cm);
\node at (9, 1.5) {$\argA$}; 
\draw (9, 1.5) ellipse (0.5 cm and 0.5 cm);

\draw[symmetric normal attack] (1, 0.5) to (1, 1.5); 
\draw[symmetric normal attack] (5, 0.5) to (5, 1.5); 
\draw[symmetric normal attack] (1.5, 1.8) to (4.5, 0.2); 
\draw[normal attack] (1.4, 1.7) to (2.8, 0.4); 
\draw[normal attack, out=280, in=170] (1.3, 1.6) to (3.5, -1); 
\draw[symmetric normal attack] (1.5, 0.2) to (4.5, 1.8); 
\draw[normal attack] (1.4, 0.3) to (2.8, 1.6); 
\draw[normal attack, out=80, in=190] (1.3, 0.4) to (3.5, 3); 
\draw[normal attack] (4.7, 0.4) to (3.4, 1.8); 
\draw[normal attack, out=120, in=270] (4.8, 0.45) to (4, 2.5); 
\draw[normal attack] (4.7, 1.6) to (3.4, 0.2); 
\draw[normal attack, out=240, in=90] (4.8, 1.55) to (4, -0.5); 

\draw[normal self-attack, out=135, in=180] (3.9, 3.5) to (4, 4); 
\draw[normal attack, out=0, in=45] (4, 4) to (4.1, 3.5); 
\draw[normal attack] (3.5, 2.9) to (1.5, 2.3); 
\draw[normal attack] (3.6, 2.7) to (3.3, 2.4); 
\draw[normal attack] (4.4, 2.7) to (4.7, 2.4); 
\draw[normal attack, out=350, in=135] (4.5, 2.9) to (6.8, 1.55); 

\draw[normal self-attack, out=225, in=180] (3.9, -1.5) to (4, -2); 
\draw[normal attack, out=0, in=315] (4, -2) to (4.1, -1.5); 
\draw[normal attack] (3.5, -0.9) to (1.5, -0.3); 
\draw[normal attack] (3.6, -0.7) to (3.3, -0.4); 
\draw[normal attack] (4.4, -0.7) to (4.7, -0.4); 
\draw[normal attack, out=10, in=225] (4.5, -0.9) to (6.8, 0.45); 

\draw[symmetric normal attack] (6.45, 1.2) to (5.5, 1.8); 
\draw[normal attack, out=70, in=100] (1.3, 2.4) to (7, 1.6); 
\draw[symmetric normal attack] (6.45, 0.8) to (5.5, 0.2); 
\draw[normal attack, out=290, in=260] (1.3, -0.4) to (7, 0.4); 

\draw[normal attack, out=300, in=180] (5.4, 1.6) to (7.4, 0.1); 
\draw[normal attack] (5.6, 0) to (7.4, 0); 
\draw[normal attack, out=0, in=100] (4.5, 3) to (7.9, 0.6); 
\draw[normal attack] (4.5, -1) to (7.45, -0.2); 
\draw[symmetric normal attack, out=75, in=90] (1.2, 2.45) to (8, 0.6); 
\draw[symmetric normal attack, out=290, in=240] (1.2, -0.45) to (7.8, -0.55); 

\draw[normal attack, out=60, in=150] (7.5, 1.3) to (8.5, 1.6); 
\draw[normal attack, out=240, in=240] (7.6, -0.45) to (-0.2, 0.55); 
\end{tikzpicture}
\end{center}
\vspace{-1cm}
\end{figure}

This $\AF$ has a unique complete extension $\emptyset$ 
(which is thus preferred, ideal and grounded, by results in \cite{Dung:1995,Dung:Mancarella:Toni:2007}). 

As in Example \ref{example:PAFs}, since the only preference information is $\asmb <_p \asme$, it suffices to check only whether attacks on $\argE$ succeed as defeats. 
With respect to the Elitist comparison principle, we obtain $\argX_{-\asme} \argeli \argE$, 
so that $\argX_{-\asme} \ndefeats_{\elitist} \argE, \argE\argB, \argE\argB'$. 
Therefore, $(\Args, \defeats_{\elitist})$ has a unique complete extension $\{ \argE \}$ (which is likewise preferred, ideal and grounded) 
with conclusions (somewhat abusing the notation) 
$\conc(\{ \argE \}) = \bigcup_{\argA \in \{ \argE \}} \conc(\argA) = \{ \asme \}$. 
Note that with respect to the Disjoint Elitist comparison principle, 
we also obtain $\argX_{-\asme} \argdeli \argE$, 
whence $(\Args, \defeats_{\delitist})$ has a unique complete/preferred/ideal/grounded extension $\{ \argE \}$ too. 
With respect to the Democratic comparison, 
$\argX_{-\asme} \argndemq \argE$ because $\asmb' \not\leqslant_p \asme$, 
so that $\defeats_{\democratic} \,=\, \attacks$, 
and so $(\Args, \defeats_{\democratic})$ has a unique complete extension $\emptyset$ 
with $\conc(\emptyset) = \emptyset$. 

As seen in Example \ref{example:PAFs}, \abafp\ has a unique $<$-complete/$<$-preferred/\linebreak$<$-ideal/$<$-grounded extension $\{ \asme, \asma \}$, 
and $\cn(\{ \asme, \asma \}) = \{ \asme, \asma \}$. 
Therefore, under any of the three argument comparison principles, 
conclusions of the unique extension (under any semantics bar ($<$-)stable) of the \abap\ framework and its corresponding \aspicp\ framework are different. 
This happens due to attack reversal in \abap, which allows $\{ \asme \}$ to $<$-defend both itself and $\{ \asma \}$ against $\{ \asmb, \asmb' \}$, because $\asmb < \asme$. 
By contrast, in \aspicp, the two arguments $\argX_{-\asme}$ and $\argX_{-\asma}$ have the same premises $\{ \asmb, \asmb' \}$, 
but only $\argX_{-\asme}$ attacks $\argE$, while $\argX_{-\asma}$ attacks $\argA$. 
Hence, although $\argX_{-\asme} \attacks \argE$ does not result into a defeat due to the  preference $\asmb <_p \asme$, 
the attack $\argX_{-\asma} \attacks \argA$ does result into a defeat. 
In particular, $\argE$ cannot defend $\argA$ against the argument $\argX_{-\asma}$ that has the same premises as the argument $\argX_{-\asme}$ against which $\argE$ defends itself. 

\end{example}

To summarise, it is plain that \abap\ differs conceptually from \aspicp\ in that 
attacks in \abap\ are reversed due to preference information, 
while in \aspicp\ attacks are discarded instead. 
We showed that, as a consequence, even flat \abap\ frameworks satisfying \contraposition\ can yield semantically different outcomes when mapped into \aspicp\ frameworks (in a natural manner). 
This complements the results on the contrasting behaviour of the two formalisms: 
for instance, the desirable ability to unconditionally preserve conflicts in \abap\ (Proposition \ref{proposition:conflict preservation}) 
in contrast to the need to impose certain requirements (such as contraposition) in \aspicp\ (cf.~Example \ref{example:conflict preservation in ASPIC+}); 
also, Weak Contraposition suffices for the Fundamental Lemma \ref{lemma:Fundamental} to hold in \abap, 
whereas \aspicp\ needs (full) contraposition, which is strictly stronger than Weak Contraposition. 

Whether any correspondence is possible under ($<$-)stable semantics, 
and whether other mappings from \abap\ to \aspicp\ would allow to establish a correspondence for some restricted class of frameworks, is a line of future research.

\subsection{Dung's Normal Attack}
\label{subsec:Dung}

\citet{Dung:2016} proposed a novel attack relation, called \emph{normal attack}, for \aspicp-type argumentation formalisms. 
This notion of normal attack is presented in a simplified setting, where premises (i.e.~$\K_p$) 
are represented as defeasible rules with empty bodies, 
similar to e.g.~\cite{Caminada:Modgil:Oren:2014}. 
In such a setting, we can attempt to map \abap\ into \aspicp\ as in section \ref{subsec:ASPIC+}, 
but with the following change in Definition \ref{definition:ABA+ into ASPIC+}: 
instead of having premises as assumptions ($\K_p = \A$), we have 
\begin{itemize}
\item set of defeasible rules $\R_d = \{ \To \asma~:~\asma \in \A \}$, 
\item an ordering $\leqslant_d$ on $\R_d$ given by 
$\To \asma~\leqslant_d~\To \asmb$ iff $\asma \leqslant \asmb$, and 
\item $\K_p = \emptyset$ with $\leqslant_p \, = \emptyset$. 
\end{itemize}

For our purposes, normal attack can be defined thus.\footnote{This is the notion of a normal rebut in \cite{Dung:2016}, sufficient for our purposes. 
For more general settings, \emph{undermining} is also included; see \cite{Dung:2016} for details.}
\label{definition:Dung's normal attack}
Let $\argA, \argB$ be (\aspicp) arguments. 
$\argA$ \emph{normal-attacks} $\argB$ (at $\argB'$), written $\argA \Nattacks \argB$, iff
$\argB' = [ \To \asmb ] \in \sub(\argB)$, 
$\conc(\argA) = -\asmb$ 
and $\nexists [ \To \asma ] \in \sub(\argA)$ such that $\To \asma~<_d~\To \asmb$.
Dung's proposal is, given an \aspicp\ framework \aspicfo, to generate \aspicp\ arguments $\Args$ and then use directly the normal attack relation $\Nattacks$ to construct AA frameworks $\NAF$ and compute extensions. 

With the change in the mapping from \abap\ to \aspicp, and using the normal attack described above, 
we next revisit Example \ref{example:ASPIC+} to illustrate the difference between \abap\ and Dung's approach. 

\begin{example}
\label{example:Dung}
Mapping the \abap\ framework \abafp\ from Example \ref{example:ASPIC+} into an \aspicp\ framework as discussed above, 
we obtain arguments 
$\argE = [\To \asme]$, $\argB = [\To \asmb]$, $\argB' = [\To \asmb']$, $\argA = [\To \asma]$, 
and otherwise $\argX_{-\asme}$, $\argE\argB'$, $\argE\argB$, $\overline{\argB}$, $\overline{\argB'}$, 
$\argX_{-\asma}$, $\argA\argB'$, $\argA\argB$ as in Example \ref{example:ASPIC+}. 
The normal attack relation $\Nattacks$ coincides with the Elitist defeat $\defeats_{\elitist}$, 
and so $\NAF$ has a unique complete extension $\{ \argE \}$ with conclusions $\{ \asme \}$. 
Contrasting with \abap, which yields $\{ \asme, \asma \}$ as a unique $<$-complete extension (see Example \ref{example:PAFs}) with conclusions $\{ \asme, \asma \}$, 
we see the same behaviour in Dung's approach as that observed in \aspicp\ in Example \ref{example:ASPIC+}.
\end{example}

To summarise, when focusing on the natural map from \abap\ to \aspicp\ as in this section, 
Dung's normal attack coincides with the Elitist defeat, 
wherefore \abap\ differs from Dung's approach for the same reasons that \abap\ differs from \aspicp. 
Whether any correspondence between \abap\ and \aspicp\ with Dung's normal attack can be established using different mappings is a line of future research.

\subsection{Other Formalisms}
\label{subsec:Other}

\paragraph{ASPIC$^-$}
\citet{Caminada:Modgil:Oren:2014} proposed a simplified version of \aspicp, 
called ASPIC$^-$, 
with a variation of the \aspicp\ attack relation, called \emph{unrestricted rebut}. 
In ASPIC$^-$, axioms and premises are omitted and instead represented as strict and defeasible rules, respectively. 
The new attack relation is based on the idea that while (standard) rebut in \aspicp\ is allowed only on the conclusion of a defeasible rule, 
unrestricted rebut is allowed on the conclusion of any rule. 
A major feature of ASPIC$^-$ is that so far it works only with total preference orderings over defeasible rules, 
and if a partial ordering is used instead, as in \abap, then the extensions under various semantics need not satisfy the rationality postulates of \cite{Caminada:Amgoud:2007} or other desirable properties, as noted in \cite{Caminada:Modgil:Oren:2014}. 
(This can also be witnessed by analysing Example \ref{example:Dung} using the unrestricted rebut.) 
It would nonetheless be interesting to investigate in the future the relation of \abap\ and a generalisation of ASPIC$^-$ to deal with partial preference orderings. 

\paragraph{Rich PAFs}
\citet{Amgoud:Vesic:2014} further introduced the so-called \emph{Rich PAFs}: 
tuples $\RPAF$, where $\PAF$ is a PAF and $\tleq$ is a \emph{refinement relation}---a  preorder---over extensions of the repaired framework $\DAF$ corresponding to $\PAF$. 
Extensions of $\RPAF$ are $\tleq$-maximal extensions of $\DAF$. 
The authors claim that using suitable refinement relations 
allows to select the (intuitively) preferable extensions, 
by excluding other extensions as unacceptable.  
However, \citet{Kaci:2010} argues that excluding, due to the preferences, some extensions is not a desirable solution, because semantics already provide acceptability conditions. 
Rather, ranking the extensions may be more appropriate. 
A discussion on this topic is, however, beyond the scope of this paper. 
We leave the investigation of how \abap\ relates to Rich PAFs for future work. 

\paragraph{Other}
Several other approaches to argumentation with preferences, 
e.g.~DeLP \cite{Garcia:Simari:2014}, 
an early version of preference-based argumentation frameworks \cite{Amgoud:Cayrol:2002}, 
\emph{Value-based Argumentation} \cite{Bench-Capon:2003,Kaci:Torre:2008}, 
and \emph{Deductive Argumentation} \cite{Besnard:Hunter:2014}, 
use preferences to discard attacks from arguments less preferred than the attackees. 
Similar in spirit are 
\emph{Abstract Dialectical Frameworks} \cite{Brewka:Ellmauthaler:Strass:Wallner:Woltran:2013} 
(where an attack may fail if the attacked argument has a supporting argument that is preferred over the attacker) 
as well as AA-based formalisms representing preferences as attacks on attacks (e.g.~\emph{Extended Argumentation Frameworks} \cite{Modgil:2009}, 
\emph{Argumentation Frameworks with Recursive Attacks} \cite{Baroni:Cerutti:Giacomin:Guida:2011}). 
For reasons similar to those advocated regarding the differences between \abap\ and \aspicp, 
those formalisms are different from \abap\ (see also \cite{Cyras:2016}), 
but precise analysis of any correspondence is left for future research.

\section{Concluding Remarks}
\label{sec:Concluding Remarks}

We have presented \abap, a structured argumentation formalism that conservatively extends Assumption-Based Argumentation (ABA) with preferences and incorporates a novel technique to reverse attacks due to preference information. 
One important aspect is that \abap\ assumes preferences on the object level (i.e.~over assumptions) and incorporates them directly into the definition of attack, 
rather than assuming preferences on the meta level (e.g.~over arguments). 
A further important aspect is that \abap\ allows for preferences in generic ABA frameworks, 
as opposed to allowing for preferences only in flat ABA frameworks \cite{Bondarenko:Dung:Kowalski:Toni:1997}, as in e.g.~\cite{Kowalski:Toni:1996,Fan:Toni:2013,Thang:Luong:2014,Wakaki:2014}. 

We have shown that \abap\ satisfies various desirable properties regarding 
relationship among semantics (e.g.~\cite{Bondarenko:Dung:Kowalski:Toni:1997,Dung:Mancarella:Toni:2007}), 
rationality postulates (e.g.~\cite{Caminada:Amgoud:2007}) 
and preference handling (e.g.~\cite{Amgoud:Vesic:2014,Amgoud:Vesic:2009,Brewka:Truszczynski:Woltran:2010,Simko:2014}).  
We plan to investigate further properties of \abap, as in e.g.~\cite{Dung:2016,Baroni:Giacomin:2007,Baroni:Giacomin:Liao:2015-IJCAI,Dung:2016-COMMA}. 
Another important contribution of this paper is a new principle, Weak Contraposition, 
that relaxes the principle of contraposition used in e.g.~\aspicp~\cite{Modgil:Prakken:2014} and \cite{Dung:2016}, 
while guaranteeing various additional desirable properties for \abap. 
We plan to investigate which other properties Weak Contraposition guarantees for \abap\ and whether Weak Contraposition can be further relaxed. 

We have seen that \abap\ generalises Preference-based Argumentation Frameworks (PAFs) \cite{Amgoud:Vesic:2014}, 
improves upon Assumption-Based Argumentation Equipped with Preferences (p\_ABA) \cite{Wakaki:2014} 
and differs from the majority of formalisms of argumentation with preferences which discard attacks due to preference information (e.g.~\cite{Amgoud:Cayrol:2002,Bench-Capon:2003,Kaci:Torre:2008,Brewka:Ellmauthaler:Strass:Wallner:Woltran:2013,Besnard:Hunter:2014,Garcia:Simari:2014,Caminada:Modgil:Oren:2014,Dung:2016}), 
particularly \aspicp~\cite{Modgil:Prakken:2014}. 
We aim to analyse more precise relationships of \abap\ to the aforementioned as well as other formalisms of argumentation with preferences (e.g.~\cite{Baroni:Cerutti:Giacomin:Guida:2011,Caminada:Modgil:Oren:2014}). 
In addition, since ABA admits as instances various non-monotonic reasoning formalisms \cite{Bondarenko:Dung:Kowalski:Toni:1997}, 
it would be interesting to study the relationship of \abap\ to those formalisms where they have been extended with preferences. 

Other future work directions include: 
analysing complexity of reasoning problems in \abap, akin to analysis for ABA in  \cite{Dimopoulos:Nebel:Toni:2002,Dunne:2009}; 
studying whether computational mechanisms of ABA's \emph{dispute derivations} \cite{Dung:Mancarella:Toni:2007,Dung:Kowalski:Toni:2006,Toni:2013} can be adapted to \abap; 
relating \abap\ to the version of ABA where sets of arguments are seen as graphs \cite{Craven:Toni:2016}; 
developing tools for computing \abap\ extensions; 
extending the analysis in \cite{Cyras:Toni:2015} of \emph{non-monotonic inference properties} for ABA to \abap; 
further investigating how \abap\ relates to various preference handling principles for non-monotonic reasoning (e.g.~\cite{Simko:2014,Brewka:Eiter:1999}); 
studying whether and how \emph{dynamic preferences} (see e.g.~\cite{Prakken:Sartor:1999,Brewka:Woltran:2010}) can be accommodated in \abap.

\newpage
\section*{Appendix A. Proofs}
\label{appendix:Proofs}

\begin{proof}[Theorem \ref{theorem:ABA+ properties}]
\text{ }
\begin{enumerate}[label=(\roman*)]
\item Let $\asmA_0 \subseteq \asmA_1 \subseteq \ldots$, where $\asmA_0 = \asmE$, 
be an $\subseteq$-increasing sequence of $<$-admissible supersets of $\asmE$. 
Take its upper bound $\asmA = \bigcup_{i \geqslant 0} \asmA_i$ and note that it is $<$-admissible: 
if it were either not closed, not $<$-conflict-free, or did not $<$-defend itself, 
then some finite subset (since deductions are finite) $\asmA' \subseteq \asmA$ would not be 
either closed or $<$-conflict-free, or would not $<$-defend itself. 
Now, by Zorn's Lemma, $\asmE$ has a $\subseteq$-maximally $<$-admissible superset, 
i.e.~a $<$-preferred extension containing $\asmE$. 

\item $\asmE$ is by definition closed and $<$-conflict-free. 
Given that $\asmE \pattacks \{ \asmb \}$ for every $\asmb \in \A \setminus \asmE$, 
it is clear that $\asmE \pattacks \asmB$ for every closed $\asmB \subseteq \A$ such that $\asmB \pattacks \asmE$. 
Hence, $\asmE$ is $<$-admissible. 
Moreover, $\asmE$ is $\subseteq$-maximally $<$-admissible, 
because $\asmE \cup \{ \asmb \} \pattacks \asmE \cup \{ \asmb \}$ for any $\asmb \in \A \setminus \asmE$. 
Thus, $\asmE$ is $<$-preferred, as required. 

\item By (ii) above, $\asmE$ is $<$-admissible. 
Suppose for a contradiction that $\asmE$ $<$-defends $\asmA \subseteq \A$ but $\asmA \nsubseteq \asmE$. 
Then $\asma \not\in \asmE$ for some $\asma \in \asmA$. 
Hence, $\asmE \pattacks \{ \asma \}$, and so $\asmE \pattacks \asmA$, due to stability. 
As $\asmE$ $<$-defends $\asmA$, we find $\asmE \pattacks \asmE$, which is a contradiction. 
Thus, by contradiction, $\asmE$ contains every assumption set it $<$-defends, and so is $<$-complete. 

\item By definition, $\asmE$ is contained in every $<$-complete extension. 
By (iii) above, every $<$-stable extension is $<$-complete. 
Hence, $\asmE$ is contained in every $<$-stable extension.

\item $\asmE$ is $\subseteq$-maximal set of assumptions contained in every $<$-preferred extension. 
Given that $\asmE$ is also $<$-admissible, it is by definition $<$-ideal. 

\item Suppose for a contradiction that $\asmB \subseteq \A$ is $<$-admissible and $\asmB \pattacks \asmE$. 
By (i) above, there is a $<$-preferred extension $\asmA$ such that $\asmB \subseteq \asmA$. 
Then, as $\asmE$ is $<$-ideal, we have $\asmE \subseteq \asmA$, and hence $\asmA \pattacks \asmA$, which is a contradiction. 

\item $\emptyset$, being closed, is $<$-admissible. 
Hence, by (i) above, there is a $<$-preferred extension. 
Thus, the intersection of $<$-preferred extensions exists, 
and so it has a $\subseteq$-maximally $<$-admissible subset, 
i.e.~an $<$-ideal extension. 
\end{enumerate}
\vspace{-0.6cm}
\end{proof}

\begin{proof}[Proposition \ref{proposition:maximal elements}]
Let the preference ordering $\leqslant$ of $\abafp$ be total
and suppose $M = \{ \asma \in \A~:~\nexists \asmb \in \A$ with $\asma < \asmb \}$ is closed and $<$-conflict-free. 
We first show that $M$ is not $<$-attacked. 

To begin with, observe that $M$ cannot be $<$-attacked via reverse attack, because its elements are $<$-maximal in \A. 
So fix $\asma \in M$ and suppose for a contradiction that for some $\asmB \subseteq \A$ it holds that $\asmB \vdash^R \contr{\asma}$ for some $R \subseteq \R$ 
and $\forall \asmb \in \asmB~~\asma \leqslant \asmb$ or $\asmb \nleqslant \asma$. 
Since $\leqslant$ is total, it follows that $\asma \leqslant \asmb~~\forall \asmb \in \asmB$. 
But as $\asma$ is $\leqslant$-maximal, it must also hold that $\asmb \leqslant \asma$, for any $\asmb \in \asmB$. 
From here, we show that $\asmB \subseteq M$. 
Indeed, fix $\asmb \in B$ and assume for a contradiction that $\asmb \not\in M$. 
Then $\exists \asmc \in \A$ such that $\asmb < \asmc$. 
By transitivity of $\leqslant$, we find $\asma < \asmc$, contradicting $\asma$'s $\leqslant$-maximality. 
So we must have $\asmb \in M$, and consequently, $\asmB \subseteq M$. 
But now, since $\asma \in M$, $\asmB \subseteq M$ and $\asmB \pattacks \{ \asma \}$, 
this contradicts $<$-conflict-freeness of $M$. 
Therefore, by contradiction, $M$ is $<$-unattacked. 

If $\abafp$ admits no $<$-complete extensions, then the principle is fulfilled trivially. 
Otherwise, let $\asmE$ be a $<$-complete extension of $\abafp$ and suppose for a contradiction that $M \nsubseteq \asmE$. 
Then $\asmE$ does not $<$-defend $M$. 
This means that $\asmS \pattacks M$ for some $\asmS \subseteq \A$, which is a contradiction. 
Hence, by contradiction, $M \subseteq \asmE$. 
Thus, \abafp\ fulfils \maximal\ for $<$-complete semantics. 

Since by Theorem \ref{theorem:ABA+ properties}(iii) $<$-stable extensions are $<$-complete, 
$\abafp$ fulfils \maximal\ for $<$-stable semantics too. 

Finally, for the $<$-well-founded semantics, 
recall that, by definition, the $<$-well-founded extension is the intersection of all the $<$-complete extensions. 
It follows that $\abafp$ fulfils \maximal\ for $<$-well-founded semantics too. 
\end{proof}

\begin{proof}[Lemma \ref{lemma:Fundamental} (Fundamental Lemma)]
We first prove that $\asmS \cup \{ \asma \}$ is $<$-admissible. 
If $\asma \in \asmS$, then $\asmS \cup \{ \asma \}$ is trivially $<$-admissible. 
If $\asma \not\in \asmS$, we first show by contradiction that $\asmS \cup \{ \asma \}$ is $<$-conflict-free, 
and then that $\asmS \cup \{ \asma \}$ $<$-defends itself. 

Suppose first that $\asmS \cup \{ \asma \}$ is not $<$-conflict-free. 
Then $\asmS \cup \{ \asma \} \pattacks \asmS \cup \{ \asma \}$ via either (1) normal or (2) reverse attack. 
We show that either leads to $\asmS \pattacks \asmS' \cup \{ \asma \}$ for some $\asmS' \subseteq \asmS$, and then that this leads to a contradiction. 

1.~Suppose $\asmS \cup \{ \asma \} \pattacks \asmS \cup \{ \asma \}$ via normal attack. 
Note that as $\asmS$ is $<$-conflict-free and $<$-defends $\{ \asma \}$, 
the $<$-attack $\asmS \cup \{ \asma \} \pattacks \asmS \cup \{ \asma \}$ must involve $\asma$. 
That is, $\asmS' \cup \{ \asma \} \vdash^{R} \contr{\asmb}$ 
for some $\asmS' \subseteq \asmS$ and $\asmb \in \asmS \cup \{ \asma \}$, 
and $\forall \asms' \in \asmS' \cup \{ \asma \}$ we find $\asms' \not< \asmb$. 
If $\asmb = \asma$, then $\asmS' \cup \{ \asma \} \pattacks \{ \asma \}$, 
and so $\asmS \pattacks \asmS' \cup \{ \asma \}$. 
Else, if $\asmb \in \asmS'$, then $\asmS' \cup \{ \alpha \} \pattacks \asmS$, 
and so $\asmS \pattacks \asmS' \cup \{ \asma \}$ too. 

2.~Suppose $\asmS \cup \{ \asma \} \pattacks \asmS \cup \{ \asma \}$ via reverse attack. 
As in 1., this $<$-attack must involve $\asma$, 
i.e.~$\asmS' \cup \{ \asma \} \vdash^R \contr{\asmb}$ 
for some $\asmS' \subseteq \asmS$ and $\asmb \in \asmS \cup \{ \asma \}$, 
and $\exists \asms' \in \asmS' \cup \{ \asma \}$ such that $\asms' < \asmb$. 
Take $\asms'$ to be $\leqslant$-minimal such. 
If $\asmb \in \asmS$, then $\asmS \pattacks \asmS' \cup \{ \asma \}$. 
Else, if $\asmb = \asma$, then $\asms' \neq \asma$ (by asymmetry of $<$), 
and using \wcp~(WCP henceforth) we find 
$\asmA \vdash^{R'} \contr{\asms'}$ for some $\asmA \subseteq (\asmS' \cup \{ \asma \}) \setminus \{ \asms' \}$, 
so that $\asmS' \cup \{ \asma \} \attacks \asmS$. 
Then, by Lemma \ref{lemma:attacks}, 
either $\asmS' \cup \{ \asma \} \pattacks \asmS$ or $\asmS \pattacks \asmS' \cup \{ \asma \}$, 
yielding $\asmS \pattacks \asmS' \cup \{ \asma \}$ in any event. 

In either case (1) or (2), we obtained 
$\asmS \pattacks \asmS' \cup \{ \asma \}$, 
and as $\asmS$ is $<$-conflict-free and $<$-defends $\{ \asma \}$, 
this $<$-attack must be reverse and involve $\asma$: namely, there is 
$\asmA_1 \cup \{ \asma \} \vdash^{R_1} \contr{\asms_1}$ with 
$\asms_1 \in \asmS$, $\asmA_1 \subseteq \asmS'$, 
and $\exists \asms'_1 \in \asmA_1 \cup \{ \asma \}$ with $\asms'_1 < \asms_1$. 
%
Take $\asms'_1$ to be $\leqslant$-minimal such. 
By WCP, there is $\asmS_1 \vdash^{R'_1} \contr{\asms'_1}$ with 
$\asmS_1 \subseteq ((\asmA_1 \cup \{ \asma \}) \setminus \{ \asms'_1 \}) \cup \{ \asms_1 \}$ 
and $\forall \asmx \in \asmS_1~~\asmx \not< \asms'_1$ 
(by $\leqslant$-minimality of $\asms'_1$). 
Note that if $\asms'_1 = \asma$, 
then $\asmS_1 \subseteq \asmA_1 \subseteq \asmS$, 
and so $\asmS \pattacks \{ \asma \}$ via normal attack, 
which cannot happen, because $\asmS$ is $<$-conflict-free and $<$-defends $\{ \asma \}$. 
Thus, $\asms'_1 \neq \asma$ and 
there is $\asmS_1 \cup \{ \asma \} \vdash^{R'_1} \contr{\asms'_1}$ with 
$\asmS_1 \subseteq (\asmA_1 \setminus \{ \asms'_1 \}) \cup \{ \asms_1 \}$ 
and $\forall \asmx \in \asmS_1~~\asmx \not< \asms'_1$ 
(by $\leqslant$-minimality of $\asms'_1$). 
That is, $\asmS_1 \cup \{ \asma \} \pattacks \asmA_1$, 
so we find $\asmS \pattacks \asmS_1 \cup \{ \asma \}$, 
again via reverse attack involving $\asma$: 
$\asmA_2 \cup \{ \asma \} \vdash^{R_2} \contr{\asms_2}$, 
$\asms_2 \in \asmS$, $\asmA_2 \subseteq \asmS_1$, 
and $\exists \asms'_2 \in \asmA_2 \cup \{ \asma \}$ with $\asms'_2 < \asms_2$. 
%
For $\leqslant$-minimal such $\asms'_2$, 
(likewise assuming $\asms'_2 \neq \asma$) by WCP we get
$\asmS_2 \cup \{ \asma \} \vdash^{R'_2} \contr{\asms'_2}$ with 
$\asmS_2 \subseteq (\asmA_2 \setminus \{ \asms'_2 \}) \cup \{ \asms_2 \}$ 
and $\forall \asmx \in \asmS_2~~\asmx \not< \asms'_2$.

As deductions are finite and $<$ is asymmetric, 
the procedure described above will eventually exhaust pairs of 
$\asms'_k \in \asmA_k$ and $\asms_k \in \asmS_k$ such that $\asms'_k < \asms_k$, 
so that $\asmS \pattacks \asmS_k \cup \{ \asma \}$ will have to be a normal attack, for some $\asmS_k$. 
This leads to a contradiction to $\asmS$ being $<$-admissible and $<$-defending $\{ \asma \}$.

Hence, by contradiction, $\asmS \cup \{ \asma \}$ is $<$-conflict-free. 

We now want to show that $\asmS \cup \{ \asma \}$ $<$-defends itself. 
So let $\asmB \pattacks \asmS \cup \{ \asma \}$. 
As $\asmS$ is $<$-admissible and $<$-defends $\{ \asma \}$, 
we consider this $<$-attack to be reverse and involving $\asma$: 
$\asmS' \cup \{ \asma \} \vdash^R \contr{\asmb_1}$, 
$\asmS' \subseteq \asmS$, $\asmb_1 \in \asmB$, 
and there is 
$\leqslant$-minimal 
$\asms' \in \asmS' \cup \{ \asma \}$ with $\asms' < \asmb_1$. 
Then, by WCP, there is $\asmS_1 \vdash^{R'_1} \contr{\asms'}$ with 
$\asmS_1 \subseteq ((\asmS' \cup \{ \asma \}) \setminus \{ \asms' \}) \cup \{ \asmb_1 \}$. 
Due to $<$-conflict-freeness of $\asmS \cup \{ \asma \}$, 
it holds that $\asmb_1 \in \asmS_1$. 
Also, due to $\leqslant$-minimality of $\asms'$ and because $\asms' < \asmb_1$,  
we find that $\nexists \asmx \in \asmS_1$ with $\asmx < \asms'$. 
Thus, $\asmS_1 \pattacks \{ \asms' \}$ via normal attack. 
Since $\asmS$ is $<$-admissible and $<$-defends $\{ \asma \}$, 
we must have $\asmS \pattacks \asmS_1$, 
and hence $\asmS \cup \{ \asma \} \pattacks \asmS_1$. 
This $<$-attack cannot be normal on $(\asmS' \cup \{ \asma \}) \setminus \{ \asms' \}$, 
due to $<$-conflict-freeness of $\asmS \cup \{ \asma \}$; 
while, if it is normal on $\asmb_1$, then $\asmS \cup \{ \asma \} \pattacks \asmB$, as required. 
Else, $\asmS \cup \{ \asma \} \pattacks \asmS_1$ via reverse attack: 
there is $\asmB_1 \vdash^{R_1} \contr{\asms_1}$ with 
$\asms_1 \in \asmS \cup \{ \asma \}$, $\asmB_1 \subseteq \asmS_1$, 
and $\exists \asms'_1 \in \asmB_1$ with $\asms'_1 < \asms_1$
%
($\leqslant$-minimal). 
Due to $<$-conflict-freeness of $\asmS \cup \{ \asma \}$, we find $\asmb_1 \in \asmB_1$. 
Then again, by WCP, we find 
$\asmS_2 \vdash^{R'_2} \contr{\asms'_1}$, 
$\asmS_2 \subseteq (\asmB_1 \setminus \{ \asms'_1 \}) \cup \{ \asms_1 \}$, and $\asmb_1 \in \asmS_2$. 
Like with the proof of $<$-conflict-freeness, 
this process must terminate with a normal attack $\asmS \cup \{ \asma \} \pattacks \asmB$, 
so that $\asmS \cup \{ \asma \}$ eventually $<$-defends itself. 

Finally, we need to show that $\asmS \cup \{ \asma \}$ $<$-defends $\{ \asma' \}$. 
Given that $\asmS$ $<$-defends $\{ \asma' \}$ to begin with, 
and that $\pattacks$ is monotonic (Lemma \ref{lemma:attacks on supersets}), 
we conclude that $\asmS \cup \{ \asma \}$ $<$-defends $\{ \asma' \}$ too.
\end{proof}

\begin{proof}[Theorem \ref{theorem:flat ABA+ properties}]
Proof of each claim follows the pattern of the corresponding proofs in e.g.~\cite{Dung:1995,Bondarenko:Dung:Kowalski:Toni:1997,Dung:Mancarella:Toni:2007,Baroni:Cerutti:Giacomin:Guida:2011}.

\begin{enumerate}[label=(\roman*)]
\item In flat \abap\ frameworks all sets of assumptions are closed, 
and, in particular, $\emptyset$ is closed. 
Hence, $\emptyset$ is $<$-admissible, 
and so according to Theorem~\ref{theorem:ABA+ properties}(vii), 
$\F$ has a $<$-preferred extension.\footnote{Note that Weak Contraposition is not needed here, only flatness is required.}

\item Let $\asmE$ be a $<$-preferred extension of $\F$ and suppose for a contradiction that it is not $<$-complete. 
Let $\asmE$ $<$-defend some $\{ \asma \} \subseteq \A \setminus \asmE$. 
As $\asmE$ is $<$-admissible, $\asmE \cup \{ \asma \}$ is $<$-admissible, by Lemma~\ref{lemma:Fundamental}. 
But then $\asmE$ is not $\subseteq$-maximally $<$-admissible, 
contrary to $\asmE$ being $<$-preferred. 
Hence, by contradiction, $\asmE$ must be $<$-complete.

\item Follows from (i) and (ii) above. 

\item Define the $<$-defence operator $\Def: \wp(\A) \to \wp(\A)$ as follows: 
for $\asmA \subseteq \A$, 
$\Def(\asmA) = \{ \asma \in \A~:~\asmA \text{ $<$-defends } \{ \asma \} \}$. 
By Lemma \ref{lemma:attacks on supersets}, 
$\Def$ is monotonic: if $\asmA \subseteq \asmB \subseteq \A$, 
then $\Def(\asmA) \subseteq \Def(\asmB)$. 
As $(\wp(\A), \subseteq)$ is a complete lattice, 
fixed points of $\Def$ also form a complete lattice, 
according to Knaster-Tarski Theorem \cite{Tarski:1955}. 
As $\Def$ is compact (as deductions are finite), 
it has a unique least fixed point $G$, 
given by $G = \bigcup_{i \in \mathbb{N}} \Def^{~i}(\emptyset)$. 
As $\emptyset$ is $<$-admissible, $G$ is also $<$-admissible, 
by Lemma \ref{lemma:Fundamental}. 
Hence, $G$ is $<$-complete (as $G = \Def(G)$), 
and unique $\subseteq$-minimal such (as the least fixed point). 
Therefore, $G$ is a unique $<$-grounded extension of $\F$, 
and is $<$-complete, as required. 

\item From (i) above, we know that $\F$ admits $<$-preferred extensions, 
so let $\asmS$ be their intersection. 
If $\asmS = \emptyset$, then it is $<$-admissible, and so an $<$-ideal extension (unique). 
If $\asmS \neq \emptyset$ is $<$-admissible, then it is an $<$-ideal extension (unique as well). 
Else, assume $\asmS \neq \emptyset$ is not $<$-admissible. 
Then its $\subseteq$-maximally $<$-admissible subsets $\asmI \subsetneq \asmS$ are $<$-ideal extensions of $\F$. 
Suppose $\asmI$ and $\asmI'$ are two distinct $<$-admissible subsets of $\asmS$. 
Then their union $\asmI \cup I'$ is a subset of $\asmS$ too, and so $<$-conflict-free. 
By Lemma~\ref{lemma:Fundamental}, 
$\asmI \cup \asmI'$ $<$-defends itself, so must be $<$-admissible. 
Consequently, there can be only one $\subseteq$-maximally $<$-admissible subset of $\asmS$, 
i.e.~a unique $<$-ideal extension $\asmI$ of $\F$. 

Now, suppose for a contradiction that $\asmI$ is not $<$-complete. 
Then some $\{ \asma \} \subseteq \A \setminus \asmI$ is $<$-defended by $\asmI$. 
Such $\asma$ must be contained in the intersection $\asmS$ of $<$-preferred extensions of $\F$, 
because $\asmI \subseteq \asmS$ $<$-defends $\{ \asma \}$ 
and every $<$-preferred extension $\F$ is $<$-complete, by (ii) above. 
But then, $\asmI \cup \{ \asma \}$ is $<$-admissible, 
according to Lemma~\ref{lemma:Fundamental}, 
so that $\asmI$ is not $<$-ideal---a contradiction. 
Therefore, $\asmI$ must be $<$-complete.
\end{enumerate}
\vspace{-0.6cm}
\end{proof}

\begin{proof}[Theorem \ref{theorem:PAF to ABA+}]
Suppose first that $\asmE \subseteq \Args$ is a $\sigma$ extension of $\PAF$. 
As $\asmE$ is conflict-free in \DAF, it is $<$-conflict-free in \abafp, by Lemma \ref{lemma:PAF defeat iff <-attack}. 
Similarly, using Lemma \ref{lemma:PAF defeat iff <-attack} and construction of \abafp, it is plain to see that $\asmE$ $<$-defends itself. 
So $\asmE$ is $<$-admissible. 
It now suffices to prove additional properties as required for each semantics $\sigma$. 
We do this case by case. 

\begin{description}
\item[$\sigma$ = complete.]
If $\asmE$ $<$-defends $\asmA \subseteq \A$, then, by construction of \abafp, $\asmE$ $<$-defends every $\argA \in \asmA$, whence $\asmE$ defends every $\argA \in \asmA$ in \DAF. 
As $\asmE$ is complete, we find $\asmA \subseteq \asmE$, whence $\asmE$ is $<$-complete. 

\item[$\sigma$ = preferred.]
If $\asmE$ were not $\subseteq$-maximally $<$-admissible, 
it would $<$-defend some $\argA \in \A \setminus \asmE$ (as for $\sigma = $ complete above), 
whence $\asmE$ would defend $\argA$ in \DAF, 
and would not be preferred. 
Hence, $\asmE$ must be $<$-preferred. 

\item[$\sigma$ = stable.]
If $\asmE \npattacks \{ \argB \}$ for some $\argB \in \A \setminus \asmE$, 
then $\asmE \ndefeats \argB$, by Lemma \ref{lemma:PAF defeat iff <-attack}, 
so that $\asmE$ would not be stable, if $\asmE$ were not $<$-stable. 

\item[$\sigma$ = ideal.]
As $\asmE$ is contained in every preferred extension $P$ of $\PAF$, 
and since preferred extensions of $\PAF$ are in one-to-one correspondence with the $<$-preferred extensions of $\abafp$, 
as per proofs for $\sigma$ = preferred above and below, 
we conclude that $\asmE$ is contained in every $<$-preferred extension $P$ of $\abafp$. 
Clearly, $\asmE$ must be $\subseteq$-maximal such, 
as otherwise $\asmE$ would not be ideal in $\PAF$. 

\item[$\sigma$ = grounded.] 
$\asmE$ is $\subseteq$-minimally complete in \DAF\ \cite{Dung:1995}, and so in \PAF. 
Thus, $\asmE$ is $<$-complete as above, 
and due to $\subseteq$-minimality, it is the intersection of all $<$-complete extensions of \abafp, so $<$-grounded. 
\end{description}

Suppose now that $\asmE \subseteq \A$ is a $<$-$\sigma$ extension of $\abafp$. 
That $\asmE$ is admissible in \PAF\ follows from the construction of \abafp\ and Lemma \ref{lemma:PAF defeat iff <-attack}. 
For any $\sigma \in \{$\flatsemantics$\}$, a mirror argument as for $\sigma$ above applies here, 
in the case of $\sigma =$ grounded noting that, by construction of \abafp, the $<$-grounded extension of \abafp\ exists, is unique and $<$-complete. 
\end{proof}

\section*{Appendix B. \aspicp}
\label{appendix:ASPIC+}

For our purposes in this paper, a simplified exposition of \aspicp, as follows, will suffice. 
(The reader is referred to \cite{Modgil:Prakken:2014,Prakken:2010,Modgil:Prakken:2013} for details.)

\label{definition:aspicf}
An \emph{\aspicp\ framework} is a tuple \aspicf, where:
\begin{itemize}
\item $\LL$ is a language;
\item $\ccontrary: \LL \to \wp(\LL)$ is a \emph{contrariness} function such that: 
\begin{itemize}
\item $\varphi$ is a \emph{contrary} of $\psi$ just in case 
$\varphi \in \ccontr{\psi}$ and $\psi \not\in \ccontr{\varphi}$; 
\item $\varphi$ is a \emph{contradictory} of $\psi$, denoted $\varphi = -\psi$, just in case 
$\varphi \in \ccontr{\psi}$ and $\psi \in \ccontr{\varphi}$;  
\item every $\varphi \in \LL$ has at least one contradictory;
\end{itemize}
\item $\R_s$ is a set of \emph{strict} rules of the form $\varphi_1, \ldots, \varphi_n \to \varphi$, 
where $\varphi_1, \ldots, \varphi_n, \varphi \in \LL$; 
\item $\R_d$ is a set of \emph{defeasible} rules of the form $\varphi_1, \ldots, \varphi_n \To \varphi$, 
where $\varphi_1, \ldots, \varphi_n, \varphi \in \LL$; 
\item $\R_s \cap \R_d = \emptyset$; 
\item $\leqslant_d$ is a transitive binary relation on $\R_d$; 
\item $n: \R_d \to \LL$ is a \emph{naming} function for defeasible rules; 
\item $\K_n \subseteq \LL$ is a set of \emph{axioms};
\item $\K_p \subseteq \LL$ is a set of \emph{premises}; 
\item $\K_n \cap \K_p = \emptyset$; 
\item $\leqslant_p$ is a transitive binary relation on $\K_p$.
\end{itemize}

Whenever a component of an \aspicp\ framework is empty, we may omit it. 
In particular, as we will use neither the naming function $n$ for defeasible rules nor the axioms $\K_n$, these components are henceforth omitted. 
In the remainder of this section, we assume as given a fixed but otherwise arbitrary \aspicp\ framework \aspicfo. 

Arguments in \aspicp\ are defined as follows. 
\label{definition:ASPIC+ argument}
An argument $\argA$ is any of the following: 
\begin{itemize}
\item $[\varphi]$ iff $\varphi \in \K_p$. 
It has:
\begin{itemize}
\item \emph{premises} $\prem(\argA) = \{ \varphi \}$; 
\item \emph{conclusion} $\conc(\argA) = \varphi$; 
\item \emph{sub-arguments} $\sub(\argA) = \{ \argA \}$;
\item \emph{defeasible rules} $\defrules(\argA) = \emptyset$; 
\end{itemize}
\item $[\argA_1, \ldots, \argA_n \to \psi]$ iff $\argA_1, \ldots, \argA_n$ are arguments such that there is a strict rule 
$\conc(\argA_1), \ldots, \conc(\argA_n) \to \psi$ in $\R_s$. 
It has: 
\begin{itemize}
\item premises $\prem(\argA) = \prem(\argA_1) \cup \ldots \cup \prem(\argA_n)$; 
\item conclusion $\conc(\argA) = \psi$; 
\item sub-arguments $\sub(\argA) = \sub(\argA_1) \cup \ldots \cup \sub(\argA_n) \cup \{ \argA \}$;
\item defeasible rules $\defrules(\argA) = \defrules(\argA_1) \cup \ldots \cup \defrules(\argA_n)$; 
\end{itemize}
\item $[\argA_1, \ldots, \argA_n \To \psi]$ iff $\argA_1, \ldots, \argA_n$ are arguments such that there is a defeasible rule 
$\conc(\argA_1), \ldots, \conc(\argA_n) \To \psi$ in $\R_d$. 
It has: 
\begin{itemize}
\item premises $\prem(\argA) = \prem(\argA_1) \cup \ldots \cup \prem(\argA_n)$; 
\item conclusion $\conc(\argA) = \psi$; 
\item sub-arguments $\sub(\argA) = \sub(\argA_1) \cup \ldots \cup \sub(\argA_n) \cup \{ \argA \}$;
\item defeasible rules $\defrules(\argA) = \defrules(\argA_1) \cup \ldots \cup \defrules(\argA_n) \cup \{ \conc(\argA_1), \ldots, \conc(\argA_n) \To \psi \}$. 
\end{itemize}
\end{itemize}

The set of all arguments in \aspicfo\ is henceforth denoted by $\Args$. 

\aspicp\ frameworks are usually assumed to be \emph{closed under contraposition}, 
meaning that for all $\asmS \subseteq \LL$, $s \in \asmS$ and $\varphi \in \LL$ it holds that 
if there is an argument $\argA$ with $\conc(\argA) = \varphi$, $\defrules(\argA) = \emptyset$ and $\prem(\argA) \subseteq \asmS$, 
then there must be an argument $\argB$ with $\conc(\argB) = -s$, $\defrules(\argB) = \emptyset$ and $\prem(\argB) \subseteq (\asmS \setminus \{ s \}) \cup \{ -\varphi \}$. 
For our purposes, when mapping \abap~frameworks into \aspicp\ (see section \ref{subsec:ASPIC+})  
this is equivalent to assuming that \abafp\ satisfies \contraposition\ (Axiom \ref{axiom:WCP} in section \ref{sec:WCP}). 
Other requirements on \aspicp\ frameworks will be met automatically in our setting, 
so we omit to specify them here. 

Attacks in \aspicp\ are defined as follows.\footnote{When mapping \abap\ frameworks into \aspicp, due to absence of defeasible rules, only \emph{undermining} attacks will result, so we omit to specify the other types of attacks.}
\label{definition:ASPIC+ attack}
Let $\argA, \argB \in \Args$.
We say that $\argA$ \emph{attacks} $\argB$ (on $\argB'$), written $\argA \attacks \argB$, 
iff  
for some $\argB' \in \sub(\argB)$ such that $\argB' = [ \varphi ]$ and $\varphi \in \K_p$, 
$\conc(\argA) = -\varphi$ 
(i.e.~$\conc(\argA) \in \ccontr{\varphi}$ and $\varphi \in \ccontr{\conc(\argA)}$). 

Arguments can be compared using the given preference ordering $\leqslant_p$ over premises $\K_p$, as follows. 
\label{definition:ASPIC+ comparison}
Consider $\argA, \argB \in \Args$ with  
$\asmA = \prem(\argA) \cap \K_p$ and $\asmB = \prem(\argB) \cap \K_p$. 
\begin{itemize}
\item $\argA \argeli \argB$ if $\exists \asma \in \asmA$ such that $\forall \asmb \in \asmB$ we find $\asma <_p \asmb$; 
\item $\argA \argdeli \argB$ if $\exists \asma \in \asmA \setminus \asmB$ such that $\forall \asmb \in \asmB \setminus \asmA$ we find $\asma <_p \asmb$; 
\item $\argA \argdemq \argB$ if $\forall \asma \in \asmA$ we find $\asmb \in \asmB$ with $\asma \leqslant_p \asmb$; 
\item $\argA \argdem \argB$ if $\argA \argdemq \argB$ and $\argB \argndemq \argA$. 
\end{itemize}

The three comparison principles \elitist, \democratic\ and \delitist\ are referred to as \emph{Elitist}, \emph{Democratic} and \emph{Disjoint Elitist} \cite{Young:Modgil:Rodrigues:2016}, respectively. 

\aspicp\ attacks succeed as defeats whenever the attacking argument is not less preferred than the (sub-)argument attacked, as follows. 
\label{definition:ASPIC+ defeat}
Let $\argA, \argB \in \Args$ and fix $\prec \,\in \{ \argeli, \argdeli, \argdem \}$.  
We say that $\argA$ \emph{defeats} $\argB$, written $\argA \defeats \argB$, iff  
$\argA \attacks \argB$ on $\argB'$ and $\argA \nprec \argB'$. 
We may write e.g.~$\defeats_{\elitist}$ to denote the defeat with respect to (in this case) the Elitist comparison $\argeli$ of arguments. 

Finally, \aspicp~semantics are defined via the AA frameworks of arguments and defeats among them. 
More precisely, given $\F = \aspicfo$, 
the \emph{corresponding AA framework} is $\DAF$, 
where $\defeats$ is the defeat relation generated by $\attacks$ and $\prec$. 
Extensions of $\F$ (under various semantics $\sigma$) are defined as $\sigma$ extensions of $\DAF$ (see section \ref{sec:Background}).

\newpage
\singlespacing
\small{
\bibliographystyle{plain}
\bibliography{../../../Readings/library}
}

\end{document}